\title{Benign Underfitting of Stochastic Gradient Descent} 
\author{
Tomer Koren%
\thanks{Blavatnik School of Computer Science, Tel Aviv University and Google Research; \texttt{tkoren@tauex.tau.ac.il}.}
\and
Roi Livni%
\thanks{Department of Electrical Engineering, Tel Aviv University; \texttt{rlivni@tauex.tau.ac.il}.}
\and
Yishay Mansour%
\thanks{Blavatnik School of Computer Science, Tel Aviv University and Google Research; \texttt{mansour.yishay@gmail.com}.}
\and
Uri Sherman%
\thanks{Blavatnik School of Computer Science, Tel Aviv University; \texttt{urisherman@mail.tau.ac.il}.}
}
\begin{document}

\maketitle

\begin{abstract}
    We study to what extent may stochastic gradient descent (SGD) be understood as a ``conventional'' learning rule that achieves generalization performance by obtaining a good fit to training data.
    We consider the fundamental stochastic convex optimization framework, where (one pass, \emph{without}-replacement) SGD is classically known to minimize the population risk at rate $O(1/\sqrt n)$, and prove that, surprisingly, there exist problem instances where the SGD solution exhibits both empirical risk and generalization gap of $\Omega(1)$.
    Consequently, it turns out that SGD is not algorithmically stable in \emph{any} sense, and its generalization ability cannot be explained by uniform convergence or any other currently known generalization bound technique for that matter (other than that of its classical analysis).
    We then continue to analyze the closely related \emph{with}-replacement SGD, for which we show that an analogous phenomenon does not occur and prove that its population risk does in fact converge at the optimal rate.
    Finally, we interpret our main results in the context of without-replacement SGD for finite-sum convex optimization problems, and derive upper and lower bounds for the multi-epoch regime that significantly improve upon previously known results.
\end{abstract}

\section{Introduction}

Conventional wisdom in statistical learning revolves around what is traditionally known as the bias-variance dilemma; the classical theory stipulates the quality of fit to the training data be in a trade-off with model complexity, aiming for a sweet spot where training error is small but yet representative of performance on independent test data.

This perspective is reflected in the vast majority of generalization bound techniques offered by contemporary learning theory.
Uniform convergence approaches \citep{vapnik1971uniform, bartlett2002rademacher} seek capacity control over the model function class, and employ uniform laws of large numbers to argue convergence of sample averages to their respective expectations.
Algorithmic stability \citep{bousquet2002stability, shalev2010learnability} on the other hand, builds on controlling sensitivity of the learning algorithm to small changes in its input, and provides algorithm dependent bounds. 
Nevertheless, despite the conceptual and technical differences between these two methods, both ultimately produce risk bounds by controlling the training error, and the \emph{generalization gap}. The same is true for many other techniques, including sample compression~\citep{littlestone1986relating, arora2018stronger}, PAC-Bayes~\citep{mcallester1999pac,gintare2017nonvacuous}, and information theoretic generalization bounds~\citep{russo2016controlling, xu2017information, neu2021information}, to name a few.

In recent years it has become clear there are other, substantially different, ways to manage the fit vs.~complexity trade-off, that are in a sense incompatible with traditional generalization bound techniques. 
Evidently, heavily over-parameterized deep neural networks may be trained to perfectly fit training data and generalize well nonetheless \citep{zhang2017understanding, neyshabur2014search, neyshabur2019role}, thus seemingly disobeying conventional statistical wisdom.
This phenomenon has garnered significant attention, with a flurry of research works dedicated to developing new techniques that would be able to explain strong generalization performance of algorithms in this so called interpolation regime 
(see \citealp{bartlett2021deep, belkin2021fit} and references therein).
Notably, while these algorithms do not strike a balance between model complexity and fit to the data in the traditional sense, fundamentally, they still minimize the empirical risk as a proxy to test performance.

To summarize, in the classical and modern regimes alike, learning methods are thought of as minimizing some combination of the training error and generalization gap, with reasoning that relies in one way or another on the following trivial, yet arguably most profound, bound:
\begin{equation}\label{eq:intro} 
    \textsf{test-error} 
    ~\leq~ 
    \textsf{train-error} ~+~ |\textsf{generalization gap}|
    ~.
\end{equation}

In this work, we focus on stochastic gradient descent (SGD)---the canonical algorithm for training machine learning models nowadays---and ask whether its generalization performance can be understood through a similar lens.
We consider the fundamental stochastic convex optimization (SCO) framework, in which it is well known that SGD minimizes the population risk at a rate of $O(1/\sqrt n)~$\citep{nemirovskij1983problem}. Remarkably, the classical analysis targets the population risk directly, and in contrast with other generalization arguments, at least seemingly \emph{does not} rely on the above bound.
This highlights an intriguing question: Are these quantities, so fundamental to learning theory, relevant to the way that SGD ``works''? Put differently, is it possible to provide a more ``conventional" analysis of SGD that conforms with \eqref{eq:intro}?

Our main result shows that, perhaps surprisingly, there exist convex learning problems where the above bound becomes vacuous for SGD: namely, SGD minimizes the population risk, but at the same time, it \emph{does not} minimize the empirical risk and thus exhibits constant generalization gap. 
This accords neither with the traditional viewpoint nor with that of interpolation, as both recognize the empirical risk as the principal minimization objective.
We refer to this phenomenon as \emph{benign underfitting}: evidently, SGD underfits the training data,
but its classical analysis affirms this underfitting to be \emph{benign}, in the sense that test performance is never compromised as a result. 
Our construction presents a learning problem where the output of SGD with step size $\eta$ over $n$ i.i.d.~training examples is $\Omega(\eta \sqrt n)$ sub-optimal w.r.t.~the best fit possible, and consequently has a generalization gap of the same order. Notably, with the standard step size choice of $1/\sqrt n$ necessary to ensure the population risk converges at the optimal rate this lower bound amounts to a constant.

Many previously plausible explanations for generalization properties of this algorithm are thereby rendered inadequate, at least in the elementary convex setup we consider here.
First, it is clear that SGD cannot be framed as any reasonable regularized empirical risk minimization procedure for the simple reason that it does not minimize the empirical risk, which challenges the implicit regularization viewpoint to the generalization of SGD. Second, any attempt to explain generalization of SGD by uniform convergence over any (possibly data-dependent) hypotheses set cannot hold, simply because the sample average associated with the very same training set SGD was trained on is not necessarily close to its respective expectation. Finally, as it turns out, SGD provides for a strikingly natural example of an algorithm that generalizes well but is not stable in \emph{any} sense, as the most general notion of algorithmic stability is entirely equivalent to the generalization gap~\citep{shalev2010learnability}.

We then move on to study the generalization gap and empirical risk guarantees of SGD in a broader context. 
We study the case of non-convex and strongly convex component functions, and present natural extensions of our basic result.
In addition, we analyse the variant of SGD where datapoints are sampled with-replacement from the training set, in which case the train error is of course low but perhaps surprisingly the population risk is well behaved.
Finally, we make the natural connection to the study of without-replacement SGD for empirical risk minimization, and derive upper and lower bounds for the multi-epoch regime. These last two points are discussed in further detail in the following.

\paragraph{With vs without-replacement SGD.}
We may view one-pass SGD as processing the data via \emph{without}-replacement sampling from the training set, as randomly reshuffling the examples does not change their unconditional distribution. Thus, it is interesting to consider the generalization gap of the closely related algorithm given by 
running SGD over examples sampled \emph{with}-replacement from the training set. 
Considering instability (see \cref{sec:stability} for a detailed discussion) of SGD for non-smooth losses and the fact that this variant targets the empirical objective,
a priori it would seem this algorithm would overfit the training set and not provide strong population risk guarantees.
Surprisingly, our analysis presented in \cref{sec:wrsgd} reveals this is not the case, and that with a certain iterate averaging scheme the population risk converges at the optimal rate.
Consequently, it turns out the generalization gap is well bounded, and therefore that this variant constitutes a natural learning rule that is not stable in any sense but the most general one.

\paragraph{Without-replacement SGD for empirical risk minimization.}
The example featured in our main construction implies a lower bound of $\Omega(n^{-1/4})$ on the convergence rate of a single epoch of without-replacement SGD for finite sum optimization problems. In this setting, we have a set of $n$ convex losses and we wish to minimize their sum by running SGD over random shufflings of the losses.
While the smooth case has been studied extensively (e.g., \cite{recht2012toward, rajput2020closing, nagaraj2019sgd, safran2021random}), the non-smooth case has hardly received much attention. In \cref{sec:multiepoch} we extend our basic construction to a lower bound for the multi-epoch regime, and complement it with nearly matching upper bounds.

\paragraph{Our techniques.}
Fundamentally, we exploit the fact that dimension independent uniform convergence does not hold in SCO \citep{shalev2010learnability}. This is a prerequisite to any attempt at separating train and test losses of any hypothesis vector, let alone that produced by SGD.
Another essential condition is the instability of SGD for non-smooth losses, as any form of stability would immediately imply a generalization gap upper bound regardless of uniform convergence. 
Our main lower bound draws inspiration from constructions presented in the works of \cite{bassily2020stability} and \cite{amir21sgd}, both of which rely on instability, the latter also exploiting failure of uniform convergence.
However, neither of these contains the main ideas necessary to provoke the optimization dynamics required in our example.
A crucial ingredient in our construction consists of encoding into the SGD iterate information about previous training examples. This, combined with careful design of the loss function, gradient oracle and population distribution, allows correlating sub-gradients of independent training examples, and in turn guiding the SGD iterates to \emph{ascend} the empirical risk.

\subsection{Summary of main contributions}

To summarize, the main contributions of the paper are as follows:

\begin{itemize}[leftmargin=*,topsep=0pt,parsep=0pt]
    \item \textbf{One-pass SGD in SCO.} 
    In \cref{sec:main}, we study the basic SCO setup where the component losses are assumed to be individually convex, and present a construction where the expected empirical risk and therefore the generalization gap are both $\Omega(\eta \sqrt n)$. 
    We also provide extensions of our main construction demonstrating;
    \begin{itemize}[parsep=0pt]
        \item SCO with non-convex component functions may exhibit cases of benign \emph{overfitting}, where $\E\sb[b]{F(\hat w) - \hat F(\hat w)} = \Omega(\eta^2 n)$.
        \item In SCO with $\lambda$-strongly convex losses the worst case generalization gap is $\Omega(1/\lambda \sqrt n)$ for the standard step size choice. 
    \end{itemize}
    
    \item \textbf{With vs without replacement SGD in SCO.} 
    In \cref{sec:wrsgd}, we prove the variant of SGD where the training examples are processed via sampling \emph{with}-replacement from the \emph{training set} minimizes the population risk at the optimal rate, and thus enjoys a generalization gap upper bound bound of $O(1 / \sqrt n)$. 
        
    \item \textbf{Multi-epoch without-replacement SGD.}
    In \cref{sec:multiepoch}, we study convergence rates of without-replacement SGD for finite sum convex optimization problems.
    We prove a lower bound of $\Omega(n^{-1/4} K^{-3/4})$ on the optimization error after $K$ epochs over $n$ convex losses, and complement with upper bounds of $O(n^{-1/4} K^{-1/2})$ and $O(n^{-1/4} K^{-1/4})$ for respectively the multi-shuffle and single-shuffle SGD variants.
\end{itemize}

\subsection{Additional related work}

\paragraph{Gradient descent, algorithmic stability and generalization.}
Closely related to our work is the study of stability properties of SGD.
For smooth losses, \cite{hardt2016train} provide upper bounds on the generalization gap by appealing to uniform stability, yielding an $O(1/\sqrt n)$ rate for a single epoch of $n$ convex losses and the standard step size choice. 
In a later work, \cite{bassily2020stability} prove tight rates for uniform stability of SGD in the setting of \emph{non}-smooth losses, establishing these scale substantially worse; $\Theta(\eta \sqrt n)$ for step size $\eta$ and $n$ training examples.
Our work shows that in fact the worst case rate of the generalization gap completely coincides with the uniform stability rate of SGD.

A number of works prior to ours studied the extent to which SGD can be explained by implicit regularization in SCO.  \cite{kale2021sgd} study the setup where losses are smooth but only required to be convex in expectation, and show SGD may successfully learn when regularized ERM does not.
Prior to their work, \cite{dauber2020can} also rule out a wide range of implicit regularization based explanations of SGD in the basic SCO setup with convex losses.
On a more general level, our work is related to the study of stability and generalization in modern learning theory, pioneered by \cite{bousquet2002stability, shalev2010learnability}.
In particular, the failure of (dimension independent) uniform convergence in SCO was established in \cite{shalev2010learnability}.
The work of \cite{feldman2016generalization} improves the dimension dependence in the construction of \cite{shalev2010learnability} from exponential to linear in the number of training examples.
Notably, the construction featured in our main result requires the dimension to be exponential in the sample size, however the techniques of \cite{feldman2016generalization} do not readily extend to our setting. Thus, the optimal dimension dependence for a generalization gap lower bound is left for future work.

\paragraph{Without-replacement SGD for empirical risk minimization.}
A relatively long line of work studies convergence properties of without-replacement SGD from a pure optimization perspective (e.g., \cite{recht2012toward, nagaraj2019sgd, safran2020good, rajput2020closing, mishchenko2020random, safran2021random}).
Nearly all the papers in this line of work adopt the smoothness assumption, with near optimal bounds established by \cite{nagaraj2019sgd}.
An exception is the paper of \cite{shamir2016without} where an $O(1/\sqrt { n K })$ upper bound is obtained for $n$ datapoints and $K$ epochs, albeit only for generalized linear models over a bounded domain --- notably, a setting where uniform convergence holds.
Prior to this thread of research, \cite{nedic2001convergence} prove a convergence rate of $O(n/\sqrt K)$ for non-smooth loss functions that applies for \emph{any} ordering of the losses. To the best of our knowledge, this is also the state-of-the-art result for without-replacement SGD in the non-smooth setting without further assumptions on the loss functions.

\paragraph{Benign overfitting vs.~benign underfitting.}
While both benign underfitting and benign overfitting challenge traditional generalization techniques, that postulate the training error to represent the test error, as we discuss above these two phenomena point to very different regimes of learning. 
In particular, \cite{shamir2022implicit} shows that benign overfitting requires distributional assumptions for the interpolating algorithm to succeed. In contrast, we show that benign underfitting happens for SGD in a setting where it provably learns (namely, SCO), without any distributional assumptions. 
We also point out that \cref{cor:gap_no_underestimate} shows benign overfitting \emph{cannot} happen in the setup we consider, hence the two phenomena seem to rise in different setups.

\paragraph{Explaining generalization of interpolators.}
As already discussed, there is a large recent body of work dedicated to understanding why over-parameterized models trained by SGD to zero training error generalize well \citep[and references therein]{bartlett2021deep, belkin2021fit}. In particular, the work of \cite{bartlett2020benign} aims at explaining the phenomenon for high dimensional linear models. Some recent papers investigate limitations of certain techniques in explaining generalization of interpolating algorithms: 
\cite{nagarajan2019uniform} show uniform convergence fails to explain generalization of SGD in a setup where the generalization gap is in fact well bounded, thus in sharp contrast to our work;
\cite{bartlett2021failures} rule out the possibility of a large class of excess risk bounds to explain generalization of minimum norm interpolants. 
Unlike our work, they study properties of possible risk bounds when benign overfitting occurs, and thus do not pertain to SGD that never benignly overfits in SCO.

\section{Preliminaries}
We consider stochastic convex optimization (SCO) specified by a population distribution $\Z$ over a datapoint set $Z$, and loss function $f\colon W \times Z \to \R$ where $W\subset \R^d$ is convex and compact.
We denote
\begin{align*}
    F(w) &\eqq \E_{z \sim \Z} f(w; z),
    \tag{population loss}
    \\
    \hF(w) &\eqq \frac{1}{n}\sum_{i=1}^n f(w; z_i),
    \tag{empirical loss}
\end{align*}
where $\cb{z_1, \ldots, z_n} \subseteq Z$ stands for the training set, which we regularly denote by $S$. 
We let 
$w^\star \eqq \min_{w\in W} F(w)$
denote the population minimizer, and
$\wERM \eqq \min_{w\in W} \hF(w)$
denote the empirical risk minimizer (ERM).
The diameter of $W$ is defined by $\max_{x, y \in W}\cb{\norm{x - y}}$ where $\norm{\cdot}$ denotes the euclidean norm, and $\B^d_0(1) \eqq \cb[b]{x\in \R^d \mid \norm{x} \leq 1}$ denotes the $L_2$ unit ball in $\R^d$.
Given a training set $S = \cb{z_1, \ldots, z_n} \sim \Z^n$ and a learning algorithm that outputs a hypothesis $\hw$, we define the generalization gap to be the absolute value of the expected difference between test and train losses;
\begin{align} 
    \av{ \E_{S \sim \Z^n}\sb[b]{F(\hw) - \hF(\hw)} }.
    \tag{generalization gap}
\end{align}
Throughout most of the paper, we consider one-pass projected SGD over $S$;
\begin{align*}
    \text{initialize at } w_1 &\in W;
    \\
    \text{for } t=2, \ldots, n: \quad w_{t+1} &\gets 
        \Pi_W\b {w_t - \eta g_t}, \quad \text{with } 
    g_t \in \partial f(w_t; z_t),
\end{align*}
where $\partial f(w; z)$ denotes the set of sub-gradients of $f(\cdot; z) \to \R$ at the point $w\in W$, and $\Pi_W\colon \R^d \to W$ the projection operation onto $W$.

\section{A generalization gap lower bound for SGD}
\label{sec:main}
In this section, we establish our main result; that there exist convex learning problems where SGD incurs a large optimization error and therefore also a large generalization gap.
When losses are convex these two quantities are closely related since in expectation, the empirical risk minimizer cannot significantly outperform the population minimizer (a claim that will be made rigorous shortly after our main theorem).
Our construction builds on losses that are highly non-smooth, leading to SGD taking gradient steps that actually \emph{ascend} the empirical objective. 

\begin{theorem}\label{thm:lb_main}
    Let $n \in \N$, $n \geq 4$, $d \geq 2^{4 n\log n}$, and $W=\B_0^{2d}(1)$.
    Then there exists a distribution over instance set $Z$
    and a $4$-Lipschitz convex loss function $f\colon W \times Z \to \R$ such that running SGD initialized at $w_1=0$, with step size $\eta > 0$ over $S \sim \Z^n$ yields;
    \begin{enumerate}[label=(\roman*)]
        \item a large optimization error;
        \begin{aligni*}
            \E
            \sb{ \hF(\hw) - \hF(\wERM)} 
            = \Omega \b{ \min\cb{\eta \sqrt n, \frac{1}{\eta \sqrt n} } },
        \end{aligni*}
        \item a large generalization gap;
        \begin{aligni*}
            \E
            \sb{ \hF(\hw) - F(\hw)} 
            = \Omega \b{ \min\cb{\eta \sqrt n, \frac{1}{\eta \sqrt n} } },
        \end{aligni*}
    \end{enumerate}
    where $\hw$ is any suffix average of the iterates.
    In particular, for $\eta = \Theta(1/\sqrt n)$, the population risk is \begin{aligni*}
            \E
            \sb{ F(\hw) - F(w^\star)} 
            = O(1/\sqrt n),
        \end{aligni*}
        while the generalization gap and training error are both
        \begin{aligni*}
            \Omega \b{1}.
        \end{aligni*}
\end{theorem}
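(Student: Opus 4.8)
The plan is to build a single SCO instance — a population distribution $\Z$ over a datapoint set $Z$ together with a $4$-Lipschitz convex loss $f$ on $W=\B_0^{2d}(1)$ — on which one-pass, without-replacement SGD provably \emph{ascends} the empirical objective while its textbook population-risk guarantee stays intact. The ``in particular'' clause is then essentially free: since $f$ is $4$-Lipschitz and $W$ has diameter $2$, the classical analysis of SGD \citep{nemirovskij1983problem} (and its suffix-averaging variant) gives $\E[F(\hw)-F(w^\star)]=O(1/(\eta n)+\eta)$, which is $O(1/\sqrt n)$ at $\eta=\Theta(1/\sqrt n)$. I would further arrange that $f\ge 0$ and $F(w^\star)=0$, so that $\E[\hF(\wERM)]\le\E[\hF(w^\star)]=F(w^\star)=0$; hence part~(i) reduces to the single inequality $\E[\hF(\hw)]=\Omega(\min\{\eta\sqrt n,\,1/(\eta\sqrt n)\})$, and part~(ii) will follow from~(i) by subtraction, provided $\E[F(\hw)]$ is small — which the construction will guarantee because the adversarial drift directions are null directions for $F$. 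So the entire burden is the lower bound on $\E[\hF(\hw)]$.

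\textbf{Construction.} I would split the $2d$ coordinates into a ``signal'' block and an exponentially large ``memory'' block whose coordinates are indexed by prefixes $(z_1,\dots,z_t)$ of example-sequences — this is where the hypothesis $d\ge 2^{4n\log n}$ is used, there being at most $(n\cdot|\text{alphabet}|)^{n}$ such prefixes. A datapoint $z$ encodes an index that drives both a Shalev-Shwartz--type non-uniform-convergence gadget on the signal block (which keeps $F(w^\star)=0$ yet permits a train/test separation) and an address into the memory block. The loss $f(\cdot;z)$ is taken to be a maximum of many affine functions — convex, and $4$-Lipschitz once the coefficients are scaled — designed so that at the iterates SGD actually visits, which by an induction carried along the run have their memory block pointing at the coordinate indexed by the prefix $(z_1,\dots,z_{t-1})$, the sub-differential $\partial f(w_t;z_t)$ is large and in particular contains a sub-gradient $g_t$ that simultaneously (a)~shifts the memory block to encode the extended prefix $(z_1,\dots,z_t)$, and (b)~injects into the signal block a component which — being read off $w_t$, hence ``aware'' of $z_1,\dots,z_{t-1}$ — is positively correlated with \emph{increasing} $f(\cdot;z_j)$ for every earlier $j<t$. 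The gradient oracle always returns this adversarial element; that it is a legitimate sub-gradient at all is exactly what non-smoothness buys us.

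\textbf{Dynamics and the bound.} I would then unroll the iteration: by the induction the memory is faithfully propagated (the per-step memory update dominates, including after projection), so every $g_t$ carries the adversarial signal-block component, and a late iterate $w_t$ (say $t\ge n/2$) has accumulated increments that push $f(w_t;z_j)$ upward for a constant fraction of the $j$. Summing these increments gives $f(w_t;z_j)=\Omega(\eta\sqrt n)$ for most $j$ — the $\sqrt n$, rather than $n$, reflecting that each step's bounded sub-gradient budget must be shared among the $\Theta(t)$ earlier examples it targets, so per-example increments are on the order of $\eta/\sqrt t$ and sum to $\Theta(\eta\sqrt n)$. Since the drift is monotone rather than oscillating, any suffix average inherits it, so $\hF(\hw)=\Omega(\eta\sqrt n)$; the competing branch $1/(\eta\sqrt n)$ of the minimum appears because for large $\eta$ the projection onto $\B_0^{2d}(1)$ rescales the \emph{whole} iterate once it reaches the boundary, shrinking the memory block and thereby degrading the adversarial mechanism by a factor of order $1/(\eta\sqrt n)$ — which is precisely why the domain is the joint ball $\B_0^{2d}(1)$ rather than a product of two balls. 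Together with $\E[\hF(\wERM)]=0$ this is part~(i); and since the adversarial signal-block directions are designed to lie in the subspace on which $F$ vanishes, $\E[F(\hw)]$ stays $O(1/\sqrt n)$, so part~(ii) follows by subtracting $\E[F(\hw)]$ from $\E[\hF(\hw)]$.

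\textbf{Main obstacle.} The genuinely delicate step is making point~(b) rigorous: writing down an \emph{explicit} convex, $4$-Lipschitz loss whose sub-differential at every reachable iterate is at once rich enough to contain a sub-gradient that both advances the memory and injects the earlier-examples-correlated push, yet not so rich that $w^\star$ ceases to be a bona fide minimizer with $F(w^\star)=0$ — and then checking that the ``memory is faithfully propagated'' induction survives the projection step for every value of $\eta$. Getting the three ingredients — loss, gradient oracle, population distribution — mutually consistent, so that the ``harmful on the sample'' property (forcing $\hF(\hw)$ up) coexists with the ``harmless in expectation'' property (keeping $F(\hw)$ down), is where essentially all the work is; the optimization bookkeeping around it is comparatively routine. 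This is also the point at which the construction must go beyond those of \citet{bassily2020stability} and \citet{amir21sgd}, which exploit instability (and, for the latter, failure of uniform convergence) but do not encode past examples into the iterate.
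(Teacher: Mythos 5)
You have the right overall architecture, and it matches the paper's: reduce (i) to a lower bound on $\E[\hF(\hw)]$ via $\hF(\wERM)\le\hF(0)=0$, get (ii) from (i) by showing $F(\hw)$ is small, invoke the classical analysis for the population risk at $\eta=\Theta(1/\sqrt n)$, and obtain the empirical-risk ascent by encoding past examples into the iterate and exploiting the richness of the subdifferential of a non-smooth convex loss to return subgradients correlated with earlier examples. Your budget-sharing heuristic for the $\eta\sqrt n$ rate is also consistent with what actually happens: the paper's penalty is $\nu_z(w)=\|z\odot w\|_2$, so $\Theta(n)$ bad coordinates each of magnitude $\eta$ aggregate in $\ell_2$ to $\Theta(\eta\sqrt n)$.

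There are, however, two genuine gaps. First, the construction itself --- the loss, the gradient oracle, and the population distribution --- is never written down; you explicitly defer it as the ``main obstacle,'' and that is where essentially the whole proof lives. Second, and more substantively, you assert that ``the drift is monotone rather than oscillating'' with no mechanism for why future gradient steps on $z_{t+1},\dots,z_n$ cannot undo the ascent inflicted at step $t$. This is the second of the paper's two key ideas, and it is entirely a property of the population distribution, which your sketch leaves unspecified: the paper takes $z\in\{0,1\}^{2d}$ with $z(i)=1$ w.p.\ $\delta=1/(4n^2)$, so that (a) with $d\ge \delta^{-n}\log(4n)$ --- this probabilistic existence argument, not prefix-indexing, is what the hypothesis $d\ge 2^{4n\log n}$ buys --- for every $t$ there exists w.h.p.\ a coordinate with an all-ones prefix $z_1(i)=\cdots=z_{t-1}(i)=1$, and (b) since the coordinate targeted at step $t$ is measurable w.r.t.\ the past and $\delta\ll 1/n$, it w.h.p.\ has a zero suffix $z_t(i)=\cdots=z_n(i)=0$, so no future subgradient is supported there and the $-\eta$ written at step $t$ persists into every suffix average. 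Without (b) the argument collapses. Relatedly, your memory scheme (coordinates indexed by prefixes $(z_1,\dots,z_t)$) is circular as stated, since the alphabet is $Z$ itself; the paper's encoding is much lighter --- $d$ counter coordinates that each gain $\epsilon\eta$ whenever the paired signal coordinate sees a $1$, so an argmax over $w(i)+w(i+d)$ identifies the all-ones-prefix coordinate. Finally, your account of the $1/(\eta\sqrt n)$ branch (projection corrupting the memory) is not what drives it: the projection geometrically rescales the bad signal coordinates themselves, capping their surviving $\ell_1$ mass at $\Theta(1/\eta)$, which the bound $\hF(w)\gtrsim\frac{1}{\sqrt n}\sum_t|w(i_t)|$ converts to $\Omega(1/(\eta\sqrt n))$.
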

A detailed proof of \cref{thm:lb_main} is deferred to \cref{sec:proof:lb_main}; in the following we provide an informal overview containing its principal ingredients. 

\begin{proof}[sketch]
    Let $Z \eqq \{0, 1\}^d$, and consider a population distribution $\Z$ such that $z(i)=1$ with probability $\delta$. We will use a loss function of the form 
    \begin{align*}
        f(w; z) \eqq \norm{z \odot w } + \phi(w; z),
    \end{align*}
    where $\odot$ denotes element-wise product. 
    The high level idea is that the norm component penalizes $w$'s that correlate with the given sample point $z$, and the $\phi$ function (the details of which are left for \cref{sec:proof:lb_main}) is tailored so that it drives the SGD iterates precisely to those areas in the $L_2$ ball where it correlates with the training set $\{z_1, \ldots, z_n\}$. In addition, the choice of parameters is such that the population loss is approximately zero over the entire domain.

    Taking $d$ sufficiently large compared to $\delta^{-1}$, we ensure that w.h.p., for every round $t\in[n]$ there exist many coordinates $i\in[d]$ with a prefix of ones; $z_1(i) = \cdots = z_{t-1} (i) = 1$ .  
    With $\delta$ chosen sufficiently small compared to $n$, we ensure that as long as $i\in[d]$ is any coordinate chosen independently of $\cb{z_{t+1}, \ldots, z_n}$, w.h.p.~this coordinate will have a suffix of zeros; $z_{t+1}(i) = \cdots = z_n (i) = 0$.
    
    Our goal is to make SGD take steps $w_{t+1} \approx w_t -\eta e_{i_t}$ (where $e_i$ denotes the $i$'th standard basis vector) where $i_t \in [d]$ is a coordinate with the aforementioned property of having a prefix of ones followed by a suffix of zeros. Note that since these steps are taken \emph{after} the prefix of ones has ended, they will inflict large empirical loss from the norm component, but will not be ``corrected'' by future steps owed to the suffix of zeros. To achieve this, we design $\phi$ so that it encodes the relevant information into the SGD iterates. Specifically, $\phi$ ``flags'' (using some extra dimensions) all coordinates $i\in[d]$ where a prefix of ones has been encountered. In addition, using another $\max$ component in $\phi$ we have that for all such coordinates $i$, $e_i \in \partial f(w_t; z)$ for any example $z$ (as this component in the loss depends only on the iterate $w_t$). In particular, we get that $e_i \in \partial f(w_t; z_t)$.
    Then, our gradient oracle just returns a subgradient pointing towards one of these coordinates (for convenience, we use the minimal one) which we denote by $i_t$, and SGD makes the desired step. 
    
    Notably, the coordinate $i_t$ chosen by the subgradient oracle is independent of future examples, and therefore will have a suffix of zeros w.h.p. Hence, as mentioned, this ensures no gradient signal after round $t$ will be able to correct the empirical risk ascent on $i_t$. Concluding, we have that for the final iterate $\hat w \eqq w_{n+1}$, we get
    $\hat w(i_t) = -\eta$ for all $t\in [n]$, therefore \begin{align*}
        \hF(\hat w) 
        = \frac{1}{n}\sum_{i=1}^n f(\hat w; z_i) 
        \approx \frac{1}{n}\sum_{i=1}^n 
            \norm{z_i \odot \hat w}
        \approx \norm{\hat w} \approx \sqrt{\eta^2 n} = \eta \sqrt n.
    \end{align*} 
    A similar argument requiring a few more technical steps shows the same is true for any suffix average $\hat w$.
    Noting that $\hF(0) = 0$, we get that the optimization error is $\Omega(\eta \sqrt n)$. 
    The implication for the generalization gap follows immediately with the standard step size choice of $\eta = 1/\sqrt n$, owed to SGD's population risk convergence guarantee. For an arbitrary step size, the result follows from a simple computation, and the proof is concluded.
\end{proof}

The magnitude of the generalization gap featured in \cref{thm:lb_main} stems from the large optimization error, which results in the empirical risk over-estimating the population risk by a large margin. Evidently, for convex losses the converse is always false; the empirical risk will never significantly under-estimate the population risk (a fact that will turn out false when losses are only required to be convex in expectation --- see \cref{sec:lb_nonconvex}).
Indeed, stability of the regularized ERM solution implies the ERM does not perform significantly better on the training set compared to the population minimizer $w^\star$.
\begin{lemma}\label{lem:star_aerm}
    Let $W \subset \R^d$ with diameter $D$, $\Z$ any distribution over $Z$, and  $f\colon W \times Z \to \R$ convex and $G$-Lipschitz in the first argument. Then
\begin{aligni*}
    \E\sb{ \hF(w^\star) - \hF(\wERM) } 
    \leq \frac{4 G D}{\sqrt n}
    .
\end{aligni*}
\end{lemma}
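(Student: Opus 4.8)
The plan is to derive the bound from algorithmic stability of a \emph{regularized} empirical risk minimizer. Fix an arbitrary reference point $\bar w \in W$, and for $\lambda>0$ consider the $\lambda$-strongly convex objective $\hF_\lambda(w) \eqq \hF(w) + \tfrac{\lambda}{2}\norm{w-\bar w}^2$, with (unique) minimizer $\hat{w}_\lambda \in W$ over $W$. I would then proceed in three steps: (i) use that the regularizer is at most $\lambda D^2/2$ over $W$ to show that $\hF(\hat{w}_\lambda)$ cannot be much larger than $\hF(\wERM)$; (ii) invoke uniform stability of $\hat{w}_\lambda$ together with the standard stability-implies-generalization bound to compare $\E[\hF(\hat{w}_\lambda)]$ to $F(w^\star)$; and (iii) optimize the resulting expression over $\lambda$.

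For step (i), since $0 \le \hF_\lambda(w) - \hF(w) \le \lambda D^2/2$ for all $w\in W$ and $\hat{w}_\lambda$ minimizes $\hF_\lambda$,
\[
    \hF(\wERM) \;\ge\; \hF_\lambda(\wERM) - \tfrac{\lambda D^2}{2} \;\ge\; \hF_\lambda(\hat{w}_\lambda) - \tfrac{\lambda D^2}{2} \;\ge\; \hF(\hat{w}_\lambda) - \tfrac{\lambda D^2}{2}.
\]
Taking expectations over $S$ and using that $w^\star$ is independent of $S$, so that $\E[\hF(w^\star)] = F(w^\star)$, this gives
\[
    \E[\hF(w^\star) - \hF(\wERM)] \;\le\; F(w^\star) - \E[\hF(\hat{w}_\lambda)] + \tfrac{\lambda D^2}{2}.
\]

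For step (ii), I would show that $\hat{w}_\lambda$ is uniformly stable with parameter $\epsilon \le 2G^2/(\lambda n)$: replacing one example $z_j$ by an independent copy $z_j'$ perturbs $\hF_\lambda$ by the $(2G/n)$-Lipschitz function $\tfrac1n\bigl(f(\cdot;z_j') - f(\cdot;z_j)\bigr)$, and writing the strong-convexity lower bound of each constrained objective at the other objective's minimizer and adding the two inequalities yields $\norm{\hat{w}_\lambda - \hat{w}_\lambda^{(j)}} \le 2G/(\lambda n)$; $G$-Lipschitzness of $f$ then bounds the change of loss at any point by $2G^2/(\lambda n)$. The standard connection between uniform stability and the generalization gap \citep{bousquet2002stability, shalev2010learnability} gives $\E[F(\hat{w}_\lambda) - \hF(\hat{w}_\lambda)] \le \epsilon$, and since $F(\hat{w}_\lambda) \ge F(w^\star)$ pointwise we conclude $F(w^\star) - \E[\hF(\hat{w}_\lambda)] \le \epsilon \le 2G^2/(\lambda n)$. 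Combining with the previous display,
\[
    \E[\hF(w^\star) - \hF(\wERM)] \;\le\; \frac{2G^2}{\lambda n} + \frac{\lambda D^2}{2},
\]
and taking $\lambda = 2G/(D\sqrt n)$ makes the right-hand side equal to $2GD/\sqrt n \le 4GD/\sqrt n$, as claimed.

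The argument is essentially routine, so I do not expect a genuine obstacle; the one point that needs care is that on neighboring datasets the perturbation of the regularized objective is a \emph{difference} of convex functions and hence in general non-convex, so the minimizer-stability estimate must be obtained from the two-sided strong-convexity argument at the two minimizers (invoked in its constrained form via the first-order optimality condition over $W$, since $\hat{w}_\lambda$ may lie on the boundary), rather than from any shortcut treating the perturbed objective as a sum of strongly convex terms. As a byproduct the computation actually yields the sharper constant $2$ in place of the stated $4$.
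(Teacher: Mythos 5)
Your proof is correct and follows essentially the same route as the paper's: both bound $\E[\hF(w^\star)-\hF(\wERM)]$ via the regularized ERM, paying $\lambda D^2/2$ for the regularizer, invoking $O(G^2/(\lambda n))$ stability of the strongly convex regularized objective, and optimizing over $\lambda$ (the paper cites its Lemma~\ref{lem:rerm_stability} for the stability step, which you reprove inline). Your variant of centering the regularizer at an arbitrary $\bar w\in W$ is a minor refinement that makes the $\lambda D^2/2$ bound depend only on the diameter rather than on $\max_{w\in W}\norm{w}$, and yields a slightly better constant, but the argument is the same.
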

\begin{proof} 
    Denote the regularized ERM by
    \begin{aligni*}
      \hat w_{S}^\lambda \eqq \argmin_{w\in W} \cb{ 
        \frac{1}{n}\sum_{i=1}^n f_i(w; z_i) + \frac{\lambda}{2} \norm{w}^2 
      }.
    \end{aligni*}
    Observe,
    \begin{align*}
      F(w^\star)
      \leq \E F(\hat w_S^\lambda)
      \leq \E \hF (\hat w_S^\lambda) + \frac{4 G^2}{\lambda n}
      \leq \E \hF(\wERM) + \frac{\lambda}{2} D^2 + \frac{4 G^2}{\lambda n},
    \end{align*}
    where the second inequality follows from stability of the regularized ERM (see \cref{lem:rerm_stability}).
    Choosing $\lambda \eqq 2G D/\sqrt n$, we get that
    \begin{align*}
        \E\sb{ \hF(w^\star) - \hF(\wERM) } 
        = F(w^\star) - \E \hF(\wERM) \leq \frac{4 G D}{\sqrt n},
    \end{align*}
    as claimed.
\end{proof}
Since the optimization error is always positive, we see that the upper bound given by \cref{lem:star_aerm} implies an upper bound on the difference between the population and empirical risks.
\begin{corollary}\label{cor:gap_no_underestimate}
  For any distribution $\Z$ over $Z$ and Lipschitz loss function $f\colon W \times Z \to \R$ convex in the first argument, running SGD with step size $\eta \eqq 1 / \sqrt n$ guarantees
  \begin{aligni*}
    \E \sb{ F(\hw) - \hF(\hw) } 
    \leq O(1/\sqrt n)
    .
  \end{aligni*}
\end{corollary}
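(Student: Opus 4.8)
The plan is to decompose the population-minus-empirical risk of the SGD output into three pieces, each of which is either non-positive or already controlled. Write, pointwise for any fixed realization of $S$,
\[
F(\hw) - \hF(\hw) = \underbrace{\b{F(\hw) - F(w^\star)}}_{(\mathrm A)} + \underbrace{\b{F(w^\star) - \hF(\wERM)}}_{(\mathrm B)} + \underbrace{\b{\hF(\wERM) - \hF(\hw)}}_{(\mathrm C)}.
\]
Term $(\mathrm C)$ is non-positive since $\wERM$ minimizes $\hF$, so it may be dropped after taking expectations. Term $(\mathrm B)$ is where \cref{lem:star_aerm} enters: since $w^\star$ is a fixed vector independent of $S$, we have $\E[\hF(w^\star)] = F(w^\star)$, hence $\E[(\mathrm B)] = \E\sb{\hF(w^\star) - \hF(\wERM)} \le 4GD/\sqrt n$ by the lemma (with $G$ the Lipschitz constant and $D$ the diameter of $W$).

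It remains to bound $\E[(\mathrm A)]$, the excess population risk of SGD. Here I would simply invoke the classical convergence guarantee for one-pass SGD with suffix averaging in the stochastic convex setting: for a convex $G$-Lipschitz loss over a domain of diameter $D$, the suffix-averaged iterate $\hw$ satisfies $\E\sb{F(\hw) - F(w^\star)} = O\b{D^2/(\eta n) + \eta G^2}$, which for $\eta = \Theta(1/\sqrt n)$ is $O(GD/\sqrt n)$. Combining the three bounds,
\[
\E\sb{F(\hw) - \hF(\hw)} \le O\b{GD/\sqrt n} + \frac{4GD}{\sqrt n} + 0 = O(1/\sqrt n),
\]
treating $G$ and $D$ as absolute constants of the problem instance.

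I do not anticipate any real obstacle: this is a genuine corollary. The only two steps worth spelling out are (i) the observation that the fixed comparator $w^\star$ has $\E[\hF(w^\star)] = F(w^\star)$, which is exactly what lets one feed it into \cref{lem:star_aerm} in place of an arbitrary point, and (ii) recalling the standard suffix-averaged SGD rate (the same one underlying the population-risk claim in \cref{thm:lb_main}). Everything else is the trivial optimality of $\wERM$ on the empirical objective and a union of $O(1/\sqrt n)$ terms.
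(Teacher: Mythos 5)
Your proof is correct and follows essentially the same route as the paper: the paper writes $\E[F(\hw)-\hF(\hw)] = \E[F(\hw)-F(w^\star)] + \E[\hF(w^\star)-\hF(\hw)]$, bounds the first term by the classical SGD analysis and the second via $\hF(\hw)\ge\hF(\wERM)$ together with \cref{lem:star_aerm}, which is exactly your terms $(\mathrm A)$ and $(\mathrm B)+(\mathrm C)$ after using $\E[\hF(w^\star)]=F(w^\star)$. No gaps.
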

\begin{proof}
    We have,
    \begin{align*}
        \E \sb[b]{ F(\hw) - \hF(\hw) }     
        &= \E \sb{ F(\hw) - F(w^\star) }
        + \E \sb[b]{ \hF(w^\star) - \hF(\hw) }
    \end{align*}
    The population error term on the RHS is $O(1/\sqrt n)$ by the classical analysis of SGD.
    The second term is bounded by \cref{lem:star_aerm}; 
    \begin{align*}
    \E\sb[b]{ \hF(w^\star) - \hF(\hw) } 
    \leq \E\sb[b]{ \hF(w^\star) - \hF(\wERM) } 
    \leq \fraci{4 G D}{\sqrt n},
    \end{align*}
    and the result follows.
\end{proof}
In the subsections that follow we continue to study the generalization gap in the context of common variants to the basic SCO setup.

\subsection{SCO with non-convex components}
\label{sec:lb_nonconvex}
When we relax the convexity assumption and only require the losses to be convex in expectation, we can construct a learning problem where SGD exhibits a case of benign \emph{overfitting}. In contrast to \cref{thm:lb_main}, here we actually drive the SGD iterates \emph{towards} an ERM solution, thus achieving a low optimization error and an empirical risk that under-estimates the population risk.
\begin{theorem}\label{thm:lb_non_convex}
    Let $n \in \N$, $n \geq 4$, $d \geq 2^{4 n \log n}$, $W = \B_0^{2 d}(1)$, and $\eta \leq 1/\sqrt n$.
    Then there exists a distribution $\Z$ over $Z$ and a $4$-Lipschitz loss $f\colon W \times Z \to \R$ where $\E_{z\sim \Z} f(w; z)$ is convex in $w$, such that for any suffix average $\hat w$ of SGD initialized at $w_1=0$, with step size $\eta$;
  \begin{align*}
      \E \sb{ F(\hw) - \hF(\hw)  } = \Omega(\eta^2 n).
  \end{align*}
\end{theorem}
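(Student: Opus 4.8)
The plan is to recycle essentially the whole apparatus of \cref{thm:lb_main} --- the steering component $\phi$, the subgradient oracle, and the population $z\sim\cb{0,1}^d$ with coordinatewise success probability $\delta$ --- but to \emph{swap the shape and sign} of the term that fixes the risk at the output. Where \cref{thm:lb_main} uses $f(w;z)=\norm{z\odot w}+\phi(w;z)$, I would take $f(w;z)\eqq\phi(w;z)-c\norm{z\odot w}^2+c\delta\norm{w}^2$ for a small absolute constant $c$. The role of $\phi$ and the oracle is unchanged: at round $t$ they encode a prefix-of-ones coordinate $i_t$ into auxiliary coordinates and install the subgradient $\approx e_{i_t}$, so $w_{t+1}(i_t)=-\eta$ and $i_t$ is never revisited; w.h.p.\ over $S$, every suffix average $\hw$ then has $\hw(i_s)=-\eta$ on a constant fraction of the $n$ special coordinates (those settled before the averaging window). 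The crucial structural fact, inherited verbatim, is that along the trajectory the iterate is supported only on special and auxiliary coordinates, and on a special coordinate $i_s$ one has $z_t(i_s)=0$ for $s<t$ while $w_t(i_s)=0$ for $s\ge t$; hence $z_t\odot w_t=0$ at \emph{every} iterate, so $\nabla_w(-c\norm{z_t\odot w}^2)=-2c(z_t\odot w_t)=0$ there. Thus the new concave term is invisible to the SGD dynamics, exactly as $\norm{z\odot w}$ is in \cref{thm:lb_main}; the only genuinely new gradient is $2c\delta w$ from the convexifier, a $c\delta$-weak regularizer that perturbs the trajectory negligibly. Consequently SGD reaches precisely the iterate analyzed in \cref{thm:lb_main}.

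From here I would carry out three checks. (i) \emph{Convexity in expectation}: since $z(i)\in\cb{0,1}$, $\E_z\norm{z\odot w}^2=\delta\norm{w}^2$ \emph{exactly}, so $\E_z\bigl[-c\norm{z\odot w}^2+c\delta\norm{w}^2\bigr]=0$ and $F(w)=\E_z\phi(w;z)$, which is convex once the steering component is arranged so that $\E_z\phi(\cdot;z)$ is convex --- a cosmetic modification of the \cref{thm:lb_main} construction that does not touch the dynamics. (ii) \emph{Lipschitzness}: $\nabla(-c\norm{z\odot w}^2)=-2c(z\odot w)$ and $\nabla(c\delta\norm{w}^2)=2c\delta w$ both have norm at most $2c$ over $W=\B_0^{2d}(1)$, so for $c$ a small enough constant $f$ is $4$-Lipschitz. (iii) \emph{Value of the reward at $\hw$}: on the special coordinates $z_i(i_s)=1$ iff $s\ge i$, so $\norm{z_i\odot\hw}^2=\sum_{s\ge i}\hw(i_s)^2$; since $\hw(i_s)=-\eta$ for a constant fraction of the pairs $(i,s)$, averaging over $i\in[n]$ gives $\frac1n\sum_{i=1}^n\norm{z_i\odot\hw}^2=\Theta(\eta^2 n)$, while trivially $\norm{z_i\odot\hw}^2\le\norm{\hw}^2\le\eta^2 n$, and $c\delta\norm{\hw}^2=c\delta\cdot\Theta(\eta^2 n)=o(\eta^2 n)$. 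Hence $\hF(\hw)=\frac1n\sum_i\phi(\hw;z_i)-\Theta(\eta^2 n)$ whereas $F(\hw)=\E_z\phi(\hw;z)$; so $F(\hw)-\hF(\hw)=\bigl[\E_z\phi(\hw;z)-\frac1n\sum_i\phi(\hw;z_i)\bigr]+\Theta(\eta^2 n)-o(\eta^2 n)$. Provided $\phi$ is \emph{benign} --- contributing the same amount to $\hF(\hw)$ and $F(\hw)$ up to an $o(\eta^2 n)$ error, which one verifies from its explicit structure (ideally by having $\phi$ vanish, or be sample-independent, at the encoded point) --- this is $\Omega(\eta^2 n)$; taking the expectation over $S$, where the required special-coordinate structure holds with probability $1-2^{-\Omega(n\log n)}$ and the complement contributes only a lower-order term, yields the theorem. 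Note this is an \emph{overfitting}-type gap: the empirical risk at $\hw$ is driven \emph{below} the population risk, in contrast to \cref{thm:lb_main}.

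I expect the main obstacle to be making the steering component $\phi$ simultaneously \emph{powerful} --- driving SGD into the precise encoded configuration, as in \cref{thm:lb_main} --- and \emph{benign} --- introducing no empirical-over-population discrepancy of its own. Since $\eta\le 1/\sqrt n$ forces $\eta^2 n\le\eta\sqrt n$, this residual discrepancy must be controlled at a strictly finer scale than the $\Theta(\eta\sqrt n)$ discrepancy \emph{deliberately} created in \cref{thm:lb_main}, so one cannot simply reuse that construction's $\phi$ verbatim. Secondary points requiring care are verifying that the concave term $-c\norm{z\odot w}^2$ leaves \emph{every} iterate unaffected (including on the low-probability event that the encoding misbehaves) and that those failure events contribute only $o(\eta^2 n)$ in expectation; but these, like the rest, are bookkeeping built directly on the machinery already developed for \cref{thm:lb_main}.
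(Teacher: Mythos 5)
Your proposal is correct and matches the paper's proof in all essential respects: both reuse the $\phi$/oracle machinery of \cref{thm:lb_main} to drive the iterates onto special coordinates, replace the norm penalty with a quadratic term whose population expectation vanishes but which is negative on the training set exactly at those coordinates, and conclude $\hF(\hw)\le -\Omega(\eta^2 n)$ while $F(\hw)\approx 0$. The only (cosmetic) difference is how the quadratic term is made zero-mean --- the paper uses $\nu_z(w)=\sum_i z(i)w(i)^2$ with symmetric $z(i)\in\cb{-1,0,1}$ and flags the $-1$ coordinates, whereas you add an explicit counterterm $c\delta\norm{w}^2$ --- and your worry about $\phi$ is unfounded: the paper reuses it verbatim, since its sample-dependent part contributes only $O(\epsilon\sqrt d)$ to the train--test discrepancy, exactly as your own accounting indicates.
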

The construction and proof of \cref{thm:lb_non_convex} given in \cref{sec:proof:lb_non_convex} follow a methodology similar to that of \cref{thm:lb_main}. 
Here however, we exploit non convex losses to form an empirical loss landscape where the ERM solution significantly outperforms the population minimizer $w^\star$ (notably, a feat not possible when losses are individually convex, by \cref{cor:gap_no_underestimate}).
Our loss function is defined by 
$f(w; z) \eqq \sum_{i=1}^d z(i)w(i)^2 + \phi(w; z)$, with each component playing a similar role as before. We work with the distribution $z \sim \cb{0, 1}^d$ where $z(i)=1$ 
w.p.~$\delta$, $z(i)=-1$ w.p.~$\delta$, and $z(i)=0$ w.p.~$1-2\delta$. The intuition is that coordinates accumulating many $-1$'s offer regions in the $L_2$ ball where the empirical risk is ``too good'' compared to the population risk. We tailor the extra dimensions and $\phi$ in coordination with the $-1$ values so that the sub-gradients guide the SGD iterates towards these regions, in exactly the same manner the construction of \cref{thm:lb_main} drives the iterates to high loss regions. We note that while the statement of \cref{thm:lb_non_convex} is specialized to step size smaller than $1/\sqrt n$, it may be extended to any step size using arguments similar to those given in the proof of \cref{thm:lb_main}. 

\subsection{SCO with strongly convex components}
Our basic construction extends to the strongly convex case by making only technical modification to \cref{thm:lb_main}.
The theorem below concerns the standard step size choice for strongly convex objectives. We provide its proof in \cref{sec:proof:sc_lb_main}.
\begin{theorem}\label{thm:sc_lb_main}
    Let $n \in \N$, $n \geq 10$, $d \geq 2^{4 n\log n}$, $W=\B_0^{2d}(1)$, and $\lambda \geq \fraci{1}{\sqrt n}$.
    Then there exists a distribution over instance set $Z$
    and a $4$-Lipschitz, $\lambda$-strongly convex loss function $f\colon W \times Z \to \R$ 
    \begin{enumerate}[label=(\roman*),nosep]
        \item the optimization error is large;
        \begin{aligni*}
            \E_{S \sim \Z^n}
            \sb[b]{ \hF(\hw) - \hF(\wERM)} 
            = \Omega \b{ \frac{1}{\lambda \sqrt n} },
        \end{aligni*}
        \item the generalization gap is large;
        \begin{aligni*}
            \E_{S \sim \Z^n}
            \sb[b]{ \hF(\hw) - F(\hw)} 
            = \Omega \b{ \frac{1}{\lambda \sqrt n} },
        \end{aligni*}
    \end{enumerate}
    where $\hw$ is any suffix average of SGD  initialized at $w_1=0$, with step size schedule $\eta_t=1/\lambda t$. Furthermore, the problem instance where this occurs is precisely the $\lambda$ regularized version of the example featured in \cref{thm:lb_main}.
\end{theorem}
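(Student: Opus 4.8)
The plan is to run the argument of \cref{thm:lb_main} essentially verbatim on the $\lambda$-regularized loss $f_\lambda(w; z) \eqq \norm{z \odot w} + \phi(w; z) + \tfrac{\lambda}{2}\norm{w}^2$, keeping the same population distribution $\Z$ and the same encoding function $\phi$. Strong convexity is then immediate; and since a $4$-Lipschitz, $\lambda$-strongly convex function on a domain of diameter $2$ forces $\lambda = O(1)$, the extra term contributes only an $O(1)$ amount to the Lipschitz constant over $W = \B_0^{2d}(1)$, so (choosing the leading constant of the original construction appropriately) $f_\lambda$ remains $4$-Lipschitz. The sub-gradient oracle now returns $g_t + \lambda w_t$, with $g_t$ the oracle of \cref{thm:lb_main}; the key point is that the coordinate $i_t \in [d]$ that $\phi$ flags at round $t$ (the one carrying the prefix $z_1(i_t) = \cdots = z_{t-1}(i_t) = 1$) is the same function of $z_1, \dots, z_{t-1}$ as before, and the high-probability ``suffix of zeros'' event $z_{t+1}(i_t) = \cdots = z_n(i_t) = 0$ depends only on $\Z$ and the independence of the future examples, hence is untouched by the change of step size.

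Next I would track the iterates under $\eta_t = 1/(\lambda t)$. The update is $w_{t+1} = \Pi_W\big((1 - 1/t)\,w_t - \tfrac{1}{\lambda t}\,g_t\big)$; absent projection this telescopes (with the origin initialization of the construction, which in any case receives total weight $O(1/n)$) to $w_{t+1} = -\tfrac{1}{\lambda t}\sum_{s=1}^{t} g_s \approx -\tfrac{1}{\lambda t}\sum_{s=1}^t e_{i_s}$. Thus in iterate $w_{t+1}$ every flagged coordinate $i_s$ with $s \le t$ carries the same value: it is set to $-\eta_s = -1/(\lambda s)$ at round $s$ and is thereafter only acted on by the regularizer, which multiplies it by $(1 - 1/r)$ at each subsequent round $r$, leaving $-\tfrac{1}{\lambda s}\prod_{r=s+1}^{t}(1 - 1/r) = -\tfrac{1}{\lambda t}$. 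Projection onto the origin-centered ball is a uniform rescaling, so it preserves this ``equal spread'' across the $i_s$ coordinates and can only replace the per-coordinate magnitude by $\min\{1/(\lambda t),\, 1/\sqrt t\}$; either way, for a suffix average $\hw$ over $t \in (n/2, n]$ each coordinate $i_{s_0}$ with $s_0 \le n/2$ ends up with $|\hw(i_{s_0})| = \Theta(\min\{1/(\lambda n),\, 1/\sqrt n\})$, and there are $\Omega(n)$ of them, whence $\norm{\hw} = \Omega(\min\{1/(\lambda\sqrt n),\, 1\}) = \Omega(1/(\lambda\sqrt n))$, using $\lambda \ge 1/\sqrt n$.

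From here both conclusions follow exactly as in \cref{thm:lb_main}, with the role of $\eta\sqrt n$ played by $1/(\lambda\sqrt n)$. The same estimates give $\hF(\hw) = \Omega(\norm{\hw}) = \Omega(1/(\lambda\sqrt n))$: for a constant fraction of the sample indices $s$, roughly half of the large coordinates $i_{s_0}$ satisfy $z_s(i_{s_0}) = 1$, so $\norm{z_s \odot \hw} = \Omega(1/(\lambda\sqrt n))$, while the discarded terms $\phi$ and $\tfrac\lambda2\norm{\cdot}^2$ are nonnegative; together with $\hF(\wERM) \le \hF(0) = 0$ (as $\phi(0; \cdot) \equiv 0$) this gives part (i). For part (ii), $F(\hw) = o(1/(\lambda\sqrt n))$, since $\E_z \norm{z \odot \hw} \le O(\sqrt\delta)\,\norm{\hw}$ with $\delta = \mathrm{poly}(1/n) = o(1)$, $\E_z \phi(\hw; z)$ is negligible by the design of $\phi$, and the added term $\tfrac\lambda2\norm{\hw}^2 = O(1/(\lambda n))$; subtracting from the lower bound on $\hF(\hw)$ yields the generalization-gap bound. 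All high-probability events are driven to failure probability $o(1)$ by taking $d \ge 2^{4n\log n}$ and $\delta$ small, exactly as in the proof of \cref{thm:lb_main}.

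The only genuine obstacle --- what ``technical modification'' refers to --- is re-verifying the encoding-and-flagging apparatus of $\phi$ when the iterate coordinates follow the $1/(\lambda t)$ profile (and, possibly, the boundary rescaling $1/\sqrt t$) rather than a fixed $\eta$: one must re-tune the magnitudes stored in the extra dimensions so that $\phi$ recovers $i_t$ from $w_t$ faithfully at every round, check that the resulting oracle output is an honest sub-gradient of the convex $f_\lambda$, and confirm the overall $4$-Lipschitz bound. Everything else is the bookkeeping indicated above --- tracking the multiplicative shrinkage $\prod(1 - 1/r)$ and recomputing the suffix-average norm --- which is routine.
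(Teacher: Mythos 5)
Your proposal follows the paper's proof essentially verbatim: regularize the main construction, observe that under $\eta_t=1/\lambda t$ the shrinkage factors telescope as $\prod_{s=t+1}^{\tau}(1-\eta_s\lambda)=t/\tau$ so every flagged coordinate carries exactly $-1/(\lambda\tau)$, use $\lambda\ge 1/\sqrt n$ to neutralize projections, and then rerun the empirical/population loss estimates with $1/(\lambda\sqrt n)$ in place of $\eta\sqrt n$. The one step you leave as an ``obstacle''---that $\phi$ still flags the all-ones-prefix coordinate under the varying step size---resolves immediately by the same telescoping: the encoding coordinates satisfy $w_{t+1}(i)=\epsilon\sum_{s\le t}\eta_s z_s(i)\prod_{l=s+1}^{t}(1-\eta_l\lambda)=\frac{\epsilon}{\lambda t}\sum_{s\le t}z_s(i)$, so maximality is still determined by the count of ones, exactly as in the unregularized case.
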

We note that an immediate implication of the above theorem is that if we seek a generalization gap upper bound for a weakly convex problem by means of regularization (meaning, by running SGD on a regularized problem), we would have to take $\lambda \geq 1$ to guarantee a gap of $O(1/\sqrt n)$. To see this, note that the generalization gap (of any hypothesis) of the regularized problem is the same as that of the original.
On the other hand, taking $\lambda \geq 1$ will of course be detrimental to the population error guarantee. Hence, one cannot circumvent the generalization gap lower bound by regularization without compromising the population error.

We conclude this section with a note regarding stability rates of SGD in non-smooth SCO. Implicit in \cref{thm:lb_main}, is that average stability of SGD coincides with the tight uniform stability rate of $\Theta(\eta \sqrt n)$ established by \cite{bassily2020stability}. This is because \cref{thm:lb_main} provides the $\Omega(\eta \sqrt n)$ lower bound on the most general stability notion, which is precisely the generalization gap \citep{shalev2010learnability}.
We refer the reader to \cref{sec:stability} for a more elaborate discussion.

\section{SGD with vs without replacement}
\label{sec:wrsgd}
In this section, we consider a different algorithm in the context of the basic SCO setup; SGD over examples drawn \emph{with}-replacement from the \emph{training set}.
This is not to be confused with one-pass SGD discussed in \cref{sec:main}, which corresponds to without-replacement SGD on the training set, or alternatively with-replacement SGD over the \emph{population} distribution.
Given a training set $S = \cb{z_1, \ldots, z_n} \sim \Z^n$, we define with-replacement projected SGD initialized at $w_1 \in W$ by
\begin{align*}
    w_{t+1} &\gets 
        \Pi_W\b {w_t - \eta \hat g_t}, \quad \text{where } 
    \hat g_t \in \partial f(w_t; \hat z_t)
    \text{ and } \hat z_t \sim \Unif(S).
\end{align*}
Perhaps surprisingly, this version of SGD does not overfit the training data; our theorem below establishes that with proper iterate averaging, the population risk converges at the optimal rate. 

\begin{theorem}\label{thm:wrsgd_population_ub}
    Let $W\subset \R^d$ with diameter $D$, $\Z$ be any distribution over $Z$, and 
    $f: W \times Z \to \R$ be convex and $G$-Lipschitz in the first argument.
    Let $S \sim \Z^n$ be a training set of $n \in \N$ datapoints drawn i.i.d.~from $\Z$, and consider running SGD over training examples sampled with-replacement, uniformly and independently from $S$.
    Then, for step size $\eta=\frac{D}{G\sqrt n}$ and $\wbar w \eqq \frac{2}{n+1} \sum_{t=1}^n \frac{n-t+1}{n} w_t$, the following upper bound holds;
    \begin{align*}
        \E\sb{F(\wbar w) - F(w^\star)}
        \leq \frac{10 G D}{\sqrt n}
        .
    \end{align*}
  \end{theorem}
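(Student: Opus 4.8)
My plan is to use the simple observation that, \emph{conditionally on the training set} $S$, with-replacement SGD is nothing but plain projected SGD applied to the empirical objective $\hF$: the stochastic subgradients $\hat g_t \in \partial f(w_t; \hat z_t)$ have norm at most $G$ and, since $\hat z_t \sim \Unif(S)$ is independent of $w_t$ given $S$ and the past, satisfy $\E\sb{\hat g_t \mid S, \hat z_{1:t-1}} \in \partial \hF(w_t)$. Accordingly I would write
\[
    \E\sb{F(\wbar w) - F(w^\star)}
    = \E\sb{F(\wbar w) - \hF(\wbar w)} + \E\sb{\hF(\wbar w) - \hF(w^\star)},
\]
using $\E\sb{\hF(w^\star)} = F(w^\star)$ as $w^\star$ is fixed, and bound the two terms separately --- an \emph{optimization} term on the empirical objective and the \emph{generalization gap} of the output. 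The decreasing, probability-summing weights $\alpha_t \eqq \frac{2(n+1-t)}{n(n+1)}$ defining $\wbar w = \sum_{t=1}^n \alpha_t w_t$ will be used to control both.

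For the optimization term, I would prove a weighted regret bound adapted to decreasing weights. From the standard one-step inequality $\langle \hat g_t, w_t - u\rangle \le \tfrac{1}{2\eta}(\norm{w_t - u}^2 - \norm{w_{t+1}-u}^2) + \tfrac{\eta}{2}\norm{\hat g_t}^2$, multiplying by $\alpha_t$ and summing, a summation-by-parts on the telescoping part leaves only $\tfrac{\alpha_1}{2\eta}\norm{w_1-u}^2$ plus the nonpositive terms $(\alpha_t - \alpha_{t-1})\norm{w_t-u}^2$ (here $\alpha_1 \ge \cdots \ge \alpha_n$ is crucial), so $\sum_t \alpha_t \langle \hat g_t, w_t - u\rangle \le \tfrac{\alpha_1 D^2}{2\eta} + \tfrac{\eta G^2}{2}$ for all $u \in W$. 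Convexity lower-bounds each $\langle \hat g_t, w_t - u\rangle$ by $f(w_t;\hat z_t) - f(u;\hat z_t)$; taking expectations, using the tower property ($\hat z_t$ uniform on $S$, independent of $w_t$ given the past), $\sum_t \alpha_t = 1$, and convexity of $\hF$ (so $\hF(\wbar w) \le \sum_t \alpha_t \hF(w_t)$), this gives $\E\sb{\hF(\wbar w) - \hF(w^\star)} \le \tfrac{\alpha_1 D^2}{2\eta} + \tfrac{\eta G^2}{2}$. With $\alpha_1 = \tfrac{2}{n+1}$ and $\eta = D/(G\sqrt n)$ this is $O(GD/\sqrt n)$.

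The main obstacle is the generalization gap $\E\sb{F(\wbar w) - \hF(\wbar w)}$. I would control it through the on-average stability characterization of the gap: it equals $\E\sb{f(\wbar w^{(i)}; z_i) - f(\wbar w; z_i)}$, where $\wbar w^{(i)}$ is the output after replacing a single training example $z_i$ (with $i$ uniform in $[n]$) by an independent fresh copy, run with the \emph{same} realized sequence of sampling indices. Under this coupling the two runs agree except on the (typically $O(1)$) rounds in which index $i$ is drawn, and the contribution of round $t$ to $\norm{\wbar w - \wbar w^{(i)}}$ is additionally damped by $\alpha_t$. Beating the $\Omega(\eta\sqrt n)$ \emph{uniform}-stability rate of non-smooth SGD requires combining two features: (i) since the samples are drawn with replacement from a size-$n$ set, a fixed index is touched only $O(t/n)$ times in expectation by round $t$, so the $\eta\sqrt n$-type instability is confined to the late iterates; and (ii) the decreasing weights $\alpha_t \propto (n+1-t)$ suppress exactly those late iterates. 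The delicate point --- and what I expect to be hardest --- is to carry this out for the \emph{averaged} iterate without paying the worst-case per-iterate stability blowup, i.e., exploiting that the perturbation vectors $w_t - w_t^{(i)}$ partially cancel once weighted and summed; a genuinely with-replacement-specific argument is unavoidable, since the analogous statement is false for one-pass SGD by \cref{thm:lb_main} and no uniform-convergence or uniform-stability bound could deliver it. Combining the two terms and tracking constants yields the stated $10GD/\sqrt n$.
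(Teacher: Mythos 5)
Your decomposition into an optimization term on $\hF$ plus the generalization gap of $\wbar w$ is sound, and the weighted regret bound for the first term is correct: the decreasing weights $\alpha_t$ make the summation-by-parts work and give $O(GD/\sqrt n)$. The genuine gap is the second term, $\E\sb{F(\wbar w)-\hF(\wbar w)}$, which you rightly flag as the main obstacle but never actually bound. The two mechanisms you invoke do not suffice quantitatively: a fixed index $i$ is drawn before round $t$ with probability at most $t/n$, and the argument stability of non-smooth SGD after even a single swapped loss is $\Theta(G\eta\sqrt t)$, so the best per-iterate estimate along these lines is $\E\norm{w_t-w_t^{(i)}} = O(G\eta t^{3/2}/n)$; weighting by $\alpha_t\propto(n+1-t)$ and summing gives $\Theta(G\eta\sqrt n)=\Theta(D)$ --- a constant, not $O(D/\sqrt n)$. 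The ``partial cancellation of the perturbation vectors'' you appeal to is precisely the missing idea, and the paper explicitly remarks that this algorithm is \emph{not} stable on average in the argument sense, so the stability route appears to be a dead end: the on-average replace-one identity for the loss is exact, but bounding its right-hand side is then no easier than the statement you started from.

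The paper's proof avoids the generalization gap entirely by running the argument in the opposite direction. Unconditionally on $S$, the sample $\hat z_t$ is a mixture: a fresh draw from $\Z$ with probability $(n-t+1)/n$, and a uniform resample from $\hat S_{t-1}=\cb{\hat z_1,\dots,\hat z_{t-1}}$ with probability $(t-1)/n$. Hence the per-round expected regret term $\E\sb{\hat f_t(w_t)-\hat f_t(w^\star)}$ already contains a $(n-t+1)/n$ fraction of the population excess risk $\E\sb{F(w_t)-F(w^\star)}$, plus a correction $\E\sb{\frac1n\sum_{i<t}(\hat f_i(w^\star)-\hat f_i(w_t))}$ that is bounded by $4GD\sqrt t/n$ via \cref{lem:star_aerm} (the ERM cannot beat $w^\star$ by much in expectation). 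The weights defining $\wbar w$ are exactly these mixture probabilities, so summing and applying the standard unweighted regret bound yields the theorem directly; the generalization gap bound is then obtained as a corollary of the population bound, not as an ingredient of it.
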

  \begin{proof} 
    Fix a time-step $t\in[n]$, and observe that if we don't condition on $S$, we may view the random datapoint $\hat z_t$ as a mixture between a fresh i.i.d.~sample from the population and a uniformly distributed sample from the previously processed datapoints
    $\hat S_{t-1} \eqq \cb{\hat z_1, \ldots, \hat z_{t-1}}$;
    \begin{align*}
        \hat z_t \mid \hat S_{t-1}
        = \begin{cases}
            z \sim \Z \quad &\text{w.p. } 1 - \frac{t-1}{n},
            \\
            z \sim \Unif(\hat S_{t-1}) \quad &\text{w.p. } \frac{t-1}{n}.
        \end{cases}
    \end{align*}
    With this in mind, denote $\hat f_t(w) \eqq f(w; \hat z_t)$, fix $\hat S_{t-1}$ and observe:
    \begin{align*}
        \E_{\hat z_t}\sb{\hat f_t(w_t) - \hat f_t(w^\star) \mid \hat S_{t-1}}
        &=\b[B]{1 - \frac{t-1}{n}} \E_{z\sim \Z} \sb{ f(w_t; z) - f(w^\star; z) }
        \\
        &+\frac{t-1}{n} \frac{1}{t-1} \sum_{i=1}^{t-1} \hat f_i(w_t) - \hat f_i(w^\star).
    \end{align*}
    Rearranging and taking expectation with respect to $\hat S_{t-1}$ we obtain
    \begin{align}
        \b[B]{1 - \frac{t-1}{n}} \E \sb{ f(w_t; z) - f(w^\star; z) }
        &= \E\sb{\hat f_t(w_t) - \hat f_t(w^\star)}
        + \E \sb{ \frac{1}{n} \sum_{i=1}^{t-1} \hat f_i(w^\star) - \hat f_i(w_t) }
        \nonumber \\
        &\leq \E\sb{\hat f_t(w_t) - \hat f_t(w^\star)}
        + \frac{4 G D \sqrt t}{n}
        \label{eq:iidtrain_1},
    \end{align}
    where the inequality follows from \cref{lem:star_aerm}.
    Now, by a direct computation we have
    $\sum_{t=1}^n \b[B]{1 - \frac{t-1}{n}} = \frac{n + 1}{2}$,
    which motivates setting 
    $\wbar w \eqq \frac{2}{n+1} \sum_{t=1}^n \frac{n-t+1}{n} w_t$.
    By convexity of $F$, \cref{eq:iidtrain_1}, and the standard regret analysis of gradient descent \citep[e.g., ][]{hazan2019introduction} we now have
    \begin{align*}
        \E\sb{F(\wbar w) - F(w^\star)}
        &\leq \frac{2}{n+1} \sum_{t=1}^n 
            \b[B]{1 - \frac{t-1}{n}} \E\sb{F(w_t) - F(w^\star)}
        \\
        &\leq \frac{2}{n+1} \sum_{t=1}^n 
            \E\sb{\hat f_t(w_t) - \hat f_t(w^\star)}
            + \frac{2}{n+1} \sum_{t=1}^n \frac{4 G D \sqrt t}{n}
        \\
        &\leq \frac{2}{n} \E\sb{\sum_{t=1}^n 
            \hat f_t(w_t) - \hat f_t(w^\star)}
            + \frac{8 G D }{\sqrt n}
        \\
        &\leq \frac{2}{n} \b{ \frac{D^2}{2 \eta} + \frac{\eta G^2}{2} }
            + \frac{8 G D }{\sqrt n}
        \\
        &= \frac{10 G D}{\sqrt n},
    \end{align*}
    where the last inequality follows by our choice of $\eta=\frac{D}{G\sqrt n}$. 
  \end{proof}
  Evidently, the averaging scheme dictated by \cref{thm:wrsgd_population_ub} does little to hurt the empirical risk convergence guarantee, which follows from the standard analysis with little modifications (for completeness we provide a formal statement and proof in \cref{sec:proof:wrsgd_std_ub}). Combined with \cref{lem:star_aerm}, this immediately implies a generalization gap upper bound for with-replacement SGD.
  Notably, this shows with-replacement SGD provides for an example of a (natural) algorithm in the SCO learning setup that is not even stable on-average, but nonetheless has a well bounded generalization gap. 
  We refer the reader to the discussion in \cref{sec:stability} for more details.
  
\begin{corollary}\label{cor:wrsgd_gengap}
  For any distribution $\Z$ and loss function $f\colon W \times Z \to \R$ convex and Lipschitz in the first argument, running SGD with step size and averaging as specified in \cref{thm:wrsgd_population_ub} ensures
  \begin{align*}
    \av[b]{ \E \sb[b]{ F(\wbar w) - \hF(\wbar w) } }
    \leq O(1/\sqrt n)
    .
  \end{align*}
\end{corollary}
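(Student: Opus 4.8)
The plan is to bound the two sides of the absolute value separately, using the population risk bound of \cref{thm:wrsgd_population_ub} on one side and the empirical risk guarantee for the weighted average (the one provided in \cref{sec:proof:wrsgd_std}) on the other. The common starting point is the decomposition
\begin{align*}
    \E\sb[b]{ F(\wbar w) - \hF(\wbar w) }
    = \E\sb[b]{ F(\wbar w) - F(w^\star) } + \E\sb[b]{ \hF(w^\star) - \hF(\wbar w) },
\end{align*}
which is valid because $w^\star$ is a fixed, data-independent point, so $\E[\hF(w^\star)] = F(w^\star)$.

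For the direction $\E[F(\wbar w) - \hF(\wbar w)] \leq O(1/\sqrt n)$, I would bound the first term on the right by \cref{thm:wrsgd_population_ub}, and the second term using $\hF(w^\star) - \hF(\wbar w) \leq \hF(w^\star) - \hF(\wERM)$ (as $\wERM$ minimizes $\hF$) together with \cref{lem:star_aerm}, which gives $\E[\hF(w^\star) - \hF(\wERM)] \leq 4GD/\sqrt n$. This yields an upper bound of $14GD/\sqrt n$.

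For the reverse direction $\E[\hF(\wbar w) - F(\wbar w)] \leq O(1/\sqrt n)$, I would instead invoke the empirical optimization guarantee for the reweighted average, $\E[\hF(\wbar w)] \leq F(w^\star) + O(1/\sqrt n)$. This follows by repeating the computation at the end of the proof of \cref{thm:wrsgd_population_ub} verbatim, except tracking $\hF$ rather than $F$: conditioned on $S$ and $w_t$ the point $\hat z_t$ is still $\Unif(S)$-distributed, so $\E_{\hat z_t}[\hat f_t(w_t) \mid S, w_t] = \hF(w_t)$; convexity of $\hF$ together with the standard regret bound (linear in the averaging weights $\frac{2}{n+1}\frac{n-t+1}{n}\leq \frac{2}{n}$) then gives the claimed estimate. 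Combining with $F(w^\star) \leq F(\wbar w)$ yields $\E[\hF(\wbar w) - F(\wbar w)] \leq \E[\hF(\wbar w)] - F(w^\star) \leq O(1/\sqrt n)$. Putting the two directions together gives $\av[b]{\E[F(\wbar w) - \hF(\wbar w)]} \leq O(1/\sqrt n)$.

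I do not anticipate any real obstacle here: the only mild point is confirming that the non-uniform suffix weights do not spoil the empirical convergence rate, which is immediate since the regret analysis is linear in those weights and they are bounded by $2/n$ — indeed the estimate $\frac{2}{n}\E[\sum_t \hat f_t(w_t) - \hat f_t(w^\star)] \leq O(1/\sqrt n)$ already appears inside the proof of \cref{thm:wrsgd_population_ub}, and converting it to a bound on $\E[\hF(\wbar w)]$ only requires the conditional-expectation identity above.
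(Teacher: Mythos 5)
Your proof is correct and follows essentially the same route as the paper: the paper bounds $\av[b]{\E[F(\wbar w) - \hF(\wbar w)]}$ by the triangle inequality over the three-term telescoping decomposition through $w^\star$ and $\wERM$, invoking exactly the same ingredients you use (\cref{thm:wrsgd_population_ub}, \cref{lem:star_aerm}, and the weighted regret bound for the empirical objective from \cref{sec:proof:wrsgd_std}). Your handling of the two directions of the absolute value separately, and your choice of $w^\star$ rather than $\wERM$ as the comparator in the empirical regret bound, are only cosmetic variations on the paper's argument.
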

\begin{proof}
    We have; 
    \begin{align*}
        \av[b]{\E \sb[b]{ F(\wbar w) - \hF(\wbar w) }}
        \leq 
            \av{ \E \sb{ F(\wbar w) - F(w^\star) } }
            + \av[b]{ \E \sb[b]{ \hF(w^\star) - \hF(\wERM)} }
            + \av[b]{ \E \sb[b]{ \hF(\wERM) - \hF(\wbar w)} }
        .
    \end{align*}
    The first term is upper bounded by convergence of the population risk provided by \cref{thm:wrsgd_population_ub}, the second by \cref{lem:star_aerm}, and the third by the standard analysis of SGD (see \cref{sec:proof:wrsgd_std_ub}).
\end{proof}

\section{Multi-epoch SGD for empirical risk minimization}
\label{sec:multiepoch}
In this section, we forgo the existence of a population distribution and discuss convergence properties of without-replacement SGD (wor-SGD) for finite sum optimization problems.
A relatively long line of work discussed in the introduction studies this problem in the \emph{smooth} case.
The work of \cite{nagaraj2019sgd} noted smoothness is a necessary assumption to obtain rates that are strictly better than the $O(1/\sqrt {n K})$ guaranteed by with-replacement SGD for $n$ losses and $K$ epochs, due to a lower bound that follows from the deterministic case (e.g., \cite{bubeck2015convex}). Here we establish that smoothness is in fact necessary to obtain rates that are not \emph{strictly worse} than with-replacement SGD.
We consider running multiple passes of wor-SGD to solve the finite sum optimization problem given by the objective
\begin{align}
    F(w) \eqq \frac{1}{n} \sum_{t=1}^n f(w; t)
    \label{eq:worsgd_objective}
\end{align}
where $\cb{f(w; t)}_{t=1}^n$ is a set of $n$ convex, $G$-Lipschitz losses defined over a convex and compact domain $W \subseteq \R^d$. 
Throughout this section we let $w^\star \eqq \min_{w\in W} F(w)$ denote the minimizer of the objective \cref{eq:worsgd_objective}. 
In every epoch $k\in [K]$ we process the losses in the order specified by a permutation $\pi_k: [n] \leftrightarrow [n]$ sampled uniformly at random, either once in the beginning of the algorithm (single-shuffle), or at the onset of every epoch (multi-shuffle). Multi-epoch wor-SGD initialized at $w_1^1 \in W$ is specified by the following equations;
\begin{align*}
    w_{t+1}^k &\gets \Pi_W ( w_t^k - \eta g_t^k),
    \;\; \text{where } g_t^k \in \partial f_t^k(w_t^k)
    \\
    w_1^{k+1} &\eqq w_{n+1}^k,
\end{align*}
where we denote $f_t^k(w) \eqq f(w; \pi_k(t))$.
A near-immediate implication of \cref{thm:lb_main} is that there exists a set of convex losses on which a single epoch of wor-SGD cannot converge at a rate faster than $1/n^{1/4}$.
\cref{thm:melb_main} presented below extends our basic construction from \cref{thm:lb_main} to accommodate multiple epochs. The main challenge here is in devising a mechanism that will allow fresh bad gradient steps to take place on every new epoch.
\begin{theorem}\label{thm:melb_main}
    Let $n, K \in \N$, $K \geq 4, n \geq 4$, $c\eqq 4/(2^{1/K} - 1)$,
    $d \geq 2^{6n \log(c n K)} $, and $W=\B_0^{d'}(1)$ where 
    $d' = (n K + 1)d$.
    Then there exists a set of $n$ convex, $4$-Lipschitz losses such that after $K$ epochs of either multi-shuffle or single-shuffle SGD initialized at $w_1^1=0$ with step size $\eta \leq 1/\sqrt{2 n K}$, it holds that 
  \begin{align*}
      \E \sb{ F(\hat w) - F(w^*) }
      = \Omega\b{\min\cb{1, \eta \sqrt{\frac{n}{J}}   + \frac{1}{\eta n K} + \eta}},
  \end{align*}
  where $\hat w$ is any suffix average of the last $J$ epochs.
  In particular, we obtain a bound of $\Omega\b{n^{-1/4}K^{-3/4}}$ 
  for any suffix average and any choice of $\eta$.
\end{theorem}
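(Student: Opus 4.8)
A full proof is deferred to \cref{sec:proof:melb_main}; here we outline the approach and its main difficulty. The plan is to lift the single-epoch construction of \cref{thm:lb_main} to the multi-epoch setting by giving each of the $nK$ gradient steps its own ``arena'' of coordinates, so that on every step of every epoch the subgradient oracle can install a fresh bad subgradient in the spirit of \cref{thm:lb_main}. Concretely, split the $d' = (nK+1)d$ coordinates of $W = \B_0^{d'}(1)$ into $nK+1$ blocks of size $d$: one ``scratch'' block for a flagging/bookkeeping gadget, and one block $B_{k,t}$ for each $(k,t) \in [K]\times[n]$. The $n$ losses are built, as in \cref{thm:lb_main}, from random $\delta$-sparse vectors in $\cb{0,1}^d$ (drawn independently in each relevant block), with $\delta$ tiny relative to $n$ and $d$ huge --- this is where the requirement $d \ge 2^{4n\log(cnK)}$ enters, since the ``prefix of ones / suffix of zeros'' pattern must now survive not over $n$ rounds but over all $nK$ slots of the run. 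The loss keeps the form $f(w;z) = \norm{z\odot w} + \phi(w;z)$, with $\phi$ engineered so that from the current iterate alone the oracle can: (i) identify which phase $(k,t)$ the run is in --- the epoch index must be encoded implicitly in $w$, since in single-shuffle the same loss recurs and the oracle is a deterministic function of $w$; (ii) locate, on the current epoch's sample, a flagged coordinate $i_{k,t}\in B_{k,t}$ on which every loss seen so far in epoch $k$ carries a $1$ while every later loss of epoch $k$ carries a $0$; and (iii) install $g \approx e_{i_{k,t}} \in \partial f(w;z)$, driving $w(i_{k,t})$ to $-\eta$ so that no later step of epoch $k$ reverses it, while simultaneously correcting the analogous coordinate $i_{k-1,t}$ planted at step $t$ of the previous epoch.

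Conditioning on the high-probability event that these flagged coordinates exist at every step of every epoch with the required zero-suffix, the dynamics inside each epoch replays \cref{thm:lb_main}, and, crucially, each planted coordinate stays at $-\eta$ for exactly one epoch's worth of iterates --- it is corrected at the matching step of the following epoch (in multi-shuffle a fresh permutation supplies fresh prefixes directly). Thus every iterate carries $\approx n$ coordinates at $-\eta$ and is itself $\Omega(\eta\sqrt n)$-suboptimal, but a suffix average $\hat w$ over the last $J$ epochs dilutes each planted coordinate by a $1/J$ factor; since the blocks used across epochs are disjoint, $\norm{\hat w}^2 = \Omega\b{nJ \cdot (\eta/J)^2} = \Omega\b{\eta^2 n / J}$. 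Then, exactly as in the proof of \cref{thm:lb_main} --- where the norm component of $F$ at $\hat w$ is of order $\norm{\hat w}$ while $F(w^\star) \approx 0$ --- we conclude $\E\sb{F(\hat w) - F(w^\star)} = \Omega(\eta\sqrt{n/J})$.

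The other two terms are produced on disjoint blocks and added in: the $\Omega(1/(\eta nK))$ term is the classical step-size-dependent lower bound for subgradient descent on a nonsmooth convex function with $T = nK$ steps (the $D^2/(\eta T)$ part of the standard regret bound is order-tight), and the $\Omega(\eta)$ term comes from an elementary nonsmooth construction on which every iterate, hence any suffix average, remains $\Omega(\eta)$-suboptimal (for $J \le n$ it is anyway subsumed by the first term). Taking the maximum over the three contributions, capped at the domain diameter $\Omega(1)$, gives the stated bound; the explicit $\Omega(n^{-1/4}K^{-3/4})$ follows for every $J \le K$ and every $\eta > 0$ since $\eta\sqrt{n/J} + 1/(\eta nK) \ge \eta\sqrt{n/K} + 1/(\eta nK) \ge 2n^{-1/4}K^{-3/4}$ by AM--GM, and the right-hand side is at most $1$.

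The main obstacle, carrying essentially all the technical weight, is the gadget design behind (i)--(iii): a single deterministic, convex, $4$-Lipschitz subgradient oracle must at once track the phase $(k,t)$ across all $nK$ steps from the iterate alone; find a usable flagged coordinate on the current epoch's sample --- which in single-shuffle is a reshuffling of the same $n$ datapoints rather than a fresh draw, so the independence of ``past'' and ``future'' that \cref{thm:lb_main} exploits has to be re-established within each epoch; and, most delicately, make the opening steps of each epoch cleanly reset the previous epoch's plants without corrupting the bookkeeping coordinates or inflating the Lipschitz constant. Showing that the governing high-probability event covers all $nK$ steps simultaneously is exactly what forces the dimension to be exponential in $n$ and polynomial in $K$, and pushing the construction through while preserving convexity and the global $4$-Lipschitz bound is the crux of the proof.
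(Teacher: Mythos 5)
Your outer skeleton matches the paper's: the bound is assembled from three separate constructions (the main $\Omega(\eta\sqrt{n/J})$ term, a standard $\Omega(1/(\eta nK))$ step-size term, and an $\Omega(\eta)$ term, concatenated dimension-wise and capped at the diameter), the suffix average dilutes each planted coordinate by $1/J$ so that $nJ$ coordinates of magnitude $\eta/J$ yield $\Omega(\eta\sqrt{n/J})$, and the AM--GM step recovering $n^{-1/4}K^{-3/4}$ is exactly right. However, the central mechanism --- the single piece you yourself flag as ``carrying essentially all the technical weight'' --- is left unconstructed, and the mechanism you sketch differs from the paper's in ways that matter. The paper allocates the $nK$ auxiliary blocks not as per-step arenas for the bad coordinates but as a verbatim \emph{recording} of every sample processed so far: the gradient oracle writes $-\epsilon(1+z)$ into the $\tau$-th block at global step $\tau$, and the loss-bearing bad coordinates all live in a \emph{single} $d$-dimensional block. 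From the recording, the oracle selects at round $t$ of epoch $k$ the coordinate $i_t^k\in[d]$ maximizing the count of ones seen so far \emph{in the current epoch}; a simple induction shows that at most $k-1$ of the first $K$ prefix-of-ones coordinates for round $t$ have been used in earlier epochs, so a fresh one is always available. The probabilistic core (the paper's \cref{lem:multiepoch_lb_main}) is that one can choose $\delta = 1/(cn^2K)$ and $d\geq 2^{4n\log(cnK)}$ so that, for every $t$, the first $K$ prefix-of-ones coordinates \emph{all} have zero suffixes, simultaneously with probability $1/2^{1/K}$ per epoch --- and a permutation-invariance argument converts this i.i.d.\ statement into one about a fixed $n$-element dataset under random shuffling, which is precisely how the single-shuffle obstruction you worry about (loss of past/future independence) is resolved. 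This is the missing idea in your proposal.

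Moreover, your proposed dynamics --- each planted coordinate is ``corrected at the matching step of the following epoch'' --- is a genuine departure that the paper deliberately avoids, and it is doubtful it can be realized. Unplanting a coordinate sitting at $-\eta$ in one gradient step requires a subgradient of magnitude $\approx 1$ at that coordinate pointing the right way, on top of the $+1$ subgradient planting the new coordinate, while keeping $f$ convex and globally $4$-Lipschitz; the penalty term $\norm{z\odot w}$ supplies only a normalized (typically tiny) pull back toward zero. The paper never corrects anything: planted coordinates stay at $-\eta$, each epoch draws on fresh coordinates from the depth-$K$ pool, and the suffix-average bound uses only the $n/4$ within-epoch iterates of each coordinate from the last $J$ epochs. (This does mean the iterates accumulate up to $nK$ coordinates at $-\eta$, which your correction scheme was presumably designed to prevent; but that is handled by the norm budget, not by resetting.) Finally, your attribution of the dimension blow-up to the prefix/suffix pattern having to ``survive over all $nK$ slots'' is not quite the right mechanism: each epoch still only has $n$ rounds, and the exponential dimension comes from needing $K$ \emph{distinct} such coordinates per round position, via $(1-\delta)^{nK}$ forcing $\delta\sim 1/(cn^2K)$ and $d\gtrsim \delta^{-n}$. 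As it stands, the proposal is an accurate road map with the destination correctly marked, but the bridge --- the recording gadget, the depth-$K$ coordinate pool, and the permutation-invariance reduction --- is not built.
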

The proof of \cref{thm:melb_main} is provided in \cref{sec:proof:melb_main}. 
The construction in the proof takes the idea that the training set can be encoded in the SGD iterate to the extreme. The loss function and gradient oracle are designed in such a way so as to record the training examples in their full form and order into the iterate. We then exploit this encoded information with an ``adversarial'' gradient oracle that returns the bad sub-gradients on each gradient step in every new epoch.

Next, we complement \cref{thm:melb_main} with an upper bound that builds on stability arguments similar to those of the smooth case \citep{nagaraj2019sgd}. Importantly though, lack of smoothness means worse stability rates and necessitates extra care in the technical arguments. Below, we prove the multi-shuffle case, and defer the full details for the single-shuffle case to \cref{sec:proof:mesgd_ub}. 
  \begin{theorem}\label{thm:mesgd_ub}
    Let $S = \cb{f(w; t)}_{t=1}^n$ be a set of $n$ convex, $G$-Lipschitz losses over a convex and compact domain $W \subseteq \R^d$ of diameter $D$, and consider running $K\geq 1$ epochs of wor-SGD over $S$. Then, 
    we have the following guarantees:
    \begin{enumerate}[label=(\roman*),nosep]
        \item For multi-shuffle, with step-size $\eta = D/ (G n^{3/4}K^{1/2} )$, we have
        \begin{align*}
            \E \sb{ F(\hat w) - F(w^\star) }
            \leq \frac{3 G D}{n^{\fraci{1}{4}} K^{1/2}}.
        \end{align*}
        
        \item For single-shuffle, with step-size $\eta = D/ (2 G n^{3/4}K^{3/4} )$ and assuming $K \geq n$, we have
        \begin{align*}
            \E \sb{ F(\hat w) - F(w^\star) }
            \leq \frac{10 G D}{n^{1/4} K^{1/4}}.
        \end{align*}

    \end{enumerate}
    In both of the above bounds, $\hat w = \frac{1}{n K}\sum_{k\in[K], t\in[n]} w_t^k$, and the expectation is over the random permutations of losses.
  \end{theorem}
  
    \begin{proof}[ (multi-shuffle case)]
    Observe;
    \begin{align*}
        F(\hat w) - F(w^\star)
        &\leq \frac{1}{n K}\sum_{k=1}^K \sum_{t=1}^n F(w_t^k) - F(w^\star)
        \\
        &= \frac{1}{n K}\sum_{k=1}^K \sum_{t=1}^n F(w_t^k) - f_t^k(w^\star)
        \\
        &= \frac{1}{n K}\sum_{k=1}^K \sum_{t=1}^n F(w_t^k) - f_t^k(w_t^k)
        + \frac{1}{n K}\sum_{k=1}^K \sum_{t=1}^n f_t^k(w_t^k) - f_t^k(w^\star)
        \\
        &\leq
        \frac{1}{n K}\sum_{k=1}^K \sum_{t=1}^n F(w_t^k) - f_t^k(w_t^k)
        + \frac{D^2}{ 2 \eta n K} + \frac{\eta G^2}{2},
    \end{align*}
    with the last inequality following from the standard $n K$ round regret bound for gradient descent \citep[see e.g.,][]{hazan2019introduction}. 
    To bound the other term, using \cref{lem:stab_gen}, we relate the difference between the without-replacement loss distribution and the full batch objective to the uniform stability rate of SGD, which may then be bounded by applying \cref{lem:sgd_stab}:
    \begin{align*}
        \E \sb{ F(w_t^k) - f_t^k(w_t^k)) }
        &=
        \E_{\pi_1, \ldots, \pi_{k-1}} \E_{\pi_k} \sb{ F(w_t^k) - f_t^k(w_t^k) \mid w_1^k }
        \\
        &\leq 
        \E_{\pi_1, \ldots, \pi_{k-1}} \sb{ G \epstabSGD(t-1)}
        \\
        &= G \epstabSGD(t-1)
        \\
        &\leq 2 \eta G^2 \sqrt t.
    \end{align*}
    Concluding, we have that
    \begin{align*}
        \E \sb{ F(\hat w) - F(w^\star) }
        &\leq
        \frac{1}{n K}\sum_{k=1}^K \sum_{t=1}^n\E \sb{  F(w_t^k) - f_t^k(w_t^k) }
        + \frac{D^2}{ 2 \eta n K} + \frac{\eta G^2}{2}
        \\
        &\leq
        \frac{2}{n K}\sum_{k=1}^K \sum_{t=1}^n\eta G^2 \sqrt t
        + \frac{D^2}{ 2 \eta n K} + \frac{\eta G^2}{2}
        \\
        &\leq
        2 \eta G^2 \sqrt n
        + \frac{D^2}{ 2 \eta n K} + \frac{\eta G^2}{2}
        \\
        &\leq \frac{3 G D}{n^{1/4} K^{1/2}}
        ,
    \end{align*}
    where the last inequality follows from our choice of $\eta = D/ (G n^{3/4}K^{1/2} )$.

\end{proof}

\subsection*{Acknowledgements and funding disclosure}
This work was supported by the European Research Council (ERC) under the European Union’s Horizon 2020 research and innovation program (grant agreement No.~882396), by the Israel Science Foundation (grants number 993/17, 2549/19, 2188/20), by the Len Blavatnik and the Blavatnik Family foundation, by the Yandex Initiative in Machine Learning at Tel Aviv University, 
by a grant from the Tel Aviv University Center for AI and Data Science (TAD),
and by an unrestricted gift from Google. Any opinions, findings, and conclusions or recommendations expressed in this work are those of the author(s) and do not necessarily reflect the views of Google.


\bibliography{main}

\newpage

\appendix 
\section{Relations to Algorithmic Stability of SGD}
\label{sec:stability}

In this section, we formally introduce notions of algorithmic stability and relate them to results presented in the paper.
Let $Z$ denote a set of datapoints and $\Z$ a distribution over $Z$.
For two training sets $S, S' \in Z^n$, we write $S \simeq S'$ if they differ in exactly one datapoint.
For a learning algorithm $\A \colon Z^* \to \R^d$, we define the \emph{uniform argument stability} (UAS) of $\A$ by
\begin{align}\label{eq:uas_A}
    \epstabA(n) \eqq \max_{S \simeq S', |S| = n} 
    \normb{\A(S) - \A(S')},
\end{align}
and the \emph{average argument stability} (AAS) of $\A$ by 
\begin{align}
    \epavgstabA(n) \eqq \max_{\Z} \cb{ \frac{1}{n} \sum_{i=1}^n \E_{S \sim \Z^n, z_i' \sim \Z} 
    \normb{\A(S) - \A(S^{(i)})}
    },
\end{align}
where $S^{(i)}$ is formed by taking $S$ and replacing $z_i$ with $z_i'$. 

It is well known (e.g., \cite{bousquet2002stability, shalev2010learnability}) that for any distribution $\Z$ and algorithm $\A$, the following relations holds between the generalization gap, AAS, and UAS;
\begin{align}
    \av{ \E_{S \sim \Z^n}\sb[b]{F(\A(S)) - \hF(\A(S))} }
    \leq
    \epavgstabA(n)
    \leq 
    \epstabA(n)
    \label{eq:stab_gen}
    .
\end{align}
In \cite{bassily2020stability} it was established the UAS of both with and without-replacement SGD is $\Omega(\eta\sqrt n)$ for $n$ steps of size $\eta$ with $n$ training examples. However,
considering \cref{eq:stab_gen}, it remained unclear whether the AAS and generalization gap of these algorithms exhibit rates of similar order, in which case the UAS accurately captures the rate of the generalization gap. 
Interestingly, the answer to this question depends on whether sampling is done with or without replacement, as we discuss next.

\paragraph{Stability of without-replacement SGD.}
As an immediate corollary of our \cref{thm:lb_main}, we have that the AAS of without-replacement SGD is also $\Omega(\eta\sqrt n)$. This follows from \cref{eq:stab_gen} and since the theorem establishes the generalization gap to be $\Omega(\eta \sqrt n)$.
Similarly, the lower bound given by \cref{thm:sc_lb_main} demonstrates that the AAS of SGD in the strongly convex case is $\Omega(1/\lambda \sqrt n)$. Combined with the naive upper bound argument for uniform stability of $O(1/\lambda \sqrt n)$ (which follows by convergence of SGD iterates to the minimizer in parameter space), we get a tight characterization of stability for strongly convex losses for the standard step size schedule.


\paragraph{Stability of with-replacement (one-pass) SGD.}
In \cref{sec:wrsgd}, specifically in \cref{cor:wrsgd_gengap}, we establish a generalization gap of $O(1/\sqrt n)$ for with-replacement SGD with a particular averaging scheme and a properly tuned step size.
However, as it turns out, the average argument stability of this version of SGD is nonetheless of order $\Omega(\eta \sqrt n)$ as we demonstrate in \cref{thm:wrsgd_stab_lb} below.
This shows that this version of with-replacement SGD is an algorithm that is not stable in any sense but the most general one (namely, the one being equivalent to the generalization gap).

\begin{theorem}\label{thm:wrsgd_stab_lb}
    Let $n \in \N$, $n \geq 200$, $d \geq 2^{3 n}$, $W=\B_0^{d}(1)$. Further, let $\{\beta_t\}_{t=1}^n$ be an iterate averaging scheme that does not decay too quickly;
    $\sum_{s=t+1}^n \beta_n \geq C((n-t)/n)^2$ for some constant $C > 0$.
    Then there exists a distribution over the instance set $Z = \{0, 1\}^d$
    and a $2$-Lipschitz, convex loss function $f\colon W \times Z \to \R$ such that for all $k\in [n]$,
    \begin{align*}
        \E_{S \sim \Z^n, z_k' \sim \Z} 
        \normb{\hat w - \hat w^{(k)}}
        \geq \Omega(\eta\sqrt n),
    \end{align*}
    where $\hat w, \hat w^{(k)}$ denote the $\{\beta_n\}$-averaged iterates $\{w_t\}, \{w_t'\}$ of $n$ with-replacement SGD iterations (initialized at $w_1' = w_1 = 0$) with step size $\eta > 0$ over the training sets $S$ and $S^{(k)}$ respectively;
    \begin{align*}
        \hat w \eqq \sum_{t=1}^n\beta_n w_t;
        \quad 
        \hat w^{(k)} \eqq \sum_{t=1}^n\beta_n w_t'.
    \end{align*}
\end{theorem}

Note that the averaging scheme employed in \cref{thm:wrsgd_population_ub} decays sufficiently slow so as to satisfy requirements of \cref{thm:wrsgd_stab_lb}, hence the stability lower bound follows.

\begin{proof}
    Let $\Z$ be defined by $z(i) \sim \mathrm{Ber}(1/2)$, and set
    \begin{align*}
        f(w; z) := - \epsilon \sum_{i=1}^d \alpha_z(i) w(i) + \max_{i \in [d]} \{ w(i)\}.
    \end{align*}
    We will take $\epsilon \eqq \beta_n/(16n^3 d)$, and define
    \begin{align*}
        \alpha_z(i) = \begin{cases}
            -n \quad & z(i) = 1,
            \\
            1 \quad & z(i) = 0.
        \end{cases}
    \end{align*}
    In addition, let
    \begin{align*}
        I(w) \eqq \argmin_{i\in[d]} \left\{ i \mid w(i) = \max_j \{w(j)\} \right\},
    \end{align*}
    set
    \begin{align}
        i_t \eqq I(w_t),
        \quad
        i_t' \eqq I(w_t'),
    \end{align}
    and define 
    \begin{align}\label{eq:wrsgd_stab_g}
        g(w; z) \eqq -\epsilon \alpha_z + e_{I(w)},
    \end{align}
    where $e_i$ denotes the $i$'th standard basis vector. It follows that $g(w; z) \in \partial f(w; z)$ and $\norm{g(w; z)} \leq n d \epsilon + 1 \leq 2$ for all $z\in Z, w \in W$.
    Proceeding, we denote 
    \begin{align*}
        S = \{ z_1, \ldots, z_n \},
        \quad
        S' = \{ z_1', \ldots, z_n' \},
    \end{align*}
    and note that $z_l = z_l'$ for all $l \neq k$.
    Further, we denote the training examples sampled by SGD by
    \begin{align*}
        \hat z_t \eqq z_{k_t} \in S,
        \quad
        \hat z_t' \eqq z_{k_t}' \in S^{(k)},
    \end{align*}
    where $\{k_t\} \sim \Unif[n]$ are uniformly random and independent training indices. 
    For the remainder of the proof, we condition on the event 
    \begin{align}
        \mathcal E
         = \{ (z_1, \ldots, z_n, z_k')
         \mid 
         Z(S) \geq n,
         \;\; \text{and} \;\;
         Z(S') \geq n
        \},
        \label{eq:stab_good_event}
    \end{align}
    where $Z(S) \eqq |\{i \mid \forall r\in[n],\; z_r(i) = 0\}|$ and similarly $Z(S') \eqq |\{i \mid \forall r\in[n],\; z_r'(i) = 0\}|$.
    Owed to our assumption that $d \geq 2^{3n}$, a standard concentration argument shows this event occurs with probability $\geq 1/2$.
    
    We will now proceed to track how the SGD iterates evolve. Observe that for all $t\in [n]$, we have by direct computations of the gradient steps with \cref{eq:wrsgd_stab_g};
    \begin{align*}
        i\notin \{i_1, \ldots, i_t\}
        &\implies 
        w_{t+1}(i) = \eta \epsilon \sum_{s=1}^t \alpha_{\hat z_s}(i)
        \\
        &\implies
        \begin{cases}
            w_{t+1}(i) = \eta \epsilon t \quad &\forall s \leq t,\; \hat z_s(i) = 0,
            \\
            w_{t+1}(i) \in [-\eta\epsilon n^2, 0] \quad &\exists s \leq t,\; \hat z_s(i) = 1.
        \end{cases}
    \end{align*}
    In addition, from similar computations; 
    \begin{align*}
        i \in \{i_1, \ldots, i_t\}
        &\implies 
        w_{t+1}(i) \leq -\eta + \eta \epsilon n
        .
    \end{align*}
    Summarizing, and applying identical calculations for $w_{t+1}'$, we have:
    \begin{alignat}{3}
        t < s
        &\implies 
            w_s(i_t) &\leq -\eta + \eta \epsilon n,
        \quad
        i \notin \{i_1, \ldots, i_n\}
        &\implies 
        \forall s, \; 
        w_s(i) &\in [-\eta\epsilon n^2, \eta \epsilon n]
        \nonumber \\
        t < s 
        &\implies 
            w'_s(i'_t) &\leq -\eta + \eta \epsilon n,
        \quad
        i' \notin \{i'_1, \ldots, i'_n\}
        &\implies 
        \forall s,\; w'_s(i') &\in [-\eta\epsilon n^2, \eta \epsilon n].
        \label{eq:wrsgd_iters}
    \end{alignat}
    By \cref{eq:wrsgd_iters} above, for all $t\in[n]$ we have;
    \begin{align}
        \hat w (i_t)
        = \sum_{s=1}^n \beta_s w_s(i_t)
        &\leq 
            \sum_{s=1}^t \beta_s \eta s \epsilon
            - \sum_{s=t+1}^{n} \beta_s (\eta - \eta \epsilon n)
        \nonumber \\
        &\leq
        - \eta \sum_{s=t+1}^{n} \beta_s  
        + \eta n \epsilon \sum_{s=1}^n \beta_s 
        \leq -\frac{3}{4}\eta \sum_{s=t+1}^{n} \beta_s,
        \label{eq:wrsgd_wit}
    \end{align}
    where the last inequality follows from our choice of $\epsilon$.
    In addition, if $i_t \notin \{i_1', \ldots i_n' \}$, again by \cref{eq:wrsgd_iters} and our choice of $\epsilon$ it follows that;
    \begin{align}
        \hat w^{(k)}(i_t)
        = \sum_{s=1}^n \beta_s w'_s(i_t)
        \geq \sum_{s=1}^n \beta_s \eta n^2 \epsilon
        \geq - \frac{\eta}{4} \beta_n.
        \label{eq:wrsgd_wit_prime}
    \end{align}
    Now, set $t_0 \eqq \min\{t \mid k_t = k\}$ to be the first time that index $k$ (in which training examples differ) is chosen, and let $t > t_0$.
    Note that $z'_k(i_t) = 1$ implies $i_t \notin \{ i'_1, \ldots, i_n'\}$; to see this, observe that for $\tau\leq t_0$, $i'_\tau = i_\tau$, while $\tau > t_0$ implies $i'_\tau \neq i_t$, since 
    the event \cref{eq:stab_good_event} we condition on ensures
    \begin{align*}
        i_t = \min \{ i \in [d] \mid \forall s < t,\; \hat z_s(i) = 0  \},
        \\
        i_t'= \min \{ i \in [d] \mid \forall s < t,\; \hat z'_s(i) = 0  \}.
    \end{align*}
    (From the above it also must hold that $i_t \neq i_\tau, i'_t \neq i'_\tau$ for all $t \neq \tau$.)
    Thus, putting together \cref{eq:wrsgd_wit}, \cref{eq:wrsgd_wit_prime} and the fact that $z'_k(i_t) = 1$ implies $i_t \notin \{ i'_1, \ldots, i_n'\}$, we obtain for all $t_0 < t$;
    \begin{align*}
        |\hat w(i_t) - \hat w^{(k)}(i_t)|
        \geq 
            \Ind{z'_k(i_t) = 1}
            \frac{\eta}{2} \sum_{s=t+1}^n \beta_s
        \geq 
            \Ind{z'_k(i_t) = 1}
            \frac{\eta C (n-t)^2}{2 n^2}
        ,
    \end{align*}
    where the second inequality follows from our assumption on $\{\beta_t\}$.
    Thus, for $t_0 < t\leq 3n/4$, we get that 
    \begin{align*}
        |\hat w(i_t) - \hat w^{(k)}(i_t)|
        &\geq \Ind{z'_k(i_t) = 1}
        \frac{C}{4} \eta,
    \end{align*}
    and taking expectations we obtain;
    \begin{align*}
        \E \sb{ \norm{\hat w - \hat w^{(k)}} \mid t_0 }
        &\geq \E \sb{ \sqrt {
            \sum_{t=1}^n (\hat w(i_t) - \hat w'(i_t))^2
        } \mid t_0}
        \\
        &\geq \frac{C \eta}{4}  \E \sb{ \sqrt { \sum_{t=t_0+1}^{3n/4}
            \Ind{z'_k(i_t) = 1}
        }\mid t_0}.
    \end{align*}
    Now, observe that $z_k'$ is independent of $i_t$ for all $t$, hence the expectation above is of the form
    \begin{align*}
        \E \sqrt{\sum_{l = 1}^{m} Y_l},
    \end{align*}
    where $m \eqq 3n/4-t_0$ and $Y_l \sim \Ber(1/2)$ are independent.
    Thus,
    \begin{align*}
        \Pr \b{ m/2 - \sum_{l=1}^m Y_t > m/4 }
        \leq e^{-m/16} \leq 1/2,
    \end{align*}
    for $m > 100$, and then 
    \begin{align*}
        \E \sb{ \norm{\hat w - \hat w^{(k)}} \mid t_0 }
        \geq \frac{C \eta}{4}\frac{1}{2} \sqrt{ 3n/4 - t_0}
        =
        \frac{C \eta}{8} \sqrt{ 3n/4 - t_0}
        .
    \end{align*}
    To conclude, we use the fact that $t_0$ follows a geometric distribution with parameter $1/n$, therefore 
    \begin{align*}
        \Pr(t_0 \leq n/2) = \frac{1}{n}\sum_{t=1}^{n/2} (1 - 1/n)^t
        = 1 - (1 - 1/n)^{n/2}
        \geq 1 - e^{-1/2}
        \geq 1/3.
    \end{align*}
    This implies,
    \begin{align*}
        \E \norm{\hat w - \hat w^{(k)}}
        \geq \frac{C \eta}{32} \sqrt{ 3n/4 - n/2}
        = \frac{C \eta}{64} \sqrt{n},
    \end{align*}
    and completes the proof.
\end{proof}

\section[Main LB]{Proof of~\cref{thm:lb_main}}
\label{sec:proof:lb_main}
Our first proof below applies for step sizes $\eta \leq 1/\sqrt n$. The extension for larger step sizes is rather technical and requires care of the projection step --- we provide it in \cref{sec:proof:main_lb_proj}. 
The statement of \cref{thm:lb_main} is repeated below for the case of the small step size regime.
\begin{theorem}[Small step size case of \cref{thm:lb_main}]
    Let $n \in \N$, $n \geq 4$, $d \geq 2^{4 n\log n}$, and $W=\B_0^{2d}(1)$.
    Then there exists a distribution over instance set $Z$
    and a $4$-Lipschitz convex loss function $f\colon W \times Z \to \R$ such that 
    \begin{enumerate}[label=(\roman*)]
        \item the optimization error is large;
        \begin{aligni*}
            \E_{S \sim \Z^n}
            \sb{ \hF(\hw) - \hF(\wERM)} 
            = \Omega \b{ \eta \sqrt n },
        \end{aligni*}
        \item the generalization gap is large;
        \begin{aligni*}
            \E_{S \sim \Z^n}
            \sb{ \hF(\hw) - F(\hw)} 
            = \Omega \b{ \eta \sqrt n },
        \end{aligni*}
    \end{enumerate}
    where $\hat w$ is any suffix average of SGD with step size $\eta \leq 1/\sqrt n$.
\end{theorem}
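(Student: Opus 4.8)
The plan is to exhibit an explicit SCO instance forcing SGD to ascend the empirical objective, using a non-smooth loss together with a subgradient oracle engineered so that at every iterate the subdifferential contains exactly the ``bad'' direction we need. I would work in dimension $2d$, writing a weight vector as $w=(x,y)$ with a \emph{data block} $x\in\R^d$ and a \emph{flag block} $y\in\R^d$, and take the population supported on patterns $z\in\{0,1\}^d$ (acting on the $x$-block) whose coordinates are i.i.d.\ $\mathrm{Ber}(\delta)$ with $\delta$ polynomially small in $n$ — concretely $\delta=n^{-4}$, so that $d=2^{4n\log n}=n^{4n}$ comfortably exceeds $\delta^{-n}$. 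The loss is $f(w;z)\eqq\norm{z\odot x}+\phi(w;z)$: the norm term charges $x$ for correlating with the observed pattern, while $\phi$ is a piecewise-linear, $3$-Lipschitz convex function vanishing at the origin whose job is (i) to record the patterns $z_1,\dots,z_{t-1}$ into the flag block along the trajectory, and (ii) at the iterate actually produced after $t-1$ steps, to expose a subgradient pointing (essentially) along $e_{i_t}$ for a \emph{fresh} coordinate $i_t\in[d]$ on which all of $z_1,\dots,z_{t-1}$ equal $1$.

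I would begin by isolating two events on the sample $S$. Let $E_{\mathrm{pref}}$ be the event that for every $t\le n$ there are at least $n$ coordinates carrying a length-$(t-1)$ prefix of ones; since that count is $\mathrm{Bin}(d,\delta^{t-1})$ with mean $d\delta^{t-1}\ge n^{4}$, a Chernoff bound plus a union over $t$ gives $\Pr[E_{\mathrm{pref}}]\ge 1-ne^{-\Omega(n^4)}$. Let $E_{\mathrm{suf}}$ be the event that every coordinate the oracle may pick at round $t$ — and by construction such a coordinate is a function of $z_1,\dots,z_t$ only, hence independent of $z_{t+1},\dots,z_n$ — satisfies $z_s(i_t)=0$ for all $s>t$; since $(1-\delta)^n\ge 1-n\delta=1-n^{-3}$, a union bound gives $\Pr[E_{\mathrm{suf}}]\ge 1-n^{-2}$, so $\Pr[E_{\mathrm{pref}}\cap E_{\mathrm{suf}}]\ge 9/10$. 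Conditioning on $E\eqq E_{\mathrm{pref}}\cap E_{\mathrm{suf}}$, I would analyze the trajectory by induction on $t$ with the invariant: $w_t$ encodes $z_1,\dots,z_{t-1}$ in the flag block, $x_t(i_s)=-\eta$ for all $s<t$, and $i_1,\dots,i_{t-1}$ are distinct. The inductive step uses the design of $\phi$ to produce $g_t\in\partial f(w_t;z_t)$ that is $\approx e_{i_t}$ on the data block (plus the flag update) with $i_t$ fresh, so $x_{t+1}(i_t)=-\eta$; since $\eta\le 1/\sqrt n$ and $\norm{w_t}=O(\eta\sqrt n)=O(1)$, the iterates stay inside $W=\B_0^{2d}(1)$ and the projection is inactive — this is precisely where $\eta\le1/\sqrt n$ enters and why the general-$\eta$ case is deferred. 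Finally $E_{\mathrm{suf}}$ guarantees that no later round touches coordinate $i_t$ (the norm term because $z_s(i_t)=0$, and $\phi$ by design), so $x(i_t)=-\eta$ for every iterate from $t{+}1$ onward, and hence $\av{\hat w(i_t)}\ge\eta/2$ for all $t$ in the first half of the averaging window of any suffix (or full) average $\hat w$.

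From here I would read off the three estimates. For the empirical risk: for $i$ among, say, the first $n/4$ indices, the prefix-of-ones property gives $z_i(i_t)=1$ for all $t>i$, so $\norm{z_i\odot\hat w}^2\ge\sum_{t}\hat w(i_t)^2=\Omega(n)\cdot\eta^2$, whence $\hF(\hat w)\ge\frac1n\sum_{i\le n/4}\norm{z_i\odot\hat w}=\Omega(\eta\sqrt n)$; against this, $\hF(\wERM)\le\hF(0)=0$ because both the norm term and $\phi$ vanish at the origin. Taking expectations and paying the $\Pr[E^c]\le1/10$ loss yields part (i). For part (ii), I would bound $\E F(\hat w)$ directly: a fresh $z\sim\Z$ is independent of $\hat w$, so $\E_z\norm{z\odot x}^2=\delta\norm{x}^2\le\delta$ and $\E_z\norm{z\odot x}\le\sqrt\delta=O(n^{-2})$, while $\E_z\phi(\hat w;z)=O(\delta)$ since $\phi(\hat w;\cdot)$ is nonzero only when the fresh $z$ hits one of the $O(n)$ flagged coordinates and is bounded there; hence $\E F(\hat w)$ is negligible relative to $\Omega(\eta\sqrt n)$, giving $\E[\hF(\hat w)-F(\hat w)]=\Omega(\eta\sqrt n)$. (Alternatively, one can combine part (i) with \cref{lem:star_aerm} and the classical SGD population bound, which is more robust for extremely small $\eta$.)

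The main obstacle, by a wide margin, is the construction of $\phi$ itself: it must be convex and $3$-Lipschitz, vanish at the origin and keep $F$ small on the iterates, and yet along the adversarially produced trajectory its subdifferential must simultaneously (a) write $z_t$ into the flag block and (b) contain $\approx e_{i_t}$ for a coordinate $i_t$ identifiable from $z_1,\dots,z_{t-1}$ alone and not previously used. Arranging that this ``read the encoded prefix, pick a fresh flagged coordinate'' logic is realized by the subgradients of a \emph{single} convex function \emph{at exactly the points the trajectory visits} — and verifying that the realized iterates indeed sit at the intended kinks — is the technical crux; everything else is concentration and bookkeeping. I expect to need auxiliary coordinates (and possibly a few extra rounds) devoted purely to encoding, and a self-contained ``gadget'' lemma isolating the one-step behavior of $\phi$.
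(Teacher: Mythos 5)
Your proposal follows the paper's construction in essentially every structural respect: the same split of $\R^{2d}$ into a data block and a flag block, the same loss decomposition $f(w;z)=\norm{z\odot x}+\phi(w;z)$, the same population of sparse Bernoulli patterns with $\delta$ polynomially small in $n$, the same two high-probability events (a coordinate with a length-$(t-1)$ prefix of ones at every round, whose identity depends only on the past and which therefore has a zero suffix), the same trajectory invariant $x_\tau(i_t)=-\eta$ for $\tau>t$, and the same final accounting ($\hF(\wERM)\le\hF(0)=0$, $F(\hat w)\le\sqrt{\delta}\cdot O(\eta\sqrt n)$, and $\Omega(n)$ surviving coordinates of magnitude $\eta/2$ in any suffix average). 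Your direct lower bound on $\hF(\hat w)$ via the first $n/4$ samples is a clean substitute for the paper's \cref{lem:loss_lb_util}, and you correctly identify that $\eta\le 1/\sqrt n$ is exactly what keeps the projection inactive.

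The genuine gap is the one you flag yourself: the function $\phi$ is never constructed, and it is the load-bearing component of the theorem — without it there is no mechanism forcing the subgradient at $w_t$ to point along a coordinate correlated with $z_1,\dots,z_{t-1}$, and the entire trajectory analysis is conditional on an object whose existence is the thing to be proved. A specification of required properties plus a promise of a ``gadget lemma'' does not discharge this. For the record, the paper's realization is simpler than your description anticipates: $\phi(w;z)\eqq-\epsilon\sum_{i=d+1}^{2d}z(i)w(i)+\max_{i\in[d]}\b{w(i)+w(i+d)}$ with $\epsilon=1/d$. The linear term is the entire ``encoder'': each gradient step adds $\epsilon\eta$ to flag coordinate $i+d$ exactly when $z_t(i)=1$, so after $t-1$ steps the flag value $w_t(i+d)=(\#\text{ones seen at }i)\cdot\epsilon\eta$ is maximized precisely by all-ones-prefix coordinates, while previously selected data coordinates sit at $-\eta$ and are thus never re-selected. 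The max term's subgradient (with minimal-index tie-breaking) then reads off this maximizer and emits $e_{I(w)}+e_{I(w)+d}$, i.e.\ your ``$\approx e_{i_t}$.'' No extra rounds or kink-placement argument is needed; the iterates land at the intended vertices of the max automatically because all non-selected data coordinates stay at $0$. One further small correction: your parenthetical alternative for part (ii) (deducing the generalization gap from part (i) via \cref{lem:star_aerm}) is \emph{less} robust for small $\eta$, not more, since it loses an additive $O(1/\sqrt n)$ that swamps $\eta\sqrt n$ when $\eta\ll 1/n$; the direct bound $F(\hat w)\le\sqrt{\delta}\cdot\eta\sqrt n$ that you (and the paper) use as the primary route is the one that works for all $\eta$.
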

\begin{proof}
    Our construction is parameterized by $\epsilon, \delta > 0$, which will be chosen later.
    We will work with the datapoints set $Z=\{0, 1\}^{2d}$ and define the distribution $\Z = \Z(\delta)$ over $Z$ by
    \begin{align*}
        \forall i \leq d; \quad 
        z(i) &= \begin{cases}
            1 \quad \text{w.p. } \delta ;
            \\
            0 \quad \text{w.p. } (1-\delta),
        \end{cases}
        \\
        \forall i > d ; \quad 
        z(i) &= z(i-d).
    \end{align*}
    Our loss function is a combination of two components; the ``push'' function $\phi$ is in charge of driving the SGD iterate towards areas in the $L_2$ ball where the ``penalty'' function $\nu$ inflicts a norm-like loss.
    \begin{align}
        \phi(w; z) &\eqq - \epsilon \sum_{i=d+1}^{2d} z(i)w(i) 
            + \max_{1 \leq i\leq d} \cb{w(i) + w(i + d)},
        \label{eq:main_phi}
        \\
        \nu_z(w) &\eqq 
        \sqrt{ \sum_{i=1}^d z(i)w(i)^2 },
        \label{eq:main_nu}
        \\
        f(w; z) &\eqq
        \phi(w; z)
        + \nu_z(w).
        \nonumber
     \end{align}
    The lower bound arguments all go through with any sub-gradient oracle $g(w; z) \in \partial_w f(w; z)$. For clarity of exposition, we make use of the gradient oracle $g$ that returns the minimal coordinate sub-gradient for the max component in $\phi$;
    \begin{align}
        g_\phi(w; z)(i) &\eqq 
        \begin{cases}
            \Ind{i = I(w)} \quad & i \leq d
            \\
            - \epsilon z(i) + \Ind{i = I(w)+d}  \quad & i \geq d
        \end{cases},
        \label{eq:main_grad}
        \\
        \text{where }
        I(w) &\eqq 
        \min\Big\{ i \in [d] \mid 
            i \in \argmax_{1 \leq j \leq d} 
            \{ w(j) + w(j+ d) \}
        \Big\}.
        \label{eq:main_grad_I}
    \end{align}
    We additionally denote the index picked by $g$ on round $t\in[n]$ by
    \begin{align}
        i_t \eqq I(w_t).
        \label{eq:main_grad_i_t}
    \end{align}
    We then set
    \begin{aligni*}
        g(w; z) 
        \eqq g_\phi(w; z) + \nabla \nu_z(w).
    \end{aligni*}
    It now follows that for all $w, z\in \R^{2d}$, $g_\phi(w; z) \in \partial_w \phi(w; z)$, thus $g(w; z) \in \partial_w f(w; z) $.
    Choosing $\epsilon=1/d$, we get that $f$ is $4$-Lipschitz; \begin{align*}
        \norm{g(w; z)}
        \leq 
        \epsilon \sqrt d
        + 2
        + \frac{1}{2\norm{w}}\norm{w}
        \leq 3 + 1 / \sqrt d
        \leq 4.
    \end{align*}
    With the above construction in place, we first claim that with sufficiently large probability, the training set will contain the desired collection of ``bad'' coordinates which will be picked up by our gradient oracle.
    Indeed, with the dimension $d$ large enough, a proper choice of $\delta$ ensures that for every $t\in[n]$, there will be a certain coordinate with a prefix of $t-1$ ones followed by a zero only suffix. 
    \begin{lemma}\label{lem:main_Z}
        For $\delta=1/4n^2$, with probability $\geq 1/2$ over the random draw of $S = \cb{z_1, \ldots, z_n} \sim \Z^n$, it holds that for all $t\in[n]$:
        \begin{enumerate}
            \item There exist prefix of ones coordinates 
            $E^{(1,t)} \eqq \cb{ j \in[d] \mid s < t \implies z_s(j) = 1} \neq \emptyset$, and
            \item the minimal such coordinate $\io_t \eqq \min \cb{j \in E^{(1,t)}}$  also has a zero suffix; $s \geq t \implies z_s(\io_t) = 0$.
        \end{enumerate}
    \end{lemma}
    From this point onward fix $\delta \eqq 1/4n^2$.
    By definition of our gradient oracle, a relatively straightforward argument given in our next lemma establishes SGD will take gradient steps precisely on those bad coordinates of \cref{lem:main_Z}.
    Notably, we have designed the construction so that these steps are made only after the samples penalizing those coordinates have been processed. This eliminates the possibility for SGD to correct these coordinates in future steps. 
    \begin{lemma}\label{lem:lb_iterates}
        We have with probability $\geq 1/2$ that for all $t \in [n]$, $i_t = \io_t$ (see Eq. \ref{eq:main_grad_i_t}), 
        and for all $\tau \in [n], \tau > t$, 
        $w_{\tau}(i_t) = -\eta$.
    \end{lemma}
    To complete the proof, we assume the event from the lemma occurs. Since it occurs with constant probability, a lower bound derived conditioned on it implies a lower bound in expectation.
    First we argue the population loss of all iterates is upper bounded as 
    \begin{align*}
        F(w_t) 
    &= - \epsilon \sum_{i=d+1}^{2d} \delta w_t(i) 
        + \epsilon n \eta 
        + \mathbb E \sb{ \sqrt{\sum_{i=1}^d z(i)w_t(i)^2 }}
    \\
    &\leq -\epsilon \delta \sum_{s=1}^{t-1} w_t(i_s) 
        + \epsilon n \eta 
        + \sqrt{\sum_{i=1}^d \mathbb E [z(i)]w_t(i)^2 }
    \\
    &= \epsilon \delta (t-1)\eta
        + \epsilon n \eta 
        + \sqrt{\delta \sum_{i=1}^d  w_t(i)^2 }
    \\
    &\leq 2 \epsilon n \eta + 
        \sqrt{\delta \sum_{s=1}^t  w_t(i_s)^2 }
    \\
    &\leq 2 \epsilon n \eta + 
        \sqrt{\delta \eta^2n}
    = \frac{2 n \eta }{d} + \frac{\eta \sqrt n}{2n}
    \leq \frac{\eta}{\sqrt n}.
    \end{align*}
    By convexity of the population loss, the above implies that any suffix average satisfies $F(\hat w) \leq \eta/\sqrt n$. 
    In addition, note that $\hF(\wERM) \leq \hF(0) \leq 0$, hence the $\Omega(\eta \sqrt n)$ bound we will now establish on the empirical risk of SGD implies our claimed optimization and generalization lower bounds.
    Indeed, let $\tau\in [n]$ and denote $\wbar w_{\tau} \eqq \frac{1}{n-\tau+2} \sum_{t=\tau}^{n+1} w_t$. 
    Observe that for $1 \leq t \leq n/2$, by \cref{lem:lb_iterates} at least half of the iterates have the $-\eta$ value in coordinate $i_t$;
    \begin{align*}
        \wbar w_{\tau} (i_t) 
        = \frac{1}{n-\tau+2}\sum_{s=\tau}^{n+1} w_s(i_t)
        &\leq \frac{1}{n-\tau+2}\sum_{s=\max\cb{\tau, n/2}}^{n+1} w_s(i_t)
        \\
        &\leq \frac{n - \max\cb{\tau, n/2} + 2}{n-\tau+2} \b{-\eta}
        \leq - \frac{\eta}{2}.
    \end{align*}
    (We ignore the fact that the last iterate, formally speaking, may have a slightly greater value due to the projection on the last step.)
    Now, for any $w\in W$,
    \begin{align*}
        \hF(w) 
        = \frac{1}{n} \sum_{s=1}^n f(w; z_s) 
        &= \frac{1}{n} \sum_{s=1}^n \phi(w; z_s)
        + \frac{1}{n} \sum_{s=1}^n \sqrt{\sum_{t=s+1}^n w(i_t)^2} 
        \\
        &\geq \frac{1}{n} \sum_{s=1}^n \phi(w; z_s) + \frac{1}{5 \sqrt n} \sum_{t=n/4}^n \av{w(i_t)},
    \end{align*}
    where the second inequality follows from \cref{lem:loss_lb_util}.
    Noting that $\phi(\wbar w_{\tau}; z_s) \geq -\epsilon^2 d n \eta \geq -\eta$ and combining the last two displays we get that
    \begin{align*}
        \hF(\wbar w_{\tau:n}) 
        \geq \frac{1}{5 \sqrt n} \sum_{t=n/4}^{n/2} \frac{\eta}{2} - \eta
        = \frac{\eta (\sqrt n - 1)}{40},
    \end{align*}
    which completes the proof.
\end{proof}

\begin{proof}[of \cref{lem:main_Z}]
    Fix $t\in[n]$, denote
    \begin{align*}
        E^{(1,t)} &\eqq \big\{i\in [d] 
            \mid z_s(i) = 1 \; \forall s < t
        \big\},
        \\
        E^{(t, 0)} &\eqq \big\{i\in [d] 
            \mid z_s(i) = 0 \; \forall s \geq t
        \big\},
    \end{align*}
    and let $\io_t \in E^{(1,t)}$ be the minimal element if it is not empty.
    Note that
    \begin{align*}
        \Pr(E^{(1,t)} = \emptyset) 
        = \Pr\b{ \forall i\in[d],\; \exists s < t,\; z_s(i) = 0} 
        = \b{ 1 - \delta^{t-1} }^d
        \leq \b{ 1 - \delta^n }^d.
    \end{align*}
    In addition, since the contents of $E^{(1,t)}$ are independent of $z_t, \ldots, z_n$, we have that for any $i \in E^{(1,t)}$,
    \begin{align*}
        \Pr(i \in E^{(t, 0)})
        = (1-\delta)^{n-t+1} \geq (1-\delta)^n.
    \end{align*}
    Therefore,
    \begin{align*}
        \Pr\b{E^{(1,t)} = \emptyset \;\text{ OR }\; 
            \b{E^{(1,t)} \neq \emptyset \;\text{ but }\; \io_t\notin E^{(t, 0)}}}
        \leq \b{ 1 - \delta^n }^d  + 1-(1-\delta)^n
        .
    \end{align*}
    Now, by the union bound over all values of $t\in[n]$ we obtain
    \begin{align}
        \Pr \b{ \forall t\in [n],\; 
            E^{(1,t)} \neq \emptyset \text{ AND } \io_t\in E^{(t, 0)}
         } 
         \geq 1 - n \b{ \b{ 1 - \delta^n }^d  + 1-(1-\delta)^n }.
        \label{eq:lem_Z_1}
    \end{align}
    Now, since $\delta=1/4n^2$ we have
    \begin{align*}
        \b{ 1 - \delta^n }^d 
        = \b{ 1 - \frac{d\delta^n}{d} }^d
        \leq e^{-d \delta^n}
        \leq \frac{1}{4 n},
    \end{align*}
    where the last inequality follows for $d \geq \delta^{-n}\log(4 n) = 4^n n^{2n}\log(4 n)$
    (recall that by the assumption in the theorem statement $d\geq 2^{4n \log n} \geq 4^{n} n^{2n} 4 n$).
    In addition;
    \begin{align*}
        (1-\delta)^n
        =
        \left( 1 - \frac{1}{4 n^2} \right)^n
        \geq 1 - \frac{1}{4 n}
        \implies 
        1 - (1 - \delta)^n
        \leq \frac{1}{4 n}.
    \end{align*}
    Back to \cref{eq:lem_Z_1}, applying the inequalities from the last two displays we obtain the desired event occurs with probability
    \begin{align}
        \geq 1 - n \b{ \b{ 1 - \delta^n }^d  + 1-(1-\delta)^n }
        \geq 1 - n \b{ \frac{1}{4 n}  + \frac{1}{4 n} }
        = \frac{1}{2},
    \end{align}
    and the result follows.
\end{proof}

\begin{proof}[of \cref{lem:lb_iterates}]
    Following a direct computation, we get that
    \begin{align}
        g(w; z)(i) &= 
        \begin{cases}
            \Ind{i = I(w)} + \frac{z(i) w(i)}{\nu_z(w)}
            \quad & i \leq d,
            \\
            \Ind{i = I(w)+d} - \epsilon z(i) \quad & i > d.
        \end{cases}
        \label{eq:main_lb_grads}
    \end{align}
    From the above we see that the value of $w(i+d)$ for every coordinate $i + d\in \cb{d+1, \ldots, 2d}$ gains $\eta\epsilon$ when $z(i) = 1$, while the value of coordinate $i$ only decreases. Thus $I(w_t)$ will be a coordinate with an all ones prefix if one exists.
    Formally, let $t\in[n]$, and observe that our gradient oracle will return the minimal coordinate $i_t\in[d]$ with the maximum value of $w_t(i_t) + w_t(i_t + d)$. 
    Assuming the event from \cref{lem:main_Z} occurs, note that the coordinate $\io_t \in[d]$ with 
    $z_1(\io_t) = \ldots = z_{t-1}(\io_t) = 1$ exists.
    Now, observe that any coordinate $j\in[d]$ is bound to satisfy
    \begin{align*}
        w_t(j) + w_t(j + d) \leq w_t(\io_t) + w_t(\io_t + d).
    \end{align*}
    To see this, note that $w_t(\io_t) = 0$, because $\io_t \neq i_s$ for all $s < t$ (formally this follows by induction). In addition, by \cref{eq:main_lb_grads};
    \begin{align*}
        \forall s < t, \; z_s(\io_t) = 1
        \implies 
        \forall s < t, \;
        -\eta g(w; z_s)(\io_t+d) = \epsilon \eta
        &\implies
        w_t(\io_t+d) = (t-1)\epsilon \eta
        .
    \end{align*}
    On the other hand, for any $j'\in[d]$ we have 
    $w_t(j')\leq 0$, and 
    $w_t(j'+d) \leq (t-1)\epsilon \eta$.
    Concluding, it follows the gradient oracle will pick $i_t = I(w_t) = \io_t$, therefore $w_{t+1}'(i_t) = -\eta$ for $w_{t+1}' \eqq w_t - \eta g_t$.
    To see that $w_{t+1} = \Pi_W(w_{t+1}') = w_{t+1}'$, note that
    by assumption $\eta\leq 1/\sqrt n$, hence
    \begin{align*}
        \norm{w_{t+1}'}^2 
        = \sum_{i=1}^{2d} w_{t+1}'(i)^2 
        \leq \sum_{t=1}^{t} w_{t+1}'(i_t)^2 
        + d(n\epsilon \eta )^2
        = \eta^2 (t + n/d)
        \leq (t + 1)/n,
    \end{align*}
    and $w_{t+1}'$ remains inside $W$ for all $t < n$.
    Finally, since the desired event occurs with probability $1/2$ by \cref{lem:main_Z}, we are done.
\end{proof}

\subsection{Lower bound for large step sizes}
\label{sec:proof:main_lb_proj}
When the step size is large, the projections actually alleviate the problematic nature of our construction, to the point where they can be exploited to obtain any convergence rate with the full iterate averaging.
Notably though, concatenating our construction with a standard lower bound (e.g., \cref{lem:std_convex_lb}) the best convergence rate possible is $n^{-1/4}$ with $\eta=n^{-1/4}$ which is the same as what would be achieved by the somewhat more reasonable choice of $\eta=n^{-3/4}$ that does not rely on the projections.
\begin{theorem}[Large step size case of \cref{thm:lb_main}]
    Let $n \in \N$, $n \geq 4$, $d \geq 2^{4 n\log n}$, and $W=\B_0^{2d}(1)$.
    Then there exists a distribution over instance set $Z$
    and a $4$-Lipschitz convex loss function $f\colon W \times Z \to \R$ such that 
    \begin{enumerate}[label=(\roman*)]
        \item the optimization error is large;
        \begin{aligni*}
            \E_{S \sim \Z^n}
            \sb{ \hF(\hw) - \hF(\wERM)} 
            = \Omega \b{ \frac{1}{\eta \sqrt n } },
        \end{aligni*}
        \item the generalization gap is large;
        \begin{aligni*}
            \E_{S \sim \Z^n}
            \sb{ \hF(\hw) - F(\hw)} 
            = \Omega \b{ \frac{1}{\eta \sqrt n } },
        \end{aligni*}
    \end{enumerate}
    where $\hat w$ is any suffix average of SGD with step size $\eta > 1/\sqrt n$.
\end{theorem}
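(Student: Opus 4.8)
The plan is to re-use the construction of the small step size case essentially verbatim; the whole difficulty of the large step regime is in redoing the iterate analysis (\cref{lem:lb_iterates}) with the projection $\Pi_W$ active, and then in the final averaging computation. So I would keep the instance space $Z=\{0,1\}^{2d}$, the distribution $\Z(\delta)$ with $\delta\eqq 1/4n^2$, the loss $f(w;z)=\phi(w;z)+\nu_z(w)$ and the sub-gradient oracle $g$ exactly as in \cref{eq:main_phi,eq:main_nu,eq:main_grad}, and initialize SGD at $w_1=0$. Then \cref{lem:main_Z} applies unchanged: with probability $\geq 1/2$ there is, for each $t\in[n]$, a distinct coordinate $\io_t\in[d]$ with $z_s(\io_t)=1$ for $s<t$ and $z_s(\io_t)=0$ for $s\geq t$. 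I condition on this event; since it has constant probability and on its complement all quantities below are at least $-2^{-\Omega(d)}$, it suffices to prove the bounds under it. The goal reduces to showing $\hF(\hat w)=\Omega(1/\eta\sqrt n)$ for an arbitrary suffix average $\hat w$: part~(i) then follows from $\hF(\wERM)\leq\hF(0)=0$, and part~(ii) from the same bound together with the easy estimate $F(\hat w)\leq\sqrt\delta\,\norm{\hat w}+n\epsilon\eta+\epsilon\sqrt d=O(1/n)$, which is $o(1/\eta\sqrt n)$ once $\eta=O(\sqrt n)$ (for larger $\eta$ one uses instead the stronger $\hF(\hat w)=\Omega(1/\sqrt n)$ noted at the end).

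The crux is the iterate analysis, and here the one new idea is that on the good event the penalty $\nu$ is always \emph{inactive} at the iterates. Indeed the only coordinates $i\leq d$ ever receiving a nonzero step are the $\io_t$'s, and each has $z_s(\io_t)=0$ for $s\geq t$, so $\nu_{z_s}(w_s)=\sqrt{\sum_{i\leq d}z_s(i)w_s(i)^2}=0$ for every $s$; hence the $\nu$-component of the sub-gradient vanishes and $g_s=e_{i_s}+e_{i_s+d}-\epsilon\sum_{i>d}z_s(i)e_i$ with $i_s\eqq I(w_s)$. The argument of \cref{lem:lb_iterates} that $I(w_s)=\io_s$ then goes through with cosmetic changes --- $w_s(\io_s)=0$, $w_s(\io_s+d)=\sum_{s'<s}\epsilon\eta\prod_{r=s'}^{s-1}\rho_r\geq 0$ where $\rho_r\eqq\min\{1,1/\norm{w_r-\eta g_r}\}$ is the round-$r$ projection factor, and every other $j$ has $w_s(j)\leq 0$ with $w_s(j)+w_s(j+d)\leq w_s(\io_s)+w_s(\io_s+d)$ --- so SGD sets $i_s\eqq\io_s$, a coordinate selected only once and carrying a zero suffix, which is thereafter only rescaled: $w_\tau(i_t)=-\eta\prod_{r=t}^{\tau-1}\rho_r\leq 0$ for $\tau>t$. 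The quantitative point is a \emph{lower} bound on those products: since $\norm{w_r}\leq 1$, $\norm{g_r}^2\leq 2+O(1/d)$ and $\langle w_r,g_r\rangle=w_r(i_r)+w_r(i_r+d)-\epsilon\sum_{i>d}z_r(i)w_r(i)\geq -O(1/\sqrt d)$, we get $\norm{w_r-\eta g_r}^2\leq 1+3\eta^2$, hence $\prod_{r=t}^{\tau-1}\rho_r\geq(1+3\eta^2)^{-(\tau-t)/2}\geq e^{-\frac32\eta^2(\tau-t)}$: a frozen bad coordinate decays, but no faster than geometrically with time-scale $\Theta(1/\eta^2)$. This is precisely the mechanism by which large step sizes are tamed, and it is the only place the projections enter.

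It then remains to lower-bound $\hF(\hat w)$. As in the small step proof, using $z_s(i_t)=1\iff t>s$, the negligible-$\epsilon$ reduction gives $\hF(\hat w)\gtrapprox\frac1n\sum_{s}\sqrt{\sum_{t>s}\hat w(i_t)^2}\geq\frac{1}{5\sqrt n}\sum_{t=n/4}^n|\hat w(i_t)|$ (the last step being \cref{lem:loss_lb_util}), so it suffices to show $\sum_{t=n/4}^n|\hat w(i_t)|=\Omega(1/\eta)$ for $\hat w=\frac{1}{|W|}\sum_{s\in W}w_s$ over any window $W=\{\tau_0,\dots,n\}$. Writing $\hat w(i_t)=-\frac{\eta}{|W|}\sum_{s\in W,\,s>t}\prod_{r=t}^{s-1}\rho_r$ and inserting the geometric bound, a short case split does it. If $|W|\geq 2/\eta^2$: for each of the $\Omega(|W|)$ indices $t\in[\max\{\tau_0,\lceil n/4\rceil\},\,n-\Theta(1/\eta^2)]$ the inner sum telescopes to $\Omega(1/\eta^2)$, so $|\hat w(i_t)|=\Omega(1/\eta|W|)$ and the sum is $\Omega(1/\eta)$. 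If $|W|<2/\eta^2$: then $\tau_0>n-2/\eta^2\geq n/2$ and the averaging window is shorter than the decay time, so every product with $s\in W$ is $\Theta(1)$; hence $|\hat w(i_t)|=\Theta(\eta)$ for the $\Theta(1/\eta^2)$ indices $t\in[\tau_0-\Theta(1/\eta^2),\tau_0)$ (all $>n/4$), and the sum is again $\Omega(1/\eta)$. Either way $\hF(\hat w)=\Omega(1/\eta\sqrt n)$.

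I expect the main obstacle to be exactly the projection bookkeeping: verifying that $I(w_s)=\io_s$ survives the per-round rescaling of all coordinates, and then the case analysis above, where one must track how the geometric decay of the frozen coordinates interacts with an averaging window of arbitrary length relative to $1/\eta^2$. The conceptual content is otherwise light --- the single new observation, $\nu_s\equiv 0$ on the good event, is what simultaneously keeps the gradients clean and stops them from pointing inward, forcing the projections to merely shrink frozen coordinates geometrically. (The narrow boundary regime $\eta\leq 2/\sqrt n$ is absorbed into a slight extension of the small step size case, and for $\eta\geq 1$ the same computation in fact yields the stronger $\Omega(1/\sqrt n)\geq\Omega(1/\eta\sqrt n)$.)
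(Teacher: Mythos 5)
Your proposal is correct and follows essentially the same route as the paper's Appendix A.1: same construction, same event from \cref{lem:main_Z}, a projection-aware iterate lemma showing each frozen bad coordinate decays only geometrically at rate $(1+\Theta(\eta^2))^{-1}$ per round (the paper's \cref{lem:lb_iterates_proj}), a case analysis on the averaging window versus the $\Theta(1/\eta^2)$ decay time-scale to get $\sum_{t=n/4}^n |\hat w(i_t)| = \Omega(1/\eta)$, and \cref{lem:loss_lb_util} to finish. The only differences are cosmetic (your case split is on $|W|$ vs.\ $2/\eta^2$ rather than on $\tau_0$ vs.\ $n/2$, and you make explicit the observation that $\nu$ is inactive along the trajectory, which the paper uses implicitly).
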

\begin{proof}
    The analysis parts ways from the small step size case after \cref{lem:main_Z}.
    Instead of \cref{lem:lb_iterates}, we make the claim below.
    \begin{lemma}\label{lem:lb_iterates_proj}
        For all $\tau\in[n]$, it holds that
      \begin{aligni*}
          t < \tau \implies w_{\tau}(i_t) \leq -\eta (1+\eta^2)^{t-\tau},
      \end{aligni*}
      where $i_t \eqq I(w_t)$ and $\tau \leq n+1$.
      In addition, for any suffix average $\hat w$, it holds that
      \begin{align*}
          \sum_{t=n/4}^n \av{\hat w(i_t)} \geq \frac{1}{20 \eta}.
      \end{align*}
    \end{lemma}
    The important consequence of the above lemma is that whichever suffix average we take, we will end up with an $\Omega(1/\eta)$ mass in the total bad coordinate summation.
    We now show this translates to an empirical risk lower bound as claimed.
    Ignoring the negligible contribution of $\epsilon$, by \cref{lem:loss_lb_util} we have;
    \begin{align*}
        \hF(w) 
        = \frac{1}{n} \sum_{s=1}^n f(w; z_s) 
        \geq \frac{1}{n} \sum_{s=1}^n \sqrt{\sum_{t=s+1}^n w(i_t)^2} 
        \geq \frac{1}{5 \sqrt n} \sum_{t=n/4}^n \av{w(i_t)}
        \geq \frac{1}{100 \eta \sqrt n},
    \end{align*}
    where the last inequality follows from \cref{lem:lb_iterates_proj}.
    This completes the proof.
\end{proof}

\begin{proof}[of \cref{lem:lb_iterates_proj}]
    For $t\in[n]$, denote 
    $w_t' \eqq w_t - \eta g(w_t, z_t)$ so that now $w_{t+1} \gets \Pi_W\b {w_t'}$.
    Informally, we have $\norm{w_t'}^2 \leq 1 + \eta^2$ for all $t$, when we ignore the negligible $\epsilon$ component. Formally,
    let 
    \begin{align*}
        \zeta_t(i) = 
        \begin{cases}
            0
            \quad & i \leq d,
            \\
            - \epsilon z_t(i) \quad & i > d,
        \end{cases}
    \end{align*}
    so that $g(w_t; z_t) = \zeta_t + e_{i_t} + e_{(i_t + d)}$, 
    and observe
    \begin{align*}
        \norm{w_{t+1}'}^2
        &= \norm{w_t - \eta \zeta_t - \eta e_{i_t} - \eta e_{(t_t + d)}}^2
        \\
        &\leq 1 + d n\eta^2\epsilon^2 + 2 \eta^2
        \\
        &\leq 1 + (n/d)\eta^2 + 2 \eta^2
        \\
        &\leq 1 + 3 \eta^2
        .
    \end{align*}
    Now, set $\gamma \eqq 3\eta^2$, let $t < \tau \in[n]$, and observe;
        \begin{align*}
            w_{\tau}(i_t) = \Pi_W\b {w_{\tau}'} (i_t)
            = \frac{w_{\tau}'(i_t)}{\norm{w_{\tau}'}}
            &\leq w_{\tau}'(i_t) (1+\gamma)^{-1}
        \end{align*}
    where the inequality follows from the norm bound and since $w_{l}'(i_t) \leq 0$ for all $l\in[d]$.
    In addition, for $\tau - 1 > t$, we have $w_{\tau}'(i_t) = w_{\tau-1}(i_t)$, hence
    \begin{align*}
        w_{\tau}'(i_t) (1+\gamma)^{-1} 
        = w_{\tau-1}(i_t) (1+\gamma)^{-1} 
        \leq \cdots \leq w_{t+1}'(i_t) (1+\gamma)^{t - \tau}
        = -\eta (1+\gamma)^{t - \tau},
    \end{align*}
    therefore,
        \begin{align}
            t < \tau \implies w_{\tau}(i_t) \leq -\eta (1+\gamma)^{t-\tau},
            \label{eq:lb_proj_1}
        \end{align}
    which proves the first part.
    For the second part, we begin by computing the values in each individual coordinate.
    \paragraph{The individual coordiantes $\boldsymbol {\hat w(i_t)}$.}
    let $\hat w$ be the average of the last $k$ iterates
    $w_{n-k+2}, \ldots, w_{n+1}$, and set $\tau_0 \eqq n - k$.
    Fix $t\in[n]$, set $l \eqq \max(\tau_0 + 1, t+1)$, and observe
    \begin{align*}
        \av{\hat w (i_t)} 
            \geq \frac{1}{k} 
                \sum_{\tau=l}^{n+1} 
                \av{ w_{\tau} (i_t) }
            \geq \frac{\eta}{k} 
                \sum_{\tau=l}^{n+1} 
                (1+\eta^2)^{t - \tau },
    \end{align*}
    where the first inequality follows since all values are negative and the second from \cref{eq:lb_proj_1}.
    We have
    \begin{align*}
        \sum_{\tau=l}^{n+1} (1+\gamma)^{t - \tau }
        &= \sum_{k=l-t}^{n+1-t} (1+\gamma)^{- k }
        \\
        &=(1+\gamma)^{- (l - t) } \b{ 1 -  (1+\gamma)^{- (n+2-l) } }
        \frac{
            1
        }{
            1 - (1+\gamma)^{-1}
        }
        \\
        &=(1+\gamma)^{t - l } \b{ 1 -  (1+\gamma)^{ l - n - 2} }
        \frac{
            (1+\gamma)
        }{
            (1+\gamma) - 1
        }
        \\
        &=(1+\gamma)^{t - l +1} \b{ 1 -  (1+\gamma)^{ l - n - 2} }
        \frac{
            1
        }{
            \gamma
        }
        \\
        & \eqqcolon (*)
        .
    \end{align*}
    Now, 
    \begin{align}
        t \leq \tau_0 
        &\implies
        (*) =
            (1+\gamma)^{t + k - n - 1} \b{ 1 -  (1+\gamma)^{ - k } }
                \frac{1}{\gamma}
        \nonumber \\
        &\implies
        \av{ \hat w(i_t) } 
            \geq \frac{\eta}{\gamma k} 
            (1+\gamma)^{t + k - n-1} \b{ 1 -  (1+\gamma)^{ - k } }
        \label{eq:lb_proj_t_small}
    \\ 
    \text{and }
        t > \tau_0 
        &\implies
        (*) = \b{ 1 -  (1+\gamma)^{ t - n - 1} }
            \frac{1}{\gamma}
        \nonumber \\
        &\implies
        \av{ \hat w(i_t) } \geq \frac{\eta}{\gamma k} 
            \b{ 1 -  (1+\gamma)^{ t - n - 1} }.
        \label{eq:lb_proj_t_large}
    \end{align}
    
    Before moving on to bound the sum of values in the coordinates, we record the following basic facts which will be used repeatedly.
    By Bernoulli's inequality and our assumption that $\eta > 1/\sqrt n$, we have $(1+ \gamma)^m = (1+3 \eta^2)^m \geq 1 + 3 \eta^2 m \geq 1 + 3 m/n$. Hence,
    \begin{align*}
        (1+\gamma)^{-m}
        = \frac{1}{(1+\gamma)^{m}}
        \leq \frac{1}{1+3 m/n}
        \\
        \implies 
        1 - (1+\gamma)^{-m} \geq \frac{3 m/n}{1+ 3m/n},
    \end{align*}
    and then,
    \begin{align}
        m\geq n/4 \implies 
        1 - (1+\gamma)^{-m} 
        &\geq \fraci{(3/4)}{4} \geq 1/6,
        \label{eq:lb_proj_exp_lb}
        \\ 
        \text{and }\;
        \sum_{j=0}^{m} (1+\gamma)^{-j}
        &\geq \frac{1 - (1+\gamma)^{-m}}{1 - (1+\gamma)^{-1}}
        \geq \frac{1/6}{1 - \frac{1}{1+\gamma}}
        = \frac{1}{6\gamma}.
        \label{eq:lb_proj_series_lb}
    \end{align}
    
    \paragraph{Bounding the sum of coordinate values.}
    We first consider the case that $\tau_0 < n/2$;
        \begin{align*}
            \sum_{t=n/4}^{n} \av{\hat w(i_t)}
            &\geq 
                \frac{\eta}{\gamma k} 
                \sum_{t=n/2}^{3n/4}  \b{ 1 -  (1+\eta^2)^{ t - n - 1} }
            \tag{by \cref{eq:lb_proj_t_large}}
            \\
            &\geq \frac{\eta}{\gamma k} 
                \sum_{t=n/2}^{3n/4} \frac{1}{6}
            \tag{by \cref{eq:lb_proj_exp_lb}}
            \\
            &= \frac{\eta}{6 \gamma k}(3n/4 - n/2)
            \\
            &= \frac{n \eta }{24 \gamma k}
            \\
            &\geq \frac{n}{100 \eta k}
            \geq \frac{1}{100 \eta }
            ,
        \end{align*}
    which proves the desired result (recall that $\gamma=3\eta^2$).
    Assume now $\tau_0 \geq n/2$, and observe;
        \begin{align}
            \sum_{t=\tau_0+1}^{n} \av{\hat w(i_t)}
            &\geq 
                \frac{\eta}{\gamma k} 
                \sum_{t=\tau_0+1}^{n}  \b{ 1 -  (1+\gamma)^{ t - n - 1} }
            \tag{by \cref{eq:lb_proj_t_large}}
            \nonumber \\
            &= \frac{\eta}{\gamma k} 
                \sum_{j=1}^{k-1}  \b{ 1 -  (1+\gamma)^{ -j} }
            \nonumber \\
            &= \frac{\eta}{\gamma k} \b{
                k - 1 - \sum_{j=1}^{k-1}  (1+\gamma)^{ -j} 
            }
            \nonumber \\
            &= \frac{\eta}{\gamma k} \b{
                k-1 - (1+\gamma)^{-1} \frac{
                    1 - (1+\gamma)^{1-k}
                    }{1 - (1+\gamma)^{-1}
                    }
            }
            \nonumber \\
            &= \frac{\eta}{\gamma k} \b{
                k-1 - \frac{
                    1 - (1+\gamma)^{1-k}
                    }{\gamma
                    }
            }
            \nonumber \\
            &= \frac{\eta}{\gamma^2 k} \b{
                \gamma(k-1) 
                - (1 - (1+\gamma)^{1-k})
            }
            \nonumber \\
            &= \frac{\eta}{\gamma^2 k} \b{
                \gamma (k-1)
                + (1+\gamma)^{1-k}
            }
            \label{eq:proj_lb_main_t_large}
            .
        \end{align}
    
    In addition,
        \begin{align}
            \sum_{t=n/4}^{\tau_0} \av{\hat w(i_t)}
            &\geq 
                \frac{\eta}{\gamma k} \b{ 1 -  (1+\gamma)^{ - k } }
                \sum_{t=n/4}^{\tau_0}  
                    (1+\gamma)^{t + k - n - 1} 
                \tag{by \cref{eq:lb_proj_t_small}}
            \nonumber \\
            &=
                \frac{\eta}{\gamma k} \b{ 1 -  (1+\gamma)^{ - k } }
                \sum_{t=n/4}^{\tau_0}  
                    (1+\gamma)^{t - \tau_0} 
                \tag{by \cref{eq:lb_proj_t_small}}
            \nonumber \\
            &=
                \frac{\eta}{\gamma k} \b{ 1 -  (1+\gamma)^{ - k } }
                \sum_{j=0}^{\tau_0-n/4}  
                    (1+\gamma)^{-j}
            \nonumber \\
            &\geq 
                \frac{\eta}{\gamma k} \b{ 1 -  (1+\gamma)^{ - k } }
                \sum_{j=0}^{n/4}  
                    (1+\gamma)^{-j}
            \nonumber \\
            &\geq 
                \frac{\eta}{6\gamma^2 k} \b{ 1 -  (1+\gamma)^{ - k } }
            \label{eq:proj_lb_main_t_small},
        \end{align}
    where in the last inequality we have applied \cref{eq:lb_proj_series_lb}.
    Now, combining \cref{eq:proj_lb_main_t_large} and \cref{eq:proj_lb_main_t_small}, we obtain
        \begin{align*}
            \sum_{t=n/4}^n \av{\hat w(i_t)}
            &\geq 
            \frac{\eta}{\gamma^2 k} \b{
                \gamma k
                - \gamma - 1
                + (1+\gamma)^{1-k}
            }
            + \frac{\eta}{6\gamma^2 k} \b{ 1 -  (1+\gamma)^{ - k } }
            \\
            &\geq \frac{\eta}{6\gamma^2 k} \b{
                \gamma k
                - \gamma - 1
                + (1+\gamma)^{1-k}
                +  1 
                -  (1+\gamma)^{ - k }
            }
            \\
            &= \frac{\eta}{6\gamma^2 k} \b{
                \gamma k
                - \gamma
                + (1+\gamma)^{1-k}
                -  (1+\gamma)^{ - k }
            }
            \\
            &\geq \frac{\eta }{6\gamma k }(k-1)
            \\
            &\geq \frac{\eta }{12 \gamma }
            \\
            &= \frac{1 }{36 \eta }
            ,
        \end{align*}
    which proves the desired result also in the second case, and completes the proof.
\end{proof}

\subsection{SCO with non convex components}
\label{sec:proof:lb_non_convex}

\begin{proof}[of \cref{thm:lb_non_convex}]
We define the distribution $\Z = \Z(\delta)$ over the set of datapoints $Z$ by
\begin{align*}
    \forall i \leq d; \quad 
    z(i) &= \begin{cases}
        -1 \quad \text{w.p. } \delta
        \\
        1  \quad \text{w.p. } \delta
        \\
        0 \quad \text{w.p. } (1-2\delta),
    \end{cases}
    \\
    \forall i > d ; \quad 
    z(i) &= \Ind{z(i-d) = -1}.
\end{align*}
We consider the same loss function of \cref{thm:lb_main}, but leave the norm-like component without the square-root;
\begin{align*}
    f(w; z) \eqq
    \phi(w; z)
    + \nu_z(w),
    \quad 
    \nu_z(w) \eqq 
    \sum_{i=1}^d z(i)w(i)^2,
\end{align*}
where $\phi$ is defined as in \cref{eq:main_phi}. We also define the gradient oracle for $\phi$ as we have done in the convex case \cref{eq:main_grad}, \cref{eq:main_grad_I}, and \cref{eq:main_grad_i_t}, repeated here for convenience;
\begin{align*}
    g_\phi(w; z)(i) &\eqq 
    \begin{cases}
        \Ind{i = I(w)} \quad & i \leq d
        \\
        - \epsilon z(i) + \Ind{i = I(w)+d}  \quad & i \geq d
    \end{cases},
    \\
    I(w) &\eqq 
    \min\Big\{ i \in [d] \mid 
        i \in \argmax_{1 \leq j \leq d} 
        \{ w(j) + w(j+ d) \}
    \Big\},
    \\
    i_t &\eqq I(w_t).
\end{align*}
Here, unlike the construction in \cref{thm:lb_main} we need $\epsilon$ to depend on $\eta$, and set $\epsilon \eqq \eta/d$.
The next lemma establishes the SGD iterates end up ``overfitting'' the empirical objective, and follows from a proof that is essentially identical to \cref{lem:main_Z} and \cref{lem:lb_iterates}. The only difference is that here the training examples have $-1$ rather than $1$ in the critical coordinates.
\begin{lemma}
For $\delta = 1/4n^2$, we have with probability $\geq 1/2$ that for all $\tau \in [n]$ and $t < \tau$;
\begin{itemize}
    \item $w_{\tau}(i_t) = -\eta$, 
    \item $1 \leq s < t \implies z_s(i_t) = -1$, and
    \item $t \leq s \leq n \implies z_s(i_t) \leq 0$.
\end{itemize}
\end{lemma}
Thus, let $\tau \leq n+1$, and denote $\wbar w_{\tau} \eqq \frac{1}{n-\tau+2} \sum_{t=\tau}^{n+1} w_t$.
By a derivation identical to that of the convex case, 
we obtain for all $1 \leq t \leq n/2$, 
$\wbar w_{\tau} (i_t) \leq - \frac{\eta}{2}$.
Hence, for $s\leq n/4$,
\begin{align*}
    \nu_{z_s}(\wbar w_\tau) \leq - \sum_{t = n/4 + 1}^{n/2} \wbar w_\tau(i_t)^2 = -\frac{n \eta^2}{16},
\end{align*}
therefore
\begin{align*}
    \hF(\wbar w_{\tau}) 
    &= \frac{1}{n}\sum_{t=1}^n \phi(\wbar w_\tau; z_t)
        + \frac{1}{n}\sum_{t=1}^n \nu_{z_t}(\wbar w_\tau)
    \\
    &\leq 2 \epsilon\eta n -\frac{1}{n}\frac{n}{2}\frac{n \eta^2}{16}
    \\
    &\leq  -\frac{n \eta^2}{64}
    .
\end{align*}
To conclude the proof, we note that
\begin{align*}
    F(\wbar w_{\tau}) \geq -\epsilon \sum_{i=d+1}^{2 d} \delta (\eta \epsilon n)
    = - \epsilon^2 d \delta n \eta 
    \geq - \eta^2 / n,
\end{align*}
and the result follows.
\end{proof}
  
\subsection{SCO with strongly convex components}
\label{sec:proof:sc_lb_main}

\begin{proof}[of \cref{thm:sc_lb_main}]
    We first make the argument for an unbounded domain, so that no projections take place. Let
    \begin{align*}
        f(w; z) &\eqq
        \phi(w; z)
        + \nu_z(w)
        + \frac{\lambda}{2}\norm{w}^2,
    \end{align*}
    where $\phi$ and $\nu$ are defined by
    \begin{align}
        \phi(w; z) &\eqq - \epsilon \sum_{i=d+1}^{2d} z(i)w(i) 
            + \max_{1 \leq i\leq d} \cb{w(i) + \epsilon w(i + d)},
        \\
        \nu_z(w) &\eqq 
        \sqrt{ \sum_{i=1}^d z(i)w(i)^2 },
     \end{align}
     and $\epsilon \eqq 1/d$.
     These are essentially the same definitions as in our main construction \cref{eq:main_phi} and \cref{eq:main_nu}, but with an added $\epsilon$ factor inside the max component of $\phi$. This only makes our formal argument simpler, but otherwise does not make any significant difference.
     For the gradient oracle, we define
     \begin{align}
        g_\phi(w; z)(i) &\eqq 
        \begin{cases}
            \Ind{i = I(w)} \quad & i \leq d
            \\
            - \epsilon z(i) + \epsilon \Ind{i = I(w)+d}  \quad & i \geq d
        \end{cases},
        \\
        \text{where }
        I(w) &\eqq 
        \min\Big\{ i \in [d] \mid 
            i \in \argmax_{1 \leq j \leq d} 
            \{ w(j) + \epsilon w(j+ d) \}
        \Big\},
    \end{align}
    and again denote the index picked by $g$ on round $t\in[n]$ by
    \begin{align}
        i_t \eqq I(w_t).
    \end{align}
    We then set
    \begin{align*}
        g(w; z) 
        &\eqq g_\phi(w; z) + \nabla \nu_z(w) + \lambda w.
    \end{align*}
    Clearly, for all $w, z\in \R^{2d}$, $g(w; z) \in \partial_w f(w; z) $.
    Following a direct computation, we get that
    \begin{align*}
        g(w; z)(i) &= 
        \begin{cases}
            \Ind{i = I(w)} + \frac{z(i) w(i)}{\nu_z(w)}  + \lambda w(i) 
            \quad & i \leq d,
            \\
            \epsilon \Ind{i = I(w)+d} - \epsilon z(i) + \lambda w(i) \quad & i > d,
        \end{cases}
    \end{align*}
    where $I$ is defined in \cref{eq:main_grad_I}.
    Hence, the stochastic gradient steps $w_{t+1} \gets w_t - \eta_t g(w_t, z_t)$ are given by
    \begin{align}
        w_{t+1}(i) 
        &= \begin{cases}
            \b{1 - \eta_t ( z_t(i) \nu_{z_t}(w)^{-1}  + \lambda) } w_t(i)
            -\eta_t \Ind{i = I(w)}
                \quad & i \leq d
            \\
            \b{1 - \eta_t \lambda } w_t(i) + \epsilon \eta_t z_t(i)
            -\eta_t \epsilon \Ind{i = I(w)+d}
                \quad & i > d
            .
        \end{cases}
        \label{eq:lbsc_gdsteps}
    \end{align}
    The next lemma makes a similar assertion as \cref{lem:lb_iterates} and follows from similar arguments.
    \begin{lemma}\label{lem:sc_iterates}
      With probability $\geq 1/2$, for all $\tau \in [n]$ we have 
    \begin{align*}
        w_{\tau+1}(i) = \begin{cases}
            -\eta_t \prod_{s=t+1}^\tau (1-\eta_s\lambda)
                \quad & i = i_t, t\in[\tau],
            \\
            0 \quad & i \in [d] \setminus \cb{i_1, \ldots i_\tau},
        \end{cases}
    \end{align*}
    where 
    $z_1(i_t) = \ldots = z_{t-1}(i_t) = 1$, and 
    $z_t(i_t) = \ldots = z_n(i_t) = 0$.
    \end{lemma}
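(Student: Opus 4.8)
The plan is to reprise the proof of \cref{lem:lb_iterates} essentially line for line, the only new ingredient being that the $\frac{\lambda}{2}\norm{w}^2$ term multiplies every coordinate by $(1-\eta_\tau\lambda)\in[0,1]$ at step $\tau$. First I would condition throughout on the event of \cref{lem:main_Z}, which occurs with probability $\geq 1/2$ and furnishes, for each $t\in[n]$, the minimal coordinate $\io_t\in[d]$ with $z_s(\io_t)=1$ for all $s<t$ and $z_s(\io_t)=0$ for all $s\geq t$; as in \cref{lem:lb_iterates}, these coordinates are pairwise distinct, since for $s<t$ one has $z_s(\io_t)=1$ but $z_s(\io_s)=0$. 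All of the following is argued on this event; assuming the standard initialization $w_1=0$ (so that no projection ever triggers, per the unbounded-domain setup in the proof of \cref{thm:sc_lb_main}), the claimed prefix/suffix structure of the coordinates $i_t$ is then read off directly from the event once we show $i_t=\io_t$.

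The heart of the argument is an induction on $\tau\in[n]$ establishing jointly that (i) $I(w_t)=\io_t$ for every $t\le\tau$, and (ii) $w_{\tau+1}$ restricted to $[d]$ equals $-\eta_t\prod_{s=t+1}^\tau(1-\eta_s\lambda)$ on coordinate $\io_t$ (for $t\le\tau$) and $0$ elsewhere. The base case $\tau=1$ is immediate from $w_1=0$. For the inductive step, the inductive hypothesis pins down $w_\tau$ on $[d]$: it is non-positive, vanishes off $\{\io_1,\dots,\io_{\tau-1}\}$, and in particular $w_\tau(\io_\tau)=0$ because $\io_\tau$ has not yet been selected. To evaluate $I(w_\tau)$ through \cref{eq:main_grad_I} I must also control the flag coordinates $w_\tau(j+d)$: by \cref{eq:lbsc_gdsteps}, a flag coordinate $j+d$ gains $\epsilon\eta_s$ precisely on the rounds $s<\tau$ with $z_s(j)=1$, loses $\eta_s$ on the at-most-one round it is selected, and is scaled by $(1-\eta_s\lambda)$ every round. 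Since $z_s(\io_\tau)=1$ for all $s<\tau$ and $\io_\tau$ is never selected before round $\tau$ (by the selected-coordinates half of the induction), its flag attains the largest value any flag can, $w_\tau(\io_\tau+d)=\epsilon\sum_{s=1}^{\tau-1}\eta_s\prod_{l=s+1}^{\tau-1}(1-\eta_l\lambda)>0$, while $w_\tau(j+d)\le w_\tau(\io_\tau+d)$ for every $j$. Hence $w_\tau(j)+w_\tau(j+d)\le w_\tau(\io_\tau)+w_\tau(\io_\tau+d)$ for all $j\in[d]$, with the inequality strict for $j<\io_\tau$ (equality would force $z_s(j)=1$ for all $s<\tau$, contradicting minimality of $\io_\tau$), so the oracle picks $i_\tau=I(w_\tau)=\io_\tau$. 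Finally, feeding $i_\tau=\io_\tau$ into \cref{eq:lbsc_gdsteps} and using that $z_\tau(\io_t)=0$ for every $t\le\tau$ (so the $\nu_z$ sub-gradient has zero component on all these coordinates at round $\tau$), the update on $[d]$ collapses to $w_{\tau+1}(i)=(1-\eta_\tau\lambda)w_\tau(i)-\eta_\tau\Ind{i=\io_\tau}$, which yields exactly (ii) and closes the induction. As everything was done on the event of \cref{lem:main_Z}, the conclusion holds with probability $\ge 1/2$.

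The step I expect to be the main obstacle is the flag-coordinate bookkeeping inside the inductive step — propagating the contraction factors $(1-\eta_s\lambda)$ through the comparison of $w_\tau(j)+w_\tau(j+d)$ over all $j\in[d]$ and handling the minimal-index tie-break of \cref{eq:main_grad_I}, so as to certify that the oracle selects precisely $\io_\tau$ at every round. The remainder — the base case, unwinding the product in (ii), and inheriting the $1/2$ probability from \cref{lem:main_Z} — is routine and mirrors \cref{lem:lb_iterates} with only cosmetic changes.
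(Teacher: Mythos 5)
Your proposal is correct and follows essentially the same route as the paper: condition on the event of \cref{lem:main_Z}, observe that the $\lambda$-contraction scales all flag coordinates equally so the oracle still selects $i_t=\io_t$ exactly as in \cref{lem:lb_iterates}, and unroll the recursion \cref{eq:lbsc_gdsteps} on the selected coordinates to obtain the product form. The paper's version is terser (it writes the closed form $w_{t+1}(i)=\sum_{s\le t}\epsilon\eta_s z_s(i)\prod_{l=s+1}^t(1-\eta_l\lambda)$ for the flags and defers to the convex-case argument), whereas you carry out the induction and tie-breaking explicitly, but the content is the same.
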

    Next, a simple derivation shows the empirical risk is large for any suffix average. 
    Thus, let $\tau \leq n+1$, and denote $\wbar w_{\tau} \eqq \frac{1}{n-\tau+2} \sum_{t=\tau}^{n+1} w_t$.
    By \cref{lem:sc_iterates}, assuming the event from the lemma occurs we have
    \begin{align*}
        t \leq n/2 \implies \av{ \wbar w_{\tau} (i_t)}
        \geq \frac{1}{2}\av{ w_{n+1}(i_t) } = 
            \frac{\eta_t}{2}\prod_{s=t+1}^n (1-\eta_s\lambda).
    \end{align*}
    Therefore, noting that $\phi(\wbar w_{\tau}; z_s) \geq -\epsilon^2 d n / \lambda \geq -\epsilon n / \lambda \geq -1 / (\lambda n) $, by \cref{lem:loss_lb_util} we obtain;
    \begin{align*}
        \hF(\wbar w_{\tau:n}) 
        &\geq \frac{\lambda}{2}\norm{\wbar w_{\tau:n}}^2 
            - \frac{1 }{\lambda n}
            + \frac{1}{10 \sqrt n}\sum_{t=n/4}^{n/2} 
                \eta_t \prod_{s=t+1}^n (1-\eta_s\lambda)
        \\
        &\geq \frac{\lambda}{2}\norm{\wbar w_{\tau:n}}^2 
            - \frac{1}{\lambda n}
            + \frac{1}{40\lambda \sqrt n}
        \tag{for $\eta_t=1/\lambda t$}
        .
    \end{align*}
    Noting that $\hF(0) = 0$, we obtain the claim on the optimization error.
    For the generalization gap, first note that for any $t\leq n+1$,
    \begin{align*}
        \E_z \phi(w_t; z)
        &\leq - \epsilon \sum_{i=d+1}^{2d} \delta w_t(i) 
            + \epsilon n /\lambda 
        \\
        &\leq \epsilon \delta n /\lambda
        + \epsilon n /\lambda 
        \\
        &\leq 2 \epsilon n /\lambda,
    \end{align*}
    and observe;
    \begin{align*}
        F(\wbar w_{\tau}) 
        &\leq 
            \frac{2 \epsilon n}{\lambda}
            + \frac{\lambda}{2}\norm{\wbar w_{\tau}}^2 
            + \E_{z\sim \Z}\sb{ 
                \sqrt{\sum_{t=1}^n z(i_t) \wbar w_{\tau}(i_t)^2}
            }
        \\
        &\leq \frac{2 \epsilon n}{\lambda}
        + \frac{\lambda}{2}\norm{\wbar w_{\tau}}^2 
        + \sqrt{ \delta
            \sum_{t=1}^n \wbar w_{\tau}(i_t)^2
        }
        \tag{by Jensen's inequality}
        \\
        &\leq 
        \frac{2 \epsilon n}{\lambda}
        + \frac{\lambda}{2}\norm{\wbar w_{\tau}}^2 
        + \frac{1}{2n}
            \sum_{t=1}^n \av{\wbar w_{\tau}(i_t)}
        \\
        &\leq \frac{2 \epsilon n}{\lambda}
        + \frac{\lambda}{2}\norm{\wbar w_{\tau}}^2 
        + \frac{\log n}{2\lambda n}
        \tag{for $\eta_t=1/\lambda t$} 
        \\
        &\leq 
        \frac{\lambda}{2}\norm{\wbar w_{\tau}}^2 
        + \frac{\log n}{\lambda n}
        \tag{since $2 \epsilon n / \lambda \leq \log n/ (2\lambda n)$} 
        .
    \end{align*}
    Combining the inequalities in the last two displays completes the proof.
    
    \paragraph{Bounded domain case with $\lambda \geq 1/\sqrt n$.} In this case projections happen, but owed to our assumption on $\lambda$ we will see their effect is negligible.
    Denote 
    $w_s' \eqq w_s - \eta g(w_s, z_s)$ so that now $w_{s+1} \gets \Pi_W\b {w_s'}$.
    Observe that by \cref{eq:lbsc_gdsteps} under the event of \cref{lem:sc_iterates}, we have
    \begin{align*}
        w_{\tau+1}'(i) = \begin{cases}
            0 
            \quad &i\in[d]\setminus\{i_1, \ldots, i_\tau\},
            \\
            (1 - \eta_\tau \lambda) w_{\tau}(i)
            + \eta_\tau \Ind{i = i_\tau}
            \quad &i\in \{i_1, \ldots, i_\tau\},
            \\
            (1 - \eta_\tau \lambda) w_{\tau}(i)
            + \epsilon \eta_\tau z_t(i) (-1 + \Ind{i = i_\tau + d})
            \quad &i\in[d+1, 2d], 
        \end{cases}
    \end{align*}
    thus
    \begin{align*}
        w_{\tau+1}'(i)^2 \leq \begin{cases}
            0 
            \quad &i\in[d]\setminus\{i_1, \ldots, i_\tau\},
            \\
            (1 - \eta_\tau \lambda)^2 w_{\tau}(i)^2
            \quad &i\in \{i_1, \ldots, i_{\tau-1}\},
            \\
            \eta_\tau^2
            \quad &i = i_\tau,
            \\
            (n \epsilon /\lambda)^2
            \quad &i\in[d+1, 2d].
        \end{cases}
    \end{align*}
    Note that by our assumption that $\lambda \geq 1/\sqrt n$ and our choice of $\epsilon=1/d$, we have $n^2\epsilon^2/\lambda^2 \leq n/d^2$. 
    Now, fix $\tau\geq 2 n/3$, and observe;
    \begin{align*}
        \norm{w_{\tau+1}'}^2 
        &\leq \eta_\tau^2 + d(n/d^2)
            + (1 - \eta_{\tau}\lambda)^2 \sum_{i=1}^{2d} w_{\tau}(i)^2
        \\
        &= \eta_\tau^2 + n/d
            + (1 - \eta_{\tau}\lambda)^2 \norm{w_\tau}^2
        \\
        &\leq \eta_\tau^2 
            + n/d
            + (1 - \eta_{\tau}\lambda)^2
        \\
        &= \frac{1}{\lambda^2 \tau^2}
            + \frac{n}{d}
            - \frac{2}{\tau}
            + \frac{1}{\tau^2}
        \\
        &\leq \frac{2}{\lambda^2 n \tau}
            + 1 
            + \frac{n}{d}
            + \frac{9}{4 n^2}
            - \frac{2}{\tau}
        \\
        &\leq \frac{3}{ 2\tau}
            + 1 
            + \frac{10}{4 n^2}
            - \frac{2}{\tau}
        \\
        &\leq 1.
    \end{align*}
    In the above, we have used that $n/d \leq 1/4n^2$, and that $1/2\tau \geq 1/(2 n) \geq 10/4n^2$ for $n\geq 10$.
    Hence, from round $2 n/3$ onwards projections do not occur anymore.
    To conclude the proof, we note that we can lower bound the empirical loss precisely as we did before but over rounds $4 n/6$ to $5n/6$, rather than $n/4$ to $n/2$. In addition, the population loss has only improved since the per coordinate values in all iterates have only decreased in magnitude as a result of the projections.
    \end{proof}
    
    \begin{proof}[of \cref{lem:sc_iterates}]
        Note that for all $t\in[n]$ and any $i \in [d+1, 2d] \setminus \{i_1+d, \ldots, i_{t}+d\}$, by \cref{eq:lbsc_gdsteps} we have
        \begin{align}
            w_{t+1}(i)
            &= \b{1 - \eta_t \lambda } w_t(i) + \epsilon \eta_t z_t(i)
            \nonumber \\
            &= \sum_{s=1}^t 
                \epsilon \eta_s z_s(i)
                \prod_{l=s+1}^t \b{1 - \eta_l \lambda }
            .
            \label{eq:sc_iter_expression}
        \end{align}
        Hence, the number of times $z_s(i) = 1$ for $s \leq t$ determines the maximality of $w_{t+1}(i)$. In other words, the extra component in the gradient update effects all coordinates equally, and the situation here is no different than the convex case.
        Thus, by \cref{lem:main_Z} and the same arguments as given in \cref{lem:lb_iterates}, we have that with probability $\geq 1/2$, for all $t\in[n]$, $i_t = I(w_t) = \io_t$. Therefore, by \cref{eq:lbsc_gdsteps};
        \begin{align*}
            w_{t+1}(i) = \begin{cases}
                -\eta_t 
                    \quad & i = i_t;
                \\
                -\eta_s \prod_{l=s+1}^t (1-\eta_l\lambda)
                    \quad & i = i_s,\; s < t;
                \\
                0 \quad & i \in [d] \setminus \cb{i_1, \ldots, i_t}.
            \end{cases}
        \end{align*}
        Note that the $z_t(i)\nu_{z_t}(w)^{-1}$ component in \cref{eq:lbsc_gdsteps} does not contribute since $z_t(i_s) = 0$ for all $t \geq s$, given our event.
        \end{proof}

\section[Proofs for with-replacement SGD]{Proofs for \cref{sec:wrsgd}}
\label{sec:proof:wrsgd_std_ub}

In what follows we provide the standard analysis of SGD with the iterate averaging scheme specified in \cref{thm:wrsgd_population_ub}.
The theorem stated and proved below provides the rate of convergence on the target objective function from which gradients are sampled (as similar analyses normally do); note that we use it in the context where the target objective is the \emph{empirical} loss given by the training set. This should be contrasted with the goal of \cref{thm:wrsgd_population_ub}, which is to establish the convergence rate on the \emph{population} objective. Our only motivation for proving the below theorem is to argue the generalization gap upper bound established in \cref{cor:wrsgd_gengap}.

  \begin{theorem}
    Let $W\subset \R^d$ with diameter $D$, and $f_1, \ldots, f_n$ be a sequence of convex, $G$-Lipschitz losses sampled i.i.d.~from some distribution $\mathcal F$.
    Further, let $w^\star \eqq \min_{w\in W} \E_{f \sim \mathcal F} f(w) $ denote the minimizer of the expected function.
    Then, the weighted average $\wbar w \eqq \frac{2}{n+1} \sum_{t=1}^n \frac{n-t+1}{n} w_t$ of the iterates produced by SGD with step size $\eta=\frac{D}{G\sqrt n}$ obtains the following upper bound:
    \begin{align*}
        \E_{f_1, \ldots, f_n, f \sim \mathcal F}\sb{f(\wbar w) - f(w^\star)}
        \leq \frac{4 G D}{\sqrt n}
        .
    \end{align*}
  \end{theorem}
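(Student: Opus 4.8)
The plan is to run the classical online-to-batch conversion for projected subgradient descent, with the single twist that the averaging weights here are \emph{decreasing} in $t$, which is exploited in a summation-by-parts step. Write $F(w) \eqq \E_{f\sim\mathcal F}\sb{f(w)}$ and let $a_t \eqq \frac{2(n-t+1)}{n(n+1)}$, so that $\wbar w = \sum_{t=1}^n a_t w_t$ and a direct computation ($\sum_{t=1}^n (n-t+1) = n(n+1)/2$) gives $\sum_{t=1}^n a_t = 1$; in particular $\wbar w$ is a genuine convex combination of the iterates.

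First I would use independence of the stream: since $w_t$ is a deterministic function of $f_1,\ldots,f_{t-1}$ alone, conditioning on the first $t-1$ losses yields $\E\sb{f_t(w_t) - f_t(w^\star)} = \E\sb{F(w_t) - F(w^\star)}$. Then convexity of $F$ together with $\sum_t a_t = 1$ gives
\begin{align*}
    \E\sb{F(\wbar w) - F(w^\star)}
    \leq \sum_{t=1}^n a_t\, \E\sb{F(w_t) - F(w^\star)}
    = \sum_{t=1}^n a_t\, \E\sb{f_t(w_t) - f_t(w^\star)}.
\end{align*}
Next I would invoke the standard one-step inequality: by nonexpansiveness of $\Pi_W$, the bound $\norm{g_t} \leq G$, and convexity $g_t\cdot(w_t - w^\star) \geq f_t(w_t) - f_t(w^\star)$, one has $f_t(w_t) - f_t(w^\star) \leq \frac{1}{2\eta}\b{\norm{w_t - w^\star}^2 - \norm{w_{t+1}-w^\star}^2} + \frac{\eta G^2}{2}$. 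Multiplying by $a_t$ and summing, the noise term contributes $\frac{\eta G^2}{2}\sum_t a_t = \frac{\eta G^2}{2}$, while for the telescoping part, summation by parts gives, using $a_t - a_{t-1} = -\frac{2}{n(n+1)} \leq 0$ and $\norm{w_t - w^\star}^2 \geq 0$,
\begin{align*}
    \sum_{t=1}^n a_t\b{\norm{w_t - w^\star}^2 - \norm{w_{t+1}-w^\star}^2}
    \leq a_1 \norm{w_1 - w^\star}^2
    \leq \frac{2D^2}{n+1}.
\end{align*}
Hence $\E\sb{F(\wbar w) - F(w^\star)} \leq \frac{D^2}{\eta(n+1)} + \frac{\eta G^2}{2}$, and substituting $\eta = D/(G\sqrt n)$ yields $\frac{GD\sqrt n}{n+1} + \frac{GD}{2\sqrt n} \leq \frac{3GD}{2\sqrt n} \leq \frac{4GD}{\sqrt n}$. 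Finally, since $f\sim\mathcal F$ in the statement is drawn independently of $f_1,\ldots,f_n$, we have $\E_f\sb{f(\wbar w) - f(w^\star)} = \E\sb{F(\wbar w) - F(w^\star)}$, which is exactly the claimed bound.

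The only step needing a modicum of care beyond the textbook argument is the summation by parts: for a uniform average one simply telescopes $\sum_t \b{\norm{w_t-w^\star}^2 - \norm{w_{t+1}-w^\star}^2}$ to $\norm{w_1-w^\star}^2 - \norm{w_{n+1}-w^\star}^2$, whereas here one must check that the residual terms $\sum_{t\geq 2}(a_t - a_{t-1})\norm{w_t-w^\star}^2$ carry the favorable sign and hence can be discarded. Everything else is identical to the classical analysis of SGD, and the constant $4$ is deliberately loose.
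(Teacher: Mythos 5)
Your proof is correct and follows essentially the same route as the paper's: online-to-batch conversion with the decreasing weights, the standard one-step descent inequality, and a summation-by-parts to handle the non-uniform telescoping. If anything, your treatment of the residual terms $\sum_{t\geq 2}(a_t-a_{t-1})\norm{w_t-w^\star}^2$ is slightly cleaner (you discard them by sign, whereas the paper simply bounds each $\norm{w_t-w^\star}^2$ by $D^2$), but the argument is the same.
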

  \begin{proof}
    Observe,
    \begin{align}
        \E \sb{f(\wbar w) - f(w^\star)}
        &\leq \frac{2}{n+1} \sum_{t=1}^n \frac{n-t+1}{n} \E \sb{f(w_t) - f(w^\star)}
        \nonumber \\
        &= \frac{2}{n+1} \E \sb{ \sum_{t=1}^n \frac{n-t+1}{n} \b{ f_t(w_t) - f_t(w^\star) }}
        .
        \label{eq:wrsgd_emp_1}
    \end{align}
    By the standard SGD analysis,
    \begin{align*}
        f_t(w_t) - f_t(w^\star)
        \leq \nabla f_t(w_t)\T (w_t - w^\star)
        \leq \frac{1}{2\eta}\b{D_t^2 - D_{t+1}^2} + \frac{\eta}{2}G^2,
    \end{align*}
    where $D_t \eqq \norm{w_t - w^\star}$.
    Now,
    \begin{align*}
        \sum_{t=1}^n \frac{n-t+1}{n} \b{ f_t(w_t) - f_t(w^\star) }
        &\leq \frac{1}{2\eta} \sum_{t=1}^n \frac{n-t+1}{n} \b{D_t^2 - D_{t+1}^2}
        + \frac{\eta G^2 n}{2}
        \\
        &= \frac{D_1^2}{2\eta} 
            + \frac{1}{2\eta}\sum_{t=2}^n D_t^2 \b{\frac{n-t+1}{n} - \frac{n-t}{n} }
            + \frac{\eta G^2 n}{2}
        \\
        &= \frac{D_1^2}{2\eta} 
            + \frac{1}{2\eta n} \sum_{t=2}^n D_t^2
            + \frac{\eta G^2 n}{2}
        \\
        &\leq \frac{D^2}{\eta} 
        + \frac{\eta G^2 n}{2}
        \\
        &\leq 2 G D \sqrt n
        .
    \end{align*}
    Plugging the above inequality into \cref{eq:wrsgd_emp_1}, we obtain
    \begin{align*}
        \E \sb{f(\wbar w) - f(w^\star)}
        \leq \frac{4 G D}{\sqrt n}
        ,
    \end{align*}
    which completes the proof.
  \end{proof}

\section[Proofs - Multiepoch]{Proofs for \cref{sec:multiepoch}}

\subsection{Lower bound for multi-epoch SGD}
\label{sec:proof:melb_main}

\begin{proof}[of \cref{thm:melb_main}]
    First, note that without modifications, the strategy of \cref{thm:lb_main} breaks after the first epoch; it will just keep pointing the gradient on coordinates with an all ones sequence.
    We use the idea we can fully ``record'' into the iterate $w_t^k$ the precise samples we have stepped through so far, and define a gradient oracle that will cause the iterate to advance on fresh bad coordinates in every new epoch.
    We will work with the datapoints set $Z=\{0, 1\}^{d}$ and define the distribution $\Z = \Z(\delta)$ over $Z$ by letting $z(i) \sim \mathrm{Ber}(\delta)$ for all $i\in[d]$.
    We consider two separate portions of a vector $w \in \R^{d'}$, which we denote by $w[\cdot; 0] \in \R^d$ and $w[\cdot, \cdot; 1] \in \R^{d \times n K} $.
    The first portion with $d$ entries is where the bad gradient steps will be made and where we will eventually suffer the loss from.
    The second consists of $d n K$ entries and is used to encode the samples observed during the optimization process.
    Our loss function is defined as follows;
    \begin{align}
    \nu_z(w) &\eqq \sqrt{\sum_{i=1}^d z(i) w[i; 0]^2},
    \nonumber
    \\
    \phi(w; z) &\eqq \epsilon \sum_{i=1}^d 
        (1+z(i))\max_{j\in[n K]} \{ w[i, j; 1] \} + \max_{i\in[d]} \{ w[i; 0]) \},
    \label{eq:multiepoch_lb_phi}
    \\
    f(w; z) &\eqq \phi(w; z) + \nu_z(w).
    \nonumber
    \end{align}
    Again, we choose $\epsilon > 0$ sufficiently small so that the loss induced by it is negligible, and so that $f$ is $4$-Lipschitz.
    The gradient oracle we use is specified by that of the $\phi$ function;
    \begin{align*}
        g_\phi(w; z)[i, j; 1]
        &= \epsilon (1+z(i))
            \Ind{j = I_1(w[i, \cdot; 1])} 
        \\
        g_\phi(w; z)[\cdot; 0] &= e_{I_0(w)} 
        \\ \text{where }
        I_1(x) &\eqq \min \cb{ j\in [n K] 
                \;\Big|\;
                j\in\argmax_{l\in[n K]} \cb{ x(l) }
            }\; \text{ for } x\in \R^{n K},
        \\ \text{ }
        I_0(w) &\eqq  \min \cb{ i\in [d] 
            \;\Big|\;
            i\in\argmax_{l\in[d]} \cb{ w[l; 0] },
            \text{ and }
            i\in\argmin_{l\in[d]} \cb[B]{ 
                V(w)(l)
            }
        },
        \\ \text{ and }
        V(w) &\eqq \sum_{j = t_0(w)}^{n K} w[\cdot, j; 1],
    \end{align*}
    where $t_0(w)$ denotes the first global iteration index of the current epoch. This index is easy to infer from $w[\cdot, \cdot; 1]$ since the $t$'th SGD iteration in epoch $k$ results in values strictly smaller than $0$ in all entries of $w[\cdot, \tau; 1]$, where $\tau = n(k-1) + t$ (and $0$ remains from initialization for $\tau' > \tau$).
    In words, we design our ``adversarial'' gradient oracle so that it will choose the coordinate for $\max_{i\in[d]} \{ w[i; 0] \}$, by ``looking'' in 
    $w[\cdot, \cdot; 1]$ and choosing the coordinate $i_t^k \eqq I_0(w_t^k) \in[d]$ such that the number if times $z_s^k(i_t^k) = 1$ for $s < t$ is largest.
    An illustration is provided in \cref{fig:melb}.

    \begin{figure}[!ht]
        \center
        \includegraphics[width=14cm]{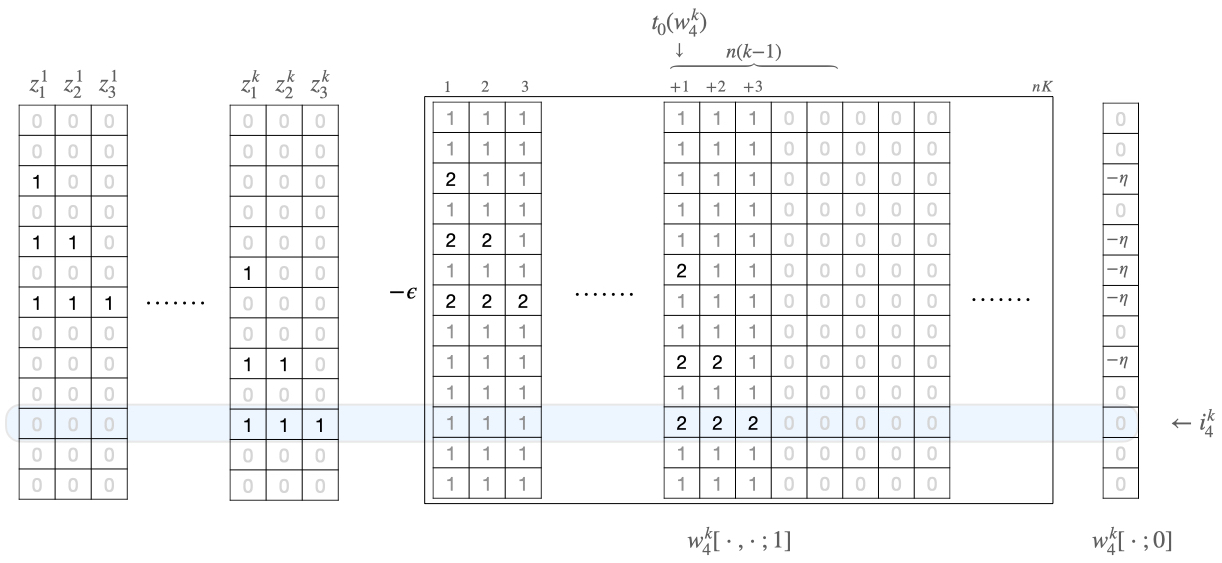}
        \caption{Illustration of gradient oracle mechanism \label{fig:melb}}
    \end{figure}

    In similar spirit to the the basic construction from \cref{thm:lb_main}, we will ensure that with high probability, the coordinates selected by our gradient oracle are such that $z_s^k(i_t^k) = 1$ for all $s < t$, and $z_s^k(i_t^k) = 0$ for all $s \geq t$.
    To that end, we first assert the existence of a set of datapoints $Z \subset \cb{0, 1}^d$ where a desired property described next holds with sufficiently high probability.
    Consider some arbitrary ordered set $S =\cb{z_1, \ldots, z_n} \subset \cb{0, 1}^d$.
    For $t\in[n]$, denote
    \begin{align*}
        E^{(1,t)} &\eqq \big\{i\in [d] 
            \mid z_s(i) = 1 \; \forall s < t
        \big\},
        \\
        E^{(1,t; K)} &\eqq 
        \cb{\io_t^1, \ldots, \io_t^{K} \mid 
        \io_t^k \text{ is the k'th smallest} \in E^{(1,t)} },
        \\
        E^{(t, 0)} &\eqq \big\{i\in [d] 
            \mid z_s(i) = 0 \; \forall s \geq t
        \big\}.
    \end{align*}
    So $E^{(1,t; K)}$ is just the first $K$ elements of $E^{(1, t)}$, where we enumerate the coordinates by the superscript in increasing order. 
    We say the event $\mathcal E$ holds for the set $S$, or equivalently that $S \in \mathcal E$ if for all $t\in [n]$, we have $|E^{(1,t; K)}| = K$, and 
    $E^{(1,t; K)} \subseteq E^{(t, 0)}$. In words, $S \in \mathcal E$ if for every $t\in [n]$, the first $K$ coordinates $\cb{\io_t^1, \ldots, \io_t^K}$ that have a prefix of $(t-1)$ ones; $s < t \implies z_s(\io_t^k) = 1$, also satisfy that they have a suffix of zeros; $s \geq t \implies z_s(\io_t^k) = 0$.
    \begin{lemma}\label{lem:multiepoch_lb_main}
        There exists a set of datapoints $Z = \cb{\zeta_1, \ldots, \zeta_n} \subset \cb{0, 1}^d$, such that
        \begin{align*}
            \Pr_{\pi_1, \ldots, \pi_K \sim \Pi([n])}
                \b{\forall k\in [K],\; \cb{z_1^k, \ldots, z_n^k} \in \mathcal E}
                \geq 1/2,
        \end{align*}
        where $\pi_k$ is sampled by either single-shuffle or multi-shuffle, and $z_t^k \eqq \zeta_{\pi_k(t)}$. 
    \end{lemma}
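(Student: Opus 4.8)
The plan is to establish the lemma via the probabilistic method: rather than exhibiting $Z$ explicitly, I will draw $\zeta_1, \ldots, \zeta_n$ i.i.d.\ from the distribution $\Z(\delta)$ over $\cb{0,1}^d$ in which each coordinate is $1$ with probability $\delta$ independently (as in \cref{thm:lb_main}, but with $\delta$ chosen small enough, as made precise below), and show that with positive probability over this draw the resulting set $Z$ already satisfies the claimed bound over the random permutations. Any $Z$ in this event witnesses the lemma. The key simplification is that exchangeability of the i.i.d.\ draw puts the single-shuffle and multi-shuffle regimes on equal footing: for every fixed permutation $\pi$, the reshuffled sequence $(\zeta_{\pi(1)}, \ldots, \zeta_{\pi(n)})$ again has law $\Z(\delta)^n$. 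Hence, over the joint randomness of $(Z, \pi_1, \ldots, \pi_K)$ --- whether the $\pi_k$ coincide (single-shuffle) or are independent (multi-shuffle) --- each reshuffled sequence $(z_1^k, \ldots, z_n^k)$ is marginally $\Z(\delta)^n$-distributed. I will therefore bound $\Pr\sb{(z_1^k, \ldots, z_n^k) \notin \mathcal E}$ for a single $k$ and union-bound over $k \in [K]$ (which needs no independence across epochs).

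The per-epoch bound strengthens \cref{lem:main_Z} from ``one good coordinate per position'' to ``$K$ good coordinates per position''. Fix $t \in [n]$. The count $|E^{(1,t)}|$ of coordinates $i$ with $z_s(i) = 1$ for all $s < t$ is $\mathrm{Binomial}(d,\delta^{t-1})$, which stochastically dominates $\mathrm{Binomial}(d,\delta^{n})$; since the theorem's lower bound on $d$ together with the choice of $\delta$ makes $d\delta^{n} \gg K$, a Chernoff bound gives $\Pr\sb{|E^{(1,t)}| < K} \le e^{-d\delta^{n}/8}$. Conditioning on $E^{(1,t)}$, which is a function of $z_1, \ldots, z_{t-1}$ only, each of the (at most $K$) coordinates $i \in E^{(1,t;K)}$ lies in $E^{(t,0)}$ --- an event determined by the independent variables $z_t, \ldots, z_n$ --- with probability $(1-\delta)^{n-t+1} \ge 1 - n\delta$, so a union bound over these coordinates bounds the corresponding failure probability by $Kn\delta$. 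Summing the two failure modes and union-bounding over $t \in [n]$ gives $\Pr\sb{(z_1^k,\ldots,z_n^k) \notin \mathcal E} \le n\,e^{-d\delta^{n}/8} + Kn^2\delta$; a final union bound over $k\in[K]$ yields $\Pr\sb{\exists k:\ (z_1^k,\ldots,z_n^k)\notin\mathcal E} \le Kn\,e^{-d\delta^{n}/8} + K^2n^2\delta \le 1/8$, using that $\delta$ is small enough that $K^2n^2\delta \le 1/16$ (consistent with $d \ge 2^{4n\log(cnK)}$, which keeps $d\delta^{n}$ enormous and the exponential term negligible). For single-shuffle all $K$ events coincide, so the same bound holds a fortiori.

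To conclude, let $Y(Z) \eqq \Pr_{\pi_1,\ldots,\pi_K}\sb{\forall k:\ (z_1^k,\ldots,z_n^k)\in\mathcal E \mid Z}$. The above shows $\E_Z\sb{1 - Y(Z)} \le 1/8$, so Markov's inequality gives $\Pr_Z\sb{Y(Z) < 1/2} \le 1/4$ --- for both the single- and multi-shuffle meanings of $Y$; union-bounding over the two regimes leaves probability at least $1/2$ that a single draw of $Z$ has $Y(Z) \ge 1/2$ under both, and any such $Z$ proves the lemma. The main obstacle is the Chernoff step ensuring $K$ (not merely one) prefix-of-ones coordinates exist for \emph{every} position $t$ at once --- precisely where the doubly-exponential requirement on $d$ is consumed --- together with noticing that exchangeability collapses the two shuffling schemes into a single analysis; the remainder is a routine adaptation of \cref{lem:main_Z}.
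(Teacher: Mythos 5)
Your proof is correct and shares the paper's core strategy: draw $Z$ i.i.d.\ from $\Z(\delta)$, use exchangeability to reduce each reshuffled epoch to a marginally $\Z(\delta)^n$-distributed sequence, and strengthen the per-position analysis of \cref{lem:main_Z} so that every $t$ admits $K$ (not one) prefix-of-ones coordinates, each with a suffix of zeros. The difference lies in how a fixed good $Z$ is extracted. The paper proves $\Pr_{Z\sim\Z^n}(Z\in\mathcal E)\ge 2^{-1/K}$, invokes the averaging inequality $\max_{Z}\Pr_{\pi}(\pi(Z)\in\mathcal E)\ge\Pr_{Z,\pi}(\pi(Z)\in\mathcal E)$ to obtain a single $Z^\star$ with per-permutation success at least $2^{-1/K}$, and then raises to the $K$-th power using independence of the multi-shuffle permutations (single-shuffle being immediate). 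You instead union-bound the failure probability over the $K$ epochs in expectation over $Z$ and finish with Markov's inequality; this needs no independence across epochs and treats both shuffling schemes in one stroke. The two routes are quantitatively equivalent---both ultimately require the per-epoch failure probability to be $O(1/K)$---so your constant-factor-smaller choice of $\delta$ is harmless and remains compatible with the stated dimension bound (and only helps the downstream population-risk estimates). Your version is somewhat more elementary; the paper's isolates more sharply where independence of the permutations is actually exploited.
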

    With \cref{lem:multiepoch_lb_main} in place, we can be sure bad coordinate sets will turn up in every epoch.
    Let $k \in [K]$ and $t\in[n]$, and assume the event from the lemma occurs.
    By the definition of our gradient oracle, it is bound to select one of the first $k$ coordinates that have a prefix of all ones, which we are assured by the lemma will also have a suffix of zeros.
    Formally, we argue by induction on $k, t$. The base case follows from the definition of $g_\phi$ and our assumption that the event occurs.
    For the inductive step, assume the selected coordinates $i_{t'}^{k'}$ of all prior rounds satisfy the inductive hypothesis. Then at most $k-1$ of the first coordinates in $E^{(1,t, K)}$ could have been selected previously, since the inductive hypothesis implies every selected coordinate $i_{t'}^{k'}$ has exactly $t'-1$ ones. Hence, the gradient oracle will choose a coordinate from $E^{(1,t, K)}$, the elements of which are coordinates that also enjoy a suffix of zeros, as assured by the event from the lemma.
    In addition, note that our initialization at $w_1^1 = 0$ and our assumption that $\eta \leq 1/\sqrt{2 n K}$ (and that $\epsilon$ is negligibly small) ensure the iterate never leaves the domain $W$ thus no projections occur.
    Summarizing, we have that for every $k \in [K]$, $t\leq n+1$, it holds that:
    \begin{align*}
        t < t' &\implies 
        w_{t'}^k(i_t^k) = -\eta
        \\
        \text{and }
        s < t 
        &\implies 
        z_s^k(i_t^k) = 1.
    \end{align*}
    To complete the proof, we will now prove a lower bound of $\Omega(\eta \sqrt{n/J})$ for the average iterate of the last $J$ epochs.
    The other terms in the bounds of the theorem statement follow from concatenating our problem instance dimension-wise with standard constructions --- see \cref{lem:std_convex_lb}.
    Proceeding, we slightly overload notation and denote $w(i)$ for $w[i, 0]$.
    Let $\hat w$ be the average of the iterates in the last $J \in [K]$ epochs;
    \begin{align*}
        \hat w \eqq \frac{1}{n J} \sum_{k=K-J+1}^K \sum_{t=1}^{n+1} w_t^k
        .
    \end{align*}
    For all $n/4 \leq t \leq 3n/4$ and $K-J+1 \leq k \leq K$, we have
    \begin{align}
        \av{\hat w (i_t^k)} \geq \frac{1}{n J}
            \sum_{t'=3n/4}^{n} \av{ w_{t'}^k (i_t^k) }
        \geq \frac{\eta}{n J} (n/4) = \frac{\eta}{4 J},
        \label{eq:melb_avg_iter}
    \end{align}
    since $w_{t'}^k(i_t^k) = -\eta$ for all $t' \geq t$.
    \paragraph{Single-shuffle case.}
    Ignoring the negligible $\epsilon$ terms, we now have
    \begin{align*}
        1 \leq s \leq n/4
        \implies 
        f(\hat w; z_s^1)
        &\geq \sqrt{\sum_{k=1}^K \sum_{t=s}^{n/2} 
            z_s^1(i_t^k) \hat w(i_t^k)^2}
        \\
        &\geq  \sqrt{\sum_{k=K-J+1}^K \sum_{t=n/4+1}^{n/2} 
            z_s^1(i_t^k) \hat w(i_t^k)^2}
        = \sqrt{(n J/4) \frac{\eta^2}{16 J^2}}
        = \sqrt{\frac{n}{64 J}} \eta,
    \end{align*}
    since $z_s^k(i_t^k) = z_s^1(i_t^k) = 1 \; \forall s < t, \;k\in[K]$. Therefore
    \begin{align*}
        F(\hat w) &= \frac{1}{n} \sum_{s=1}^n f(\hat w; z_s^1)
        \geq \frac{n/4}{n} \sqrt{\frac{n}{64 J}} \eta
        \geq  \sqrt{\frac{n}{J}} \frac{\eta}{32},
    \end{align*}
    concluding the proof for this case.
    \paragraph{Multi-shuffle case.} 
    For $\zeta_i \in Z$ denote 
    \begin{align*}
        I^k(\zeta_i) &\eqq \Ind{z_t = \zeta_i, t\in[n/4, 3n/4]},
        \\
        I(\zeta_i) &\eqq \sum_{k=K-J+1}^K I^k(\zeta_i),
        \\
        \tilde Z &\eqq \cb{ \zeta_i \in Z \mid I(\zeta_i) \geq J/4}.
    \end{align*}
    We wish to lower bound the size of $\tilde Z$, to show that enough $\zeta_i$'s where incident in the $[n/4, 3n/4]$ iteration range in a sufficiently large number of epochs. 
    (Our interest in this range stems from the desire to apply \cref{eq:melb_avg_iter}.)
    Observe;
    \begin{align*}
        \sum_{k=K-J+1}^K \sum_{i=1}^n I^k(\zeta_i) 
        &= J \frac{n}{2}
        \\
        \implies
        \sum_{i=1}^n I(\zeta_i) 
        &= \frac{n J}{2}.
    \end{align*}
    Since $I(\zeta_i) \leq J$ for all $i\in[n]$, by the pegionhole principle we get that $\av[b]{ \tilde Z } \geq n/4$. Otherwise, we would have
    \begin{align*}
        \sum_{i=1}^n I(\zeta_i) 
        \leq \av[b]{ \tilde Z } J + \b[b]{n - \av[b]{\tilde Z} }\frac{J}{4}
        < J\frac{n}{4} + \frac{J}{4}n
        = \frac{n J}{2}.
    \end{align*}
    Now,
    \begin{align*}
        F(\hat w) 
        \geq \frac{1}{n} \sum_{z \in \tilde Z} f(\hat w; z)
        &\geq \frac{1}{n} \sum_{z \in \tilde Z} 
            \sqrt{\sum_{k=K-J+1} I^k(z) \sum_{t=3n/4}^n \hat w(i_t^k)^2}
        \\
        &\geq \frac{1}{n} \sum_{z \in \tilde Z} 
            \sqrt{\frac{J}{4} \sum_{t=3n/4}^n \frac{\eta^2}{16 J^2}}
        \\
        &= \frac{1}{n} \sum_{z \in \tilde Z} 
            \sqrt{\frac{n \eta^2}{16^2 J}}
        \\
        &\geq \frac{ \eta \sqrt n}{64 \sqrt J},
    \end{align*}
    which concludes the multi-shuffle case and the proof as a whole.
    \end{proof}

    \begin{proof}[of \cref{lem:multiepoch_lb_main}]
        We will make our argument for an i.i.d.~sampled instance set $Z$, and convert it to the stated result as follows. Assume $\Z$ is a distribution over $\cb{0, 1}^d$ for which we establish the following;
        \begin{align}
            \Pr_{Z \sim \Z^n}(Z \in \mathcal E) \geq \frac{1}{2^{1/K}}.
            \label{eq:melb_1}
        \end{align}
        Clearly, applying a random permuation on $Z \sim \Z^n$ does not change its distribution, therefore
        \begin{align*}
            \frac{1}{2^{1/K}}
            \leq \mathrm{Pr}_{Z \sim \Z^n}(Z \in \mathcal E)
            &= \mathrm{Pr}_{Z \sim \Z^n, \pi \sim \Pi([n])}(\pi(Z) \in \mathcal E)
            \\
            &= \sum_{Z \in Z^n} 
                \Pr_{\Z^n}(Z)
                \mathrm{Pr}_{\pi \sim \Pi([n])}(\pi(Z) \in \mathcal E)
            \\
            &\leq \max_{Z \in Z^n} \cb{ \mathrm{Pr}_{\pi \sim \Pi([n])}(\pi(Z) \in \mathcal E) }.
        \end{align*}
        The above derivation implies the existence of $Z^\star \in Z^n$ with the property that
        \begin{align*}
            \mathrm{Pr}_{\pi_1, \ldots, \pi_K \sim \Pi([n])}
                (\forall k\leq K, \; \pi_k(Z^\star) \in \mathcal E) 
            &\geq \b{ \frac{1}{2^{1/K}} }^K = \frac{1}{2},
            \tag{multi-shuffle}
            \\
            \text{ and }
            \mathrm{Pr}_{\pi_1 \sim \Pi([n])}
                (\forall k\leq K, \; \pi_k(Z^\star) \in \mathcal E) 
            &\geq \frac{1}{2^{1/K}}\geq \frac{1}{2},
            \; \text{ where } \pi_1 = \ldots = \pi_K
            \tag{single-shuffle}
            .
        \end{align*}
        Therefore, for the rest of the proof we focus on proving the distribution $\Z$ as defined next satisfies the desired property \cref{eq:melb_1}.
        Let $\delta > 0$ which will be chosen in hindsight, and consider $\Z = \Z(\delta)$ where $z(i) \sim \mathrm{Ber}(\delta)$ for each $i\in [d]$ independently.
        Fix $t\in [n]$, and let $\mathcal E_t$ denote the event that 
        $|E^{(1,t)}| \leq K$, and $E^{(1, t; K)} \subseteq E^{(t, 0)}$.
        We will prove $\mathcal E_t$ holds with sufficiently high probability, so that $\mathcal E = \cap_{t\in[n]} \mathcal E_t$ holds w.p.~$\geq 1/2^{1/K}$.
        Proceeding, assuming we choose $\delta$ and $d$ so that $K < d \delta^n$, by Hoeffding's inequality we have that
        \begin{align}
            \Pr\b{ |E^{(1,t)}| \leq K} 
            &=
            \Pr\b{ \sum_{i=1}^d \Ind{i \in E^{(1,t)}} \leq K}
            \nonumber \\
            &=
            \Pr\b{ \sum_{i=1}^d \Ind{i \in E^{(1,t)}} 
                -d \delta^{t-1}
                \leq K -d \delta^{t-1}}
            \nonumber \\
            &= \Pr\b{ d \delta^{t-1} - \sum_{i=1}^d \Ind{i \in E^{(1,t)}} 
            \geq d \delta^{t-1} - K}
            \nonumber \\
            &\leq e^{-(d\delta^{t-1} - K)^2/ d}
            \leq e^{-(d\delta^n - K)^2/ d}
            .
            \label{eq:melb_lem_size}
        \end{align}
        In addition, for $i \in \cb{\io_t^1, \ldots, \io_t^K} = E^{(1, t; K)}$, we have 
        \begin{align*}
            \Pr(i \in E^{(t, 0)})
            = \Pr(\forall s \geq t, \; z_s(i) = 0) 
            \geq (1 - \delta)^n.
        \end{align*}
        Therefore, 
        \begin{align*}
            \Pr(E^{(1, t; K)} \subseteq E^{(t, 0)}) \geq \b{1 - \delta}^{nK}.
        \end{align*}
        From the above and \cref{eq:melb_lem_size} we obtain
        \begin{align*}
            \Pr(\text{not } \mathcal E_t) =
            \Pr \b{ E^{(1, t; K)} \not\subseteq E^{(t, 0)} \text{ or } |E^{(1,t)}| < K}
            \leq 
                e^{-(d\delta^n - K)^2/d} 
                + 1 - (1 - \delta)^{n K},
        \end{align*}
        hence,
        \begin{align}
            \Pr\b{ \cap_{t\in[n]} \mathcal E_t }
            \geq 1 - n (e^{-(d\delta^n - K)^2/d} + 1 - (1 - \delta)^{n K}).
            \label{eq:melb_Et_bound}
        \end{align}
        To finish the proof, we choose $\delta$ and $d$ as follows.
        Set $\delta \eqq 1/c n^2 K$, and note that 
        \begin{align*}
            (1-\delta)^{nK}
            =
            \b{ 1 - \frac{1}{c n^2K} }^{nK}
            \geq 1 - \frac{1}{c n}
            \implies 
            1 - (1 - \delta)^{n K}
            \leq \frac{1}{c n }.
        \end{align*}
        In addition, note that
        \begin{aligni*}
            \fraci{-(d\delta^n - K)^2}{d}
            \leq -d\delta^{2n} + 2K\delta^n,
        \end{aligni*}
        hence
        \begin{align*}
            e^{-(d\delta^n - K)^2/d} 
            &\leq \frac{1}{c n}
            \\
            \impliedby
            e^{-d\delta^{2n} + 2K\delta^n} 
            &\leq \frac{1}{c n}
            \\
            \iff \log(cn) + 2 K \delta^n 
            &\leq d\delta^{2n}
            \\
            \iff (\log(cn) + 2 K \delta^n) (c n^2 K)^{2n}
            &\leq d,
        \end{align*}
        which holds for any $ d \geq 2^{6n \log(c n K)} = (c n K)^{6n} \geq 2\log(c n) (c n^2 K)^{2n}$.
        Back to \cref{eq:melb_Et_bound} we obtain for $c=\frac{4}{2^{1/K}-1}$;
        \begin{align*}
            \Pr\b{ \cap_{t\in[n]} \mathcal E_t }
            \geq 1 - n\left(\frac{1}{c n} + \frac{1}{c n}\right)
            = 1 - \frac{2}{c} \geq \frac{1}{2^{1/K}},
        \end{align*}
        and we are done.
    \end{proof}
  
\subsection{Upper bound for single-shuffle multi-epoch SGD}
\label{sec:proof:mesgd_ub}
First, we slightly generalize the notion of uniform argument stability and prove some supporting lemmas.
We extend the definition of uniform-argument-stability \cref{eq:uas_A} to one that enables more than one difference in the training sets. We give the definition below in notation suitable for SGD and the lemmas that follow;
\begin{align}
    \epstabSGD (\tau; J) \eqq \max_{f_1, \ldots, f_\tau, f_1', \ldots, f_J'} 
    \max_{i_1, \ldots, i_J \in[\tau]}
    \normb{w_{\tau+1} - w_{\tau+1}'},
    \label{eq:sgd_uniform_stab_mult}
\end{align}
where $w_{\tau+1}'$ is the output of GD after swapping $f_1, \ldots, f_\tau$ in locations $i_1, \ldots, i_J$ with the other losses $f_1', \ldots, f_J'$. 
\cref{lem:stab_gen_single} given next generalizes \cref{lem:stab_gen} for the stability notion we have introduced above. The proof provided below is based on similar lemmas given in \cite{nagaraj2019sgd}.
\begin{lemma}\label{lem:stab_gen_single}
    Let $\cb{f(w; t)}_{t=1}^{n}$ be a set of $n$, $G$-Lipschitz losses, and $F(w) = \frac{1}{n} \sum_{t=1}^n f(w ; t)$.
    Then, for a uniformly random permutation $\pi\colon [n] \leftrightarrow [n]$, $f_t^k = f(\cdot; \pi(t)) \forall k$, it holds that for single-shuffle SGD;
    \begin{align*}
        \E \sb{ F(w_t^k) - f_t^k(w_t^k) }
        \leq G \epstabSGD(n(k-1) + t-1; 2k) ,
    \end{align*}
    where $w_t^k$ the $t$'th SGD iterate of the $k$'th epoch.
\end{lemma}
\begin{proof}
    Fix $t, i\in [n]$, and let $\pi(f_t \gets f(\cdot; i))$ denote the distribution obtained from a random permutation followed by replacing $f_t$ with $f(\cdot; i)$. In addition, denote by $\pi \mid f_t = f(\cdot; i)$ a uniformly distributed permutation conditioned on $f_t = f(\cdot; i)$. It is easily verified both distributions coincide. Now, by the law of total expectation;
    \begin{align*}
        \E_{f_1 \ldots f_n } \sb{ f_t(w_t^k) }
        &= \frac{1}{n} \sum_{i=1}^n 
            \E_{f_1 \ldots f_n} \sb{ f(w_t^k; i) 
                \mid f_t = f(\cdot; i)}
        \\
        &= \frac{1}{n} \sum_{i=1}^n 
            \E_{f_1 \ldots f_n \sim \pi \mid f_t = f(\cdot; i)} 
                \sb{ f(w_t^k; i)}
        \\
        &= \frac{1}{n} \sum_{i=1}^n 
            \E_{f_1 \ldots f_n \sim \pi(f_t \gets f(\cdot; i))} 
                \sb{ f(w_t^k; i)}
        \\ &= \frac{1}{n} \sum_{i=1}^n 
            \E_{f_1 \ldots f_n} \sb{ f(w_t^{k,(i)}; i) },
    \end{align*}
    where $w_t^{k, (i)}$ denotes the SGD iterate obtained for the datapoint sequence after replacing $f_t^j$ with $f(\cdot ; i)$ in all epochs $j\leq k$. Note this means each epoch differs from its original version in either $0$ or $2$ indexes.
    Now
    \begin{align*}
        \E_{f_1 \ldots f_n} 
            \sb{ F(w_t^k) - f_t^k(w_t^k) }
        &= \frac{1}{n}\sum_{i=1}^n 
            \E_{f_1 \ldots f_n} 
            \sb{ f(w_t^k; i) - f(w_t^{k, (i)}; i) }
        \\
        &\leq \max_{i\in [n]} G \norm{w_t^k - w_t^{k, (i)}}
        \\
        &\leq \epstabSGD(n(k-1) + t - 1; 2k)
        ,
    \end{align*}
    which completes the proof.
\end{proof}
We will also make use of a generalization of \cref{lem:sgd_stab} given in \cite[Lemma 3.1]{bassily2020stability}. The next lemma is a direct implication of it.
\begin{lemma}\label{lem:sgd_stab_mult}
    The generalized uniform-argument-stability (see \cref{eq:sgd_uniform_stab_mult}) rate of SGD with step-size $\eta > 0$ for $G$-Lipschitz convex functions satisfies
    \begin{align*}
        \epstabSGD(\tau; J) \leq 2 G\eta\sqrt {\tau} + 4 \eta G J.
    \end{align*}
\end{lemma}

We are now ready to prove the single-shuffle convergence upper bound.
  \begin{proof}[of \cref{thm:mesgd_ub} (single-shuffle case)]
    Similarly to the multi-shuffle case, we have;
    \begin{align*}
        F(\hat w) - F(w^\star)
        &\leq \frac{1}{n K}\sum_{k=1}^K \sum_{t=1}^n F(w_t^k) - F(w^\star)
        \\
        &= \frac{1}{n K}\sum_{k=1}^K \sum_{t=1}^n F(w_t^k) - f_t^k(w^\star)
        \\
        &= \frac{1}{n K}\sum_{k=1}^K \sum_{t=1}^n F(w_t^k) - f_t^k(w_t^k)
        + \frac{1}{n K}\sum_{k=1}^K \sum_{t=1}^n f_t^k(w_t^k) - f_t^k(w^\star)
        \\
        &\leq
        \frac{1}{n K}\sum_{k=1}^K \sum_{t=1}^n F(w_t^k) - f_t^k(w_t^k)
        + \frac{D^2}{ 2 \eta n K} + \frac{\eta G^2}{2},
    \end{align*}
    with the last inequality following from the standard $n K$ round regret bound for gradient descent \citep[see e.g.,][]{hazan2019introduction}. 
    To bound the other term, we now apply \cref{lem:stab_gen_single} and \cref{lem:sgd_stab_mult} to obtain;
    \begin{align*}
        \E \sb{ F(w_t^k) - f_t^k(w_t^k)) }
        &\leq G \epstabSGD(nk + t; 2k)
        \\
        &\leq 2\eta G^2 (\sqrt{n(k-1) + t} + 4 k )
        \\
        &\leq 2\eta G^2 (\sqrt{n K} + 4 K )
        .
    \end{align*}
    Now,
    \begin{align*}
        \E \sb{ F(\hat w) - F(w^\star) }
        &\leq
        \frac{1}{n K}\sum_{k=1}^K \sum_{t=1}^n\E \sb{  F(w_t^k) - f_t^k(w_t^k) }
        + \frac{D^2}{ 2 \eta n K} + \frac{\eta G^2}{2}
        \\
        &\leq
        \frac{1}{n K}\sum_{k=1}^K \sum_{t=1}^n ( 2\eta G^2 (\sqrt{n K} + 4 K ))
        + \frac{D^2}{ 2 \eta n K} + \frac{\eta G^2}{2}
        \\
        &\leq
        8 \eta G^2 (\sqrt {n K} + K)
        + \frac{D^2}{ 2 \eta n K} + \frac{\eta G^2}{2}
        \\
        &\leq \frac{6 G D}{n^{1/4} K^{1/4}} + \frac{4 K^{1/4}}{n^{3/4}}
        ,
    \end{align*}
    where the last inequality follows from a choice of $\eta = D/ (2 G n^{3/4}K^{3/4} )$.
    When $n \geq K$, the above implies
    \begin{align*}
        \E \sb{ F(\hat w) - F(w^\star) }
        \leq \frac{10 G D}{n^{1/4} K^{1/4}},
    \end{align*}
    and concludes the proof.
  \end{proof}

\section{Stability Lemmas}
In this section, we provide statements and proofs for several known results relating to stability properties of SGD. For convenience, we repeat the definition of UAS \cref{eq:uas_A} with notation suitable for SGD;
\begin{align}
    \epstabSGD (t) \eqq \max_{f_1, \ldots, f_t, f'} \max_{i\in[t]}
    \normb{w_{t+1} - w_{t+1}^{(i)}},
    \label{eq:sgd_uniform_stab}
\end{align}
where $f_1, \ldots, f_t, f'$ are any sequence of convex Lipschitz losses, $w_{t+1}$ the iterate produced by gradient descent from $w_1\in W$ on $f_1, \ldots, f_t$, and $w_{t+1}^{(i)}$ the iterate produced from $w_1$ on the same sequence after replacing $f_i$ with $f'$.

The next lemma relates the difference between the without-replacement loss distribution and the full batch objective to the uniform stability rate \cref{eq:sgd_uniform_stab} of the optimization algorithm in question. For a proof see \cite{sherman2021optimal} (where it was originally stated for average stability, which is a weaker notion and thus implies the uniform stability case as well).
\begin{lemma}
\label{lem:stab_gen}
    Let $\cb{f(w; t)}_{t=1}^{n}$ be a set of $n$, $G$-Lipschitz losses, and $F(w) = \frac{1}{n} \sum_{t=1}^n f(w ; t)$.
    Then, for a uniformly random permutation $\pi\colon [n] \leftrightarrow [n]$, and $w_1$ independent of $\pi$, it holds that
    \begin{align*}
        \E_{\pi} \sb{ F(w_t) - f(w_t; \pi(t)) }
        \leq \frac{(t-1)G}{n} \epstabSGD(t-1) ,
    \end{align*}
    where $\epstabSGD$ is the stability rate of SGD defined in \cref{eq:sgd_uniform_stab}, and $w_t$ the output of SGD on $\cb{f(w; s)}_{s=1}^{t-1}$.
\end{lemma}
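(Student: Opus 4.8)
The plan is to establish the lemma via the classical ``algorithmic stability implies generalization'' template (cf.\ \citealp{bousquet2002stability}), adapted to random reshuffling and exploiting that $w_t$ is a function of only the first $t-1$ processed losses $f(\cdot;\pi(1)),\ldots,f(\cdot;\pi(t-1))$. I would begin by conditioning on the ordered tuple $(\pi(1),\ldots,\pi(t-1))$ --- which determines $w_t$ --- and writing $A\eqq\cb{\pi(1),\ldots,\pi(t-1)}$ for the set of seen indices. Conditionally, $\pi(t)$ is uniform over $[n]\setminus A$, so $\E\sb{f(w_t;\pi(t))\mid\pi(1),\ldots,\pi(t-1)}=\frac{1}{n-t+1}\sum_{j\notin A}f(w_t;j)$. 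Subtracting this from $F(w_t)=\frac1n\sum_{j=1}^n f(w_t;j)$, using $|A|=t-1$ and $\sum_{j\in A}f(w_t;j)=\sum_{i=1}^{t-1}f(w_t;\pi(i))$, and taking the outer expectation over $\pi$, a short manipulation gives the identity
\begin{equation*}
    \E_\pi\sb{F(w_t)-f(w_t;\pi(t))}=\frac1n\sum_{i=1}^{t-1}\left(\E_\pi\sb{f(w_t;\pi(i))}-\E_\pi\sb{f(w_t;\pi(t))}\right).
\end{equation*}
Equivalently, one decomposes $F(w_t)-f(w_t;\pi(t))$ into a ``seen'' sum over $A$ and an ``unseen'' sum over $[n]\setminus(A\cup\cb{\pi(t)})$; the unseen sum has conditional expectation zero since, given $A$, the index $\pi(t)$ is exchangeable with the remaining indices of $[n]\setminus A$.

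The core step is to bound each summand by $G\,\epstab(t-1)$ through a swap argument. Fix $i\le t-1$ and let $\pi'\eqq\pi\circ\sigma$, where $\sigma$ transposes positions $i$ and $t$. Since $\pi$ is uniform, so is $\pi'$, hence $\E_\pi\sb{f(w_t;\pi(t))}$ is unchanged upon substituting $\pi'$ for $\pi$ inside the expectation. After this substitution $\pi'(t)=\pi(i)$, while $(\pi'(1),\ldots,\pi'(t-1))$ is exactly $(\pi(1),\ldots,\pi(t-1))$ with the $i$-th entry overwritten by $\pi(t)$; therefore $\E_\pi\sb{f(w_t;\pi(t))}=\E_\pi\sb{f(\widetilde w_t;\pi(i))}$, where $\widetilde w_t$ denotes the SGD iterate obtained from the sequence $f(\cdot;\pi(1)),\ldots,f(\cdot;\pi(t-1))$ after replacing $f(\cdot;\pi(i))$ by $f(\cdot;\pi(t))$. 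Now $w_t$ and $\widetilde w_t$ are the outputs of two $(t-1)$-step SGD runs on convex $G$-Lipschitz losses differing in exactly one position, so the definition of uniform argument stability in \cref{eq:sgd_uniform_stab} gives $\norm{w_t-\widetilde w_t}\le\epstab(t-1)$, and $G$-Lipschitzness of $f(\cdot;z)$ yields $|f(w_t;\pi(i))-f(\widetilde w_t;\pi(i))|\le G\,\epstab(t-1)$. Consequently $\E_\pi\sb{f(w_t;\pi(i))}-\E_\pi\sb{f(w_t;\pi(t))}=\E_\pi\sb{f(w_t;\pi(i))-f(\widetilde w_t;\pi(i))}\le G\,\epstab(t-1)$.

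Plugging this into the identity and summing over $i=1,\ldots,t-1$ yields $\E_\pi\sb{F(w_t)-f(w_t;\pi(t))}\le\frac{(t-1)G}{n}\epstab(t-1)$, as desired. The step I expect to demand the most care is the swap argument: one must verify that $\pi\mapsto\pi\circ\sigma$ is a measure-preserving involution on the symmetric group, that the substitution alters exactly one of the $t-1$ losses fed to the SGD run (so that \cref{eq:sgd_uniform_stab}, stated for a single replacement in a length-$(t-1)$ sequence, applies verbatim), and that the evaluation index $\pi(t)$ is tracked correctly through the relabeling. The exchangeability claim used to discard the unseen contribution likewise warrants an explicit line of justification.
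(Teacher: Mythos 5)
Your proof is correct. Note that the paper does not actually include a proof of \cref{lem:stab_gen} --- it defers to the cited reference --- so there is nothing in-paper to compare against; your argument is the standard one for this type of statement. The conditioning identity (using that $w_t$ depends only on $\pi(1),\ldots,\pi(t-1)$ and that $\pi(t)$ is exchangeable with the unseen indices), the transposition $\pi\mapsto\pi\circ\sigma$ being measure-preserving, and the reduction to a single-replacement perturbation covered by the definition of $\epstab(t-1)$ are all handled correctly.
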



Following are two lemmas providing uniform stability upper bounds for SGD.
\begin{lemma}
    \label{lem:sgd_stab}
    The uniform argument stability of SGD with step size $\eta > 0$ on convex $G$-Lipschitz losses is bounded as;
    \begin{align*}
        \epstabSGD (t) \leq 2 G \eta \sqrt t.
    \end{align*}
\end{lemma}
For proof of the above lemma, see \cite{bassily2020stability}.

Next, we have standard lemmas providing stability of ERM and regularized ERM in respectively strongly convex and general convex problems.
\begin{lemma}[Strongly Convex ERM Stability]\label{lem:sc_erm_stability}
    Let $f\colon W \times Z \to \R$ be $\lambda$-strongly convex and $G$-Lipschitz for all $z\in Z$.
    Then
    \begin{align*}
        \av{ \E_{S \sim \Z^n} \sb{ F(\wERM) - \hat F(\wERM)} }
        \leq \frac{G^2}{\lambda n}
    \end{align*}
\end{lemma}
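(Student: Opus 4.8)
The plan is to prove the bound through the standard replace-one uniform-stability argument, exploiting the fact that when every component $f(\cdot;z)$ is $\lambda$-strongly convex the empirical objective $\hat F_S$ is itself $\lambda$-strongly convex, so its minimizer $\wERM$ is highly insensitive to perturbing a single training example --- the same mechanism behind the regularized-ERM stability invoked in \cref{lem:star_aerm}, only cleaner here since the strong convexity is built into the loss rather than added by hand.

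First I would set up a one-point perturbation: fix $i\in[n]$, write $S=\cb{z_1,\dots,z_n}$, draw a fresh independent $z_i'$, let $S^{(i)}$ be $S$ with $z_i$ replaced by $z_i'$, and denote by $\hat w$ and $\hat w^{(i)}$ the ERM over $S$ and over $S^{(i)}$ respectively. The first-order optimality condition over the convex set $W$ (applied to a subgradient of $\hat F_S$ at $\hat w$) together with $\lambda$-strong convexity gives $\hat F_S(\hat w^{(i)}) - \hat F_S(\hat w) \ge \frac{\lambda}{2}\norm{\hat w^{(i)} - \hat w}^2$, and symmetrically $\hat F_{S^{(i)}}(\hat w) - \hat F_{S^{(i)}}(\hat w^{(i)}) \ge \frac{\lambda}{2}\norm{\hat w - \hat w^{(i)}}^2$. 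Adding the two inequalities, the left-hand side collapses to $(\hat F_S - \hat F_{S^{(i)}})(\hat w^{(i)}) - (\hat F_S - \hat F_{S^{(i)}})(\hat w)$, and since $\hat F_S - \hat F_{S^{(i)}} = \frac1n\b{f(\cdot;z_i) - f(\cdot;z_i')}$ this is at most $\frac{2G}{n}\norm{\hat w - \hat w^{(i)}}$ by $G$-Lipschitzness. Rearranging yields the argument-stability bound $\norm{\hat w - \hat w^{(i)}} \le \frac{2G}{\lambda n}$, and one more application of Lipschitzness gives $\av{f(\hat w^{(i)};z_i) - f(\hat w;z_i)} \le \frac{2G^2}{\lambda n}$.

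Next I would convert this into the expected generalization gap through the usual symmetrization identity. Since $\hat w^{(i)}$ is independent of $z_i$ and $S^{(i)}\sim\Z^n$, we have $\E\sb{f(\hat w^{(i)};z_i)} = \E\sb{F(\wERM)}$, while $\E\sb{f(\wERM;z_i)} = \E\sb{\hat F(\wERM)}$ because $\wERM$ is a symmetric function of its inputs; averaging over $i$ then yields $\E\sb{F(\wERM) - \hat F(\wERM)} = \frac1n\sum_{i=1}^n \E\sb{f(\hat w^{(i)};z_i) - f(\wERM;z_i)}$, and plugging in the per-coordinate bound completes the estimate. The absolute value in the statement is harmless, as $\E\sb{F(\wERM) - \hat F(\wERM)}\ge 0$ always: $\E\sb{\hat F_S(\wERM)}\le \E\sb{\hat F_S(w^\star)} = F(w^\star)\le \E\sb{F(\wERM)}$.

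The main obstacle is the argument-stability step, and more delicately the sharp constant: the clean replace-one computation above already gives the right rate but with constant $2G^2/(\lambda n)$, and shaving it to the stated $G^2/(\lambda n)$ requires instead comparing $\wERM$ against the leave-one-out ERM on the $n-1$ retained datapoints (still $\lambda$-strongly convex), where the two first-order optimality inequalities combine with weights $n$ and $n-1$ to tighten the stability estimate; all remaining steps are routine bookkeeping with Lipschitzness and the symmetrization identity.
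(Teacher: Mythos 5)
Your proof takes essentially the same route as the paper's: the replace-one symmetrization identity $\E\sb{F(\wERM) - \hF(\wERM)} = \frac{1}{n}\sum_{i=1}^n \E\sb{f(\hat w^{(i)}; z_i) - f(\wERM; z_i)}$ combined with argument stability of the minimizer of the $\lambda$-strongly convex empirical objective, and your explicit derivation of $\norm{\hat w - \hat w^{(i)}} \leq 2G/(\lambda n)$ is correct. Do not worry about the final factor of $2$: the paper's own proof only establishes $4G^2/(\lambda n)$ (which is also the constant actually used downstream, e.g.\ in \cref{lem:star_aerm}), so the $G^2/(\lambda n)$ in the lemma statement is best read as a typo, and your $2G^2/(\lambda n)$ is in fact tighter than what the paper proves; the leave-one-out refinement you sketch is unnecessary.
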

\begin{proof}
    Let $\hat w_S \eqq \wERM$ denote the empirical risk minimizer, and 
    $\hat w_{S^{i}}$ the ERM for the training set with the $i$'th index swapped with a fresh sample $z_i'$.
    We have 
    \begin{align*}
        \av{ \E_{S \sim \Z^n} \sb{ F(\hat w_S) - \hF(\hat w_S)} }
        &= \av{ \frac{1}{n} \sum_{i=1}^n \E\sb{
            f(\hat w_S; z_i') - f(\hat w_{S^i}; z_i')
        }}
        \\
        &\leq \frac{G}{n} \sum_{i=1}^n \E
            \norm{\hat w_S - \hat w_{S^i}}
        \leq \frac{4 G^2}{\lambda n},
    \end{align*}
    where the first inequality is the generalization equals average stability (see e.g., \cite{shalev2010learnability}), and the last inequality follows since $\hat w_{S}$ and $\hat w_{S^i}$ minimize $(1/\lambda n)$-objectives that differ in a $2G$-Lipschitz term.
\end{proof}

\begin{lemma}[Regularized ERM Stability]\label{lem:rerm_stability}
    Let $f\colon W \times Z \to \R$ be $G$-Lipschitz for all $z\in Z$, and denote the regularized empirical risk minimizer by 
    \begin{aligni*}
        \hat w_S^\lambda \eqq \argmin_{w\in W}\cb{\hF(w) + \frac{\lambda}{2}\norm{w}^2}.
    \end{aligni*}
    Then
    \begin{align*}
        \av{ 
            \E_{S \sim \Z^n} \sb{ 
                F(\hat w_S^\lambda) - \hF(\hat w_S^\lambda)} 
        } \leq \frac{G^2}{\lambda n}
    \end{align*}
\end{lemma}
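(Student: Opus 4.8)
The plan is to reduce the generalization gap to an appropriate stability quantity of the regularized ERM map $S \mapsto \hat w_S^\lambda$, and then control that quantity using strong convexity of the regularized objective --- exactly the route taken in the proof of \cref{lem:sc_erm_stability}.

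First I would invoke the standard identity relating the expected generalization gap to expected replace-one stability (see \cite{shalev2010learnability}): writing $S^{(i)}$ for the sample $S$ with its $i$-th datapoint replaced by a fresh i.i.d.\ copy $z_i'$,
\begin{align*}
  \av[b]{ \E_{S} \sb[b]{ F(\hat w_S^\lambda) - \hF(\hat w_S^\lambda)} }
  &= \av[b]{ \frac1n \sum_{i=1}^n \E\sb[b]{ f(\hat w_S^\lambda; z_i') - f(\hat w_{S^{(i)}}^\lambda; z_i') } }
  \\
  &\leq \frac{G}{n} \sum_{i=1}^n \E\,\normb{ \hat w_S^\lambda - \hat w_{S^{(i)}}^\lambda },
\end{align*}
the inequality using $G$-Lipschitzness of $f(\cdot;z)$. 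It thus suffices to bound, for each index $i$, the sensitivity $\E\normb{\hat w_S^\lambda - \hat w_{S^{(i)}}^\lambda}$ by $O(G/(\lambda n))$.

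Second I would establish this sensitivity bound via strong convexity. Since each $f(\cdot;z)$ is convex, $\hat R_S^\lambda(w) \eqq \hF(w) + \frac{\lambda}{2}\norm{w}^2$ is $\lambda$-strongly convex, and $\hat R_{S^{(i)}}^\lambda = \hat R_S^\lambda + g$ with $g(w) \eqq \frac1n\b{f(w;z_i') - f(w;z_i)}$ being $\frac{2G}{n}$-Lipschitz. Adding the strong-convexity lower bounds at the two minimizers --- namely $\hat R_S^\lambda(\hat w_{S^{(i)}}^\lambda) - \hat R_S^\lambda(\hat w_S^\lambda) \geq \frac{\lambda}{2}\normb{\hat w_S^\lambda - \hat w_{S^{(i)}}^\lambda}^2$ and its counterpart for $\hat R_{S^{(i)}}^\lambda$ --- the $\hat R_S^\lambda$ terms telescope, leaving $g(\hat w_S^\lambda) - g(\hat w_{S^{(i)}}^\lambda) \geq \lambda \normb{\hat w_S^\lambda - \hat w_{S^{(i)}}^\lambda}^2$; upper bounding the left side by $\frac{2G}{n}\normb{\hat w_S^\lambda - \hat w_{S^{(i)}}^\lambda}$ yields $\normb{\hat w_S^\lambda - \hat w_{S^{(i)}}^\lambda} \leq \frac{2G}{\lambda n}$. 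Plugging this back gives a generalization gap of at most $\frac{2G^2}{\lambda n}$; the residual constant can be reduced to match the stated bound by a slightly more careful accounting (e.g.\ exploiting the $z_i \leftrightarrow z_i'$ symmetry between the two evaluation terms).

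I do not anticipate any genuine obstacle --- this is a textbook result and parallels \cref{lem:sc_erm_stability}. The only points requiring a bit of care are: getting the relabeling in the replace-one identity exactly right (so the ``fresh'' coordinate and the evaluation point are matched up correctly); noting that convexity of $f(\cdot;z)$ in $w$ is implicitly needed for $\hat R_S^\lambda$ to be strongly convex (compactness of $W$ guarantees the minimizers exist regardless); and tracking constants if one insists on the precise factor.
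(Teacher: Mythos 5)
Your proposal is correct and is essentially the paper's argument: the paper simply observes that the regularizer cancels in the gap $F(\hat w_S^\lambda)-\hF(\hat w_S^\lambda)$ and invokes its strongly convex ERM stability lemma (\cref{lem:sc_erm_stability}), whose proof is exactly your two steps (replace-one identity followed by the strong-convexity sensitivity bound $\normb{\hat w_S^\lambda - \hat w_{S^{(i)}}^\lambda} \le 2G/(\lambda n)$). On the constant: the paper's own proofs also arrive at $4G^2/(\lambda n)$ rather than the stated $G^2/(\lambda n)$ (and downstream uses such as \cref{lem:star_aerm} indeed invoke the $4G^2/(\lambda n)$ version), so your $2G^2/(\lambda n)$ is not a defect of your argument.
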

\begin{proof}
    Let $F^\lambda(w) \eqq F(w) + \frac{\lambda}{2}\norm{w}^2$ and define the regularized empirical loss $\hat F^\lambda$ accordingly. Then we have a $\lambda$-strongly convex problem and by  \cref{lem:sc_erm_stability},
    \begin{align*}
        \av{ \E_{S \sim \Z^n} \sb{ 
            F (\hat w_S^\lambda) - \hF (\hat w_S^\lambda)} 
        } = \av{ 
            \E_{S \sim \Z^n} \sb{ F^\lambda (\hat w_S^\lambda) - \hat F^\lambda (\hat w_S^\lambda)}
        } \leq \frac{4 G^2}{\lambda n}.
    \end{align*}
\end{proof}

\section{Auxiliary Lemmas}
The following provides standard step size dependent lower bounds for convex optimization. See also \cite{amir21sgd} where similar claims are made in their Lemma 6.2 and implicit in the proof of their Theorem 6.1.
\begin{lemma}\label{lem:std_convex_lb}
    For any step-size $\eta > 0$, $T \in \N$ and $d \eqq \lceil 16\eta^2T^2 \rceil$, there exists a convex optimization problem $h \colon W \to \R$ where $W\subseteq \R^{d+1}$ is of constant diameter such that
    \begin{align*}
        h(\hat w) - \min_{w\in W}h(w) \geq \frac{1}{8}\min\cb{ \frac{1}{\eta T} + \eta , 1},
    \end{align*}
    and $\hat w$ is any suffix average of $T$ gradient descent step iterates.
\end{lemma}
\begin{proof}
    We shall concatenate two objectives; the first is single dimensional and will contribute the $\eta$ term, the second is $d$ dimensional and will contribute the $1/\eta T$ term.
    \paragraph{First objective.}
    Set $f_1(w) \eqq \av{w - \eta/4}$. Since we initialize at $0$, the iterates will ``zig-zag'' between $0$ and $-\eta$. Clearly, any average of iterates is at best $\eta/4$ away from zero loss.
    \paragraph{Second objective.}
    Set 
    \begin{aligni*}
        f(w) \eqq \max_{i\in [d]} \cb{w(i)},
    \end{aligni*}
    and note $w^\star = -\frac{1}{\sqrt{d}} \boldsymbol{1}$ where $\boldsymbol{1}$ denotes the all ones vector $\in \R^d$.
    We initialize SGD at $w_1 = 0 \in \R^d$, and follow the gradient steps $\nabla f(w_t) = e_i$ where $i\in[d]$ is one of the coordinates that satisfy $w_t(i) \geq w_t(j) \; \forall j\in[d]$. Hence, for any $t\in[T]$,
    \begin{align*}
        \norm{w_{t+1}}_1 \leq 
        \norm{w_t}_1 + \eta \norm{\nabla f(w_t)}_1
        = \norm{w_t}_1 + \eta
        \leq \cdots \leq \eta t \leq \eta T.
    \end{align*}
    By the pigeonhole principle, this implies there must exist some coordinate $i$ such that $w_{T+1}(i) \geq -\eta T/d$.
    In addition, for any $i$, $\wbar w_{\tau:T}(i) \geq w_{T+1}(i)$.
    Therefore, assuming $8\eta^2T^2 \geq 1$ we conclude;
    \begin{align*}
        f(\wbar w_{\tau:T}) - f(w^\star) 
        \geq -\frac{\eta T}{d} + \frac{1}{\sqrt d}
        \geq -\frac{\eta T}{8\eta^2T^2} + \frac{1}{4 \eta T}
        = \frac{1}{8 \eta T}.
    \end{align*}
    In the case where $8\eta^2T^2 < 1$, 
    \begin{align*}
        f(\wbar w_{\tau:T}) - f(w^\star) 
        \geq -\eta T + \frac{1}{2}
        \geq \frac{1}{2} -\frac{1}{2\sqrt 2}
        \geq \frac{1}{4},
    \end{align*}
    and the result follows.
\end{proof}

\begin{lemma}\label{lem:loss_lb_util}
    Let $\cb{x_1, \ldots, x_n}$ be a set of real numbers. Then
    \begin{align*}
        \frac{1}{n}\sum_{s=1}^n \sqrt{\sum_{t=s+1}^n x_t^2} 
        \geq \frac{1}{5 \sqrt n} \sum_{t=n/4}^n \av{x_t}.
    \end{align*}
\end{lemma}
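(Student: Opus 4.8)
The plan is to prove this by a single termwise Cauchy--Schwarz estimate followed by an interchange of the order of summation; no delicate argument is needed.

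First, I would fix $s$ and apply Cauchy--Schwarz to the vector $(x_{s+1},\dots,x_n)$, which has at most $n$ entries, to obtain $\sum_{t=s+1}^n|x_t|\le\sqrt n\,\sqrt{\sum_{t=s+1}^n x_t^2}$, i.e.
\[
    \sqrt{\sum_{t=s+1}^n x_t^2}\;\ge\;\frac{1}{\sqrt n}\sum_{t=s+1}^n|x_t|.
\]
Summing over $s=1,\dots,n$ and swapping the two summations --- the pair $(s,t)$ ranges over $1\le s<t\le n$, and each fixed $t$ is counted for exactly $t-1$ values of $s$ --- gives
\[
    \sum_{s=1}^n\sqrt{\sum_{t=s+1}^n x_t^2}
    \;\ge\;\frac{1}{\sqrt n}\sum_{s=1}^n\sum_{t=s+1}^n|x_t|
    \;=\;\frac{1}{\sqrt n}\sum_{t=1}^n(t-1)\,|x_t|.
\]
Finally I would discard the indices $t<n/4$ and use $t-1\ge n/4-1$ on the rest, which yields $\sum_{s=1}^n\sqrt{\sum_{t=s+1}^n x_t^2}\ge\frac{n/4-1}{\sqrt n}\sum_{t=n/4}^n|x_t|\ge\frac{1}{5\sqrt n}\sum_{t=n/4}^n|x_t|$ (the last bound being very lossy; the same computation in fact delivers the stronger factor $\sqrt n/5$ that is invoked in the proofs of \cref{thm:lb_main,thm:melb_main}), with the rounding implicit in ``$n/4$'' affecting only absolute constants.

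There is really no obstacle to speak of: the proof is one application of Cauchy--Schwarz together with a Fubini-type rearrangement. The one place that deserves a moment's thought is the last step --- the weight $t-1$ produced by the rearrangement is $\Omega(n)$ only because the target sum is restricted to the terminal window $[n/4,n]$; over the full range $[1,n]$ the small-$t$ coordinates would contribute essentially nothing, so the window in the statement (matched to the interval $[n/4,n/2]$ on which the iterate coordinates are driven to $-\eta$ in the applications) is exactly what makes the bound usable.
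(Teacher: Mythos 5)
Your proof is correct and follows essentially the same route as the paper's: a termwise Cauchy--Schwarz (which the paper invokes as Jensen) lower-bounding each $\ell_2$ norm by a normalized $\ell_1$ sum, followed by interchanging the order of summation and restricting to $t\geq n/4$; the only difference is that you use the uniform normalization $1/\sqrt{n}$ where the paper uses $1/\sqrt{n-s}$ together with an integral comparison, a purely cosmetic distinction (up to the usual $n/4$ rounding conventions, under which both arguments' constants are valid for, say, $n\geq 8$). Your parenthetical observation is also accurate: the paper's first step inserts an unnecessary extra factor of $1/n$, and the places where the lemma is applied (e.g.\ bounding $\hF(\wbar w_{\tau:n})$ in \cref{thm:lb_main}) in fact need the stronger $\frac{\sqrt{n}}{5}\sum_{t= n/4}^{n}|x_t|$ bound that both your computation and the paper's (without that lossy $1/n$) deliver.
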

\begin{proof}
    \begin{align*}
        \frac{1}{n}\sum_{s=1}^n \sqrt{\sum_{t=s+1}^n x_t^2} 
        &\geq \frac{1}{n} \sum_{s=1}^n 
            \frac{1}{\sqrt{n- s}} \sum_{t=s+1}^n \av{x_t}
        \tag{by Jensen's inequality}
        \\
        &= \frac{1}{n} \sum_{t=1}^n \av{x_t}
            \sum_{s<t} \frac{1}{\sqrt{n- s}} 
        \\
        &= \frac{1}{n} \sum_{t=1}^n \av{x_t}
            \sum_{j=n-t+1}^{n-1} \frac{1}{\sqrt{j}} 
        \\
        &\geq \frac{1}{n} \sum_{t=1}^n \av{x_t}
            \int_{n-t+1}^n \frac{1}{\sqrt x} \mathrm d x
        \\
        &= \frac{2}{n} \sum_{t=1}^n \av{x_t} \b{\sqrt{n} - \sqrt{n-t+1}}
        \\
        &\geq \frac{2}{n} \sum_{t=n/4}^n \av{x_t}\b{\sqrt{n} - \sqrt{3n/4}}
        \\
        &\geq \frac{2}{\sqrt n} \sum_{t=n/4}^n \av{x_t}\b{1 - \sqrt{3/4}}
        \\
        &\geq \frac{1}{5 \sqrt n} \sum_{t=n/4}^n \av{x_t}.
    \end{align*}
\end{proof}
\end{document}